\documentclass[lettersize,journal]{IEEEtran}
\usepackage[utf8]{inputenc} 
\usepackage{amssymb,amsthm,amsmath}
\usepackage{booktabs}
\usepackage{bm,bbm}
\usepackage{cite}
\usepackage{color}
\usepackage{graphicx}
\usepackage{hyperref}
\usepackage{latexsym}
\usepackage{multirow}
\usepackage{newtxmath}
\usepackage{overpic}
\usepackage{url}
\newtheorem{theorem}{Theorem}
\newtheorem{corollary}{Corollary}

\newtheorem{proposition}{Proposition}

\definecolor{midori}{rgb}{0.0, 0.5, 0.0}

\newcommand{\zo}{{\mathrm{zo}}}
\newcommand{\tsk}{{\text{tsk}}}
\newcommand{\sur}{{\text{sur}}}
\newcommand{\KL}{{\mathrm{KL}}}
\newcommand{\SKL}{{\mathrm{SKL}}}
\newcommand{\SQ}{{\mathrm{SQ}}}
\newcommand{\LM}{{\mathrm{L}}}
\newcommand{\RLM}{{\mathrm{RL}}}
\newcommand{\LR}{{\mathrm{LR}}}
\newcommand{\LS}{{\mathrm{LS}}}
\newcommand{\MLS}{{\mathrm{MLS}}}
\newcommand{\LSQ}{{\mathrm{LSQ}}}
\newcommand{\BY}{{\mathrm{BY}}}
\newcommand{\CH}{{\mathrm{CH}}}
\DeclareMathOperator{\argmax}{{arg\,max}}
\DeclareMathOperator{\argmin}{{arg\,min}}
\DeclareMathOperator{\diag}{{diag}}
\DeclareMathOperator{\trace}{{tr}}

\newcommand{\bbeta}{{\bm{\beta}}}
\newcommand{\bg}{{\bm{g}}}
\newcommand{\bt}{{\bm{t}}}
\newcommand{\bT}{{\bm{T}}}
\newcommand{\bs}{{\bm{s}}}
\newcommand{\bp}{{\bm{p}}}
\newcommand{\bq}{{\bm{q}}}

\newcommand{\bv}{{\bm{v}}}
\newcommand{\bw}{{\bm{w}}}
\newcommand{\bx}{{\bm{x}}}
\newcommand{\bX}{{\bm{X}}}
\newcommand{\bz}{{\bm{z}}}
\newcommand{\bZ}{{\bm{Z}}}
\newcommand{\bo}{{\bm{1}}}
\newcommand{\bzero}{{\bm{0}}}

\newcommand{\calR}{{\mathcal{R}}}
\newcommand{\calG}{{\mathcal{G}}}

\newcommand{\calS}{{\mathcal{S}}}
\newcommand{\calZ}{{\mathcal{Z}}}
\DeclareMathOperator{\bbE}{{\mathbb{E}}}
\newcommand{\bbR}{{\mathbb{R}}}

\newcommand{\rmA}{{\mathrm{A}}}
\newcommand{\rmB}{{\mathrm{B}}}
\newcommand{\rmC}{{\mathrm{C}}}
%

%
\hyphenation{op-tical net-works semi-conduc-tor IEEE-Xplore}
\begin{document}
\title{Label Smoothing is Robustification\\against Model Misspecification}
\author{Ryoya Yamasaki, and Toshiyuki Tanaka, \IEEEmembership{Member, IEEE}%
\IEEEcompsocitemizethanks{\IEEEcompsocthanksitem
Ryoya Yamasaki and Toshiyuki Tanaka are with the Department of Informatics, 
Graduate School of Informatics, Kyoto University, Kyoto, 606-8501, Japan. 
E-mail: yamasaki@sys.i.kyoto-u.ac.jp, tt@i.kyoto-u.ac.jp.}}
\markboth{Preprint version}
{Shell \MakeLowercase{\textit{et al.}}: Label Smoothing is Robustification against Model Misspecification}
\maketitle
\begin{abstract}
Label smoothing (LS) adopts smoothed targets in classification tasks. 
For example, in binary classification, instead of the one-hot target $(1,0)^\top$ used 
in conventional logistic regression (LR), LR with LS (LSLR) uses the smoothed target 
$(1-\frac{\alpha}{2},\frac{\alpha}{2})^\top$ with a smoothing level $\alpha\in(0,1)$, 
which causes squeezing of values of the logit.
Apart from the common regularization-based interpretation of LS 
that leads to an inconsistent probability estimator, 
we regard LSLR as modifying the loss function and 
consistent estimator for probability estimation.
In order to study the significance of each of these two modifications by LSLR, 
we introduce a modified LSLR (MLSLR) that uses the same loss function as LSLR 
and the same consistent estimator as LR, while not squeezing the logits.
For the loss function modification, we theoretically show that 
MLSLR with a larger smoothing level has lower efficiency with correctly-specified models, 
while it exhibits higher robustness against model misspecification than LR.
Also, for the modification of the probability estimator, 
an experimental comparison between LSLR and MLSLR showed that 
this modification and squeezing of the logits in LSLR have negative 
effects on the probability estimation and classification performance.
The understanding of the properties of LS provided by these comparisons 
allows us to propose MLSLR as an improvement over LSLR.
\end{abstract}
\begin{IEEEkeywords}
Label smoothing, 
logistic regression, 
asymptotic statistics, 
robust statistics,
smoothed KL-divergence
\end{IEEEkeywords}
\section{Introduction}
\label{sec:Introduction}
\IEEEPARstart{L}{abel} smoothing (LS) adopts smoothed targets in classification problems.
Conventional logistic regression (LR) uses a one-hot vector as a target (Section~\ref{sec:LR}), 
while LR with LS (LSLR) \cite{szegedy2016rethinking} uses a smoothed target vector that replaces 
the component 1 in the one-hot vector with a smaller value and 0 with a larger value (Section~\ref{sec:LS}).
Previous studies have provided, mostly through experimental considerations, 
several heuristic findings on behaviors of LSLR, for example, 
\begin{enumerate}
\item[{\hypertarget{A1}{A1}.}]
It prevents the largest logit from becoming much larger than all others (squeezes the logits) 
and encourages the model to be less confident \cite{szegedy2016rethinking}.
\item[{\hypertarget{A2}{A2}.}]
It generally improves adversarial robustness against a variety of attacks \cite{goibert2019adversarial}.
\item[{\hypertarget{A3}{A3}.}]
It can often significantly improve the generalization of a (multi-class) neural network \cite{muller2019does}.
\end{enumerate}

\noindent
Motivated by these supportive findings, 
LS has recently been actively adopted, together with a neural network model, 
in various modern applications such as speech recognition \cite{chorowski2016towards},
machine translation \cite{vaswani2017attention, gao2020towards}, 
image classification \cite{zoph2018learning, huang2019gpipe}, 
and visual tracking \cite{han2020robust}.
However, in spite of the above-mentioned findings on LS and its wide use,
the underlying mechanism of LS has not been fully explored yet.
In this paper we study it and provide further understanding on LS,
including verification of the significance of squeezing of the logits stated in \hyperlink{A1}{A1} 
and supportive arguments on \hyperlink{A2}{A2} and \hyperlink{A3}{A3},
which is the first of two contributions of this paper.

\begin{figure}[t]
\centering
\includegraphics[height=1.55cm, bb=0 0 874 154]{./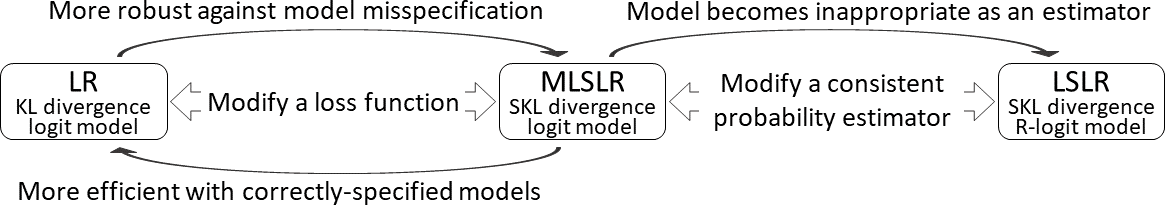}
\caption{%
Relationship between LR, LSLR, and MLSLR, and 
summary of the understanding on LS that this paper gives.}
\label{fig:relation}
\end{figure}
\IEEEpubidadjcol

Although most previous studies have interpreted LSLR as entropy-regularized LR 
that squeezes the logits (Section \ref{sec:Regularization}),
we propose in this paper an alternative interpretation of LSLR 
in which LSLR performs probability estimation using a different 
loss function and consistent estimator than LR (Section \ref{sec:Loss}).
We then study the significance of each of these two modifications, 
via introducing a novel method, modified LSLR (MLSLR), 
that uses the same loss function as LSLR and 
the same consistent probability estimator as LR, 
which results in no squeezing of the logits (Section \ref{sec:MLS}).

For the modification of the loss function,
we give theoretical comparisons between LR and MLSLR: 
Compared with LR, MLSLR with a larger smoothing level has lower efficiency 
with correctly-specified models (Section \ref{sec:Efficiency})
but higher robustness against model misspecification (Section \ref{sec:Robustness}).
As far as the authors' knowledge, this paper is the first to report 
the low efficiency as a disadvantage of LS, 
and the robustness is a basis for the adversarial robustness \hyperlink{A2}{A2} of LS.
Also, the trade-off between the low efficiency and the high robustness explains 
the better practical performance \hyperlink{A3}{A3}.

Moreover, for the modification of the consistent probability estimator,
we prove that an estimator of LSLR can have output with 
an unnecessarily large range like $(-0.1, 1.1, 0,\ldots,0)^\top$, 
implying that it is inappropriate as a probability estimator.
We also experimentally compare LSLR and MLSLR
using a neural network model, and the result shows that 
LSLR performed worse than MLSLR (Section \ref{sec:Experiment}).
This experimental result implies that 
modifying the consistent probability estimator and squeezing the logits are not 
effective in improving the probability estimation and classification performance of LR, 
but rather have a negative impact,
as opposed to the previous understanding \hyperlink{A1}{A1}.
In other words, MLSLR based on a consistent probability estimator 
with an appropriate range would be recommended over LSLR, 
in typical usages with a large-size model such as a deep neural network model.
Collaterally, we propose to practically use MLSLR 
as the second of two contributions of this paper.

Besides these findings on behaviors of LS
and the proposal of MLSLR as an improvement over LSLR, 
the final section (Section \ref{sec:Conclusion}) gives 
reference to other findings on LS \hyperlink{A4}{A4}--\hyperlink{A6}{6}
by previous studies, relevant topics, and future prospects.

\section{Preliminaries}
\label{sec:Preliminaries}
\subsection{Problem Formulation, Notation, and Terminology}
\label{sec:Notation}
In this section, we discuss interpretations of LS.
We first formulate probability estimation and classification tasks, 
along with preparing required notations and terminologies.

Suppose that one has the data $(\bx_1,y_1),\ldots,(\bx_n,y_n)\in\bbR^d\times[K]$ 
from the joint distribution of the explanatory random variable $\bX$ 
and target random variable $Y$, 
where $K$ is the number of different values that $y_1,\ldots,y_n$ take
and $[K]\coloneq\{1,\ldots,K\}$.
A classification task is defined in this paper as 
a task to obtain a good classifier $f:\bbR^d\to[K]$ 
such that the task risk $\calR_\tsk(f;\ell)\coloneq\bbE_{(\bX,Y)}[\ell(f(\bX),Y)]$, 
defined as the expectation value $\bbE_{(\bX,Y)}$ with respect to the pair $(\bX,Y)$ 
of a user-specified task loss function $\ell:[K]^2\to[0,\infty)$, is made small.
The task of minimizing misclassification rate corresponds to 
using the zero-one loss $\ell_\zo(j,k)\coloneq\mathbbm{1}(j\neq k)$ as the task loss, 
where $\mathbbm{1}(c)$ values 1 if a condition $c$ is true and 0 otherwise.

Classification methods that we discuss in this paper rely on 
the framework of so-called empirical (surrogate) risk minimization (ERM).
Let $\calG=\{\bg:\bbR^d\to\calZ\}$ be a learner class,
where $\calZ$ denotes the set of values the learners in $\calG$ may output.
A classifier is constructed as $f=h\circ\bg$ with a labeling $h:\calZ\to[K]$ 
and a learner $\bg\in\calG$ that minimizes the empirical surrogate risk
$\frac{1}{n}\sum_{i=1}^n\phi(\bg(\bx_i),y_i)$ (which is an empirical counterpart of 
the surrogate risk $\calR_\sur(\bg;\phi)\coloneq\bbE_{(\bX,Y)}[\phi(\bg(\bX),Y)]$) 
for a surrogate loss function $\phi:\calZ\times[K]\to[0,\infty)$ that is continuous in its first argument.
ERM can be interpreted as estimation of 
the conditional probability distribution (CPD) function 
$\bp(\cdot)=(\Pr(Y=1|\bX=\cdot),\ldots,\Pr(Y=K|\bX=\cdot))^\top:\bbR^d\to\Delta_{K-1}$, 
where $\Delta_{K-1}$ is the probability simplex in $\bbR^K$,
and the labeling $h$ is designed according to that interpretation.

Note that our notations show possibly-multivariate objects in bold
and often omit the $K$-dependence for the brevity.

\subsection{LR: Conventional Logistic Regression}
\label{sec:LR}
For the learner class $\calG=\{\bg:\bbR^d\to\bbR^K\}$,
LR solves
\begin{align}
\label{eq:LR}%
	\min_{\bg\in\calG}\biggl[-\frac{1}{n}\sum_{i=1}^n\sum_{k=1}^K
	t_k(y_i)\ln\Bigl(\tfrac{e^{g_k(\bx_i)}}{\sum_{l=1}^K e^{g_l(\bx_i)}}\Bigr)\biggr],
\end{align} 
where $\bt\coloneq(t_1,\ldots,t_K)^\top$ is the one-hot encoding function,
whose $k$-th component is $t_k(y)=\mathbbm{1}(k=y)$ for $k,y\in\mathcal{Y}$.
The obtained functions $\{g_k\}_{k\in[K]}$ are also called the logits.

As $\bt(y)$ satisfies $\sum_{y=1}^Kt_k(y)\Pr(Y=y|\bX=\bx)=\Pr(Y=k|\bX=\bx)$,
a mean of the targets $\{\bt(y_i)\}_{\bx_i=\bx}$ can be seen 
as an empirical estimate of the CPD function $\bp(\bx)$.
The KL-divergence, defined for probability mass functions 
$\bp=(p_1,\ldots,p_K)^\top, \bq=(q_1,\ldots,q_K)^\top\in\Delta_{K-1}$ as
\begin{align}
\label{eq:KL-div}%
	D_\KL(\bp||\bq)\coloneq\sum_{k=1}^K p_k\ln\tfrac{p_k}{q_k},
\end{align}
has the consistency property
\begin{align}
\label{eq:KL-consis}%
	\underset{\bq\in\Delta_{K-1}}{\argmin}\;D_\KL(\bp||\bq)
	=\bp,\quad\text{for all }\bp\in\Delta_{K-1}.
\end{align}
This property shows that LR applies the logit model 
\begin{align}
\label{eq:LM}%
	\bq(\bx)=\bq_\LM(\bg(\bx))\coloneq
	\Bigl(\tfrac{e^{g_1(\bx)}}{\sum_{k=1}^K e^{g_k(\bx)}},\ldots,
	\tfrac{e^{g_K(\bx)}}{\sum_{k=1}^K e^{g_k(\bx)}}\Bigr)^\top
\end{align}
as a consistent estimator (see also Corollary \ref{cor:ProbPred}), 
in estimation of the true CPD function $\bp(\bx)$ through minimization of 
an empirical estimate of the mean KL-divergence $\bbE_\bX[D_\KL(\bp(\bX)||\bq(\bX))]$, 
where $\bbE_\bX$ denotes the expectation value regarding the random variable $\bX$.

\subsection{LSLR: Logistic Regression with Label Smoothing}
\label{sec:LS}
For the smoothing function 
\begin{align}
	\bs_\alpha(\bv)\coloneq(1-\alpha)\bv+\tfrac{\alpha}{K}
\end{align}
with a smoothing level $\alpha$ that is conventionally in $(0,1)$, 
LSLR applies the smoothed target 
$\bs_\alpha(\bt(y_i))=(s_\alpha(t_1(y_i)),\ldots,s_\alpha(t_K(y_i)))^\top$ instead of 
the one-hot encoded target $\bt(y_i)$ of LR.
Namely, LSLR considers the learning process
\begin{align}
\label{eq:LSLR}%
	\min_{g\in\calG}\biggl[-\frac{1}{n}\sum_{i=1}^n\sum_{k=1}^K
	s_\alpha(t_k(y_i))\ln\Bigl(\tfrac{e^{g_k(\bx_i)}}{\sum_{l=1}^K e^{g_l(\bx_i)}}\Bigr)\biggr].
\end{align}

\subsection{Regularization View: LSLR is Entropy Regularized LR}
\label{sec:Regularization}
Since the smoothed target satisfies $\sum_{y=1}^Ks_\alpha(t_k(y))\Pr(Y=y|\bX=\bx)=s_\alpha(\Pr(Y=k|\bX=\bx))$,
a mean of the smoothed targets $\{\bs_\alpha(\bt(y_i))\}_{\bx_i=\bx}$ 
can be seen as an empirical estimate of the smoothed CPD function 
$\bs_\alpha(\bp(\bx))=(s_\alpha(\Pr(Y=1|\bX=\bx)),\ldots,s_\alpha(\Pr(Y=K|\bX=\bx)))^\top$.
On the basis of this consideration, 
along with
an implicit supposition that LSLR adopts the logit model $\bq(\bx)=\bq_\LM(\bg(\bx))$ 
for estimation of the true CPD function $\bp(\bx)$, 
and the equation
\begin{align}
\label{eq:regularview}%
	\begin{split}
	D_\KL(\bs_\alpha(\bp)||\bq)
	=&\,(1-\alpha)D_\KL(\bp||\bq)
	+\alpha D_\KL(\bo/K||\bq)\\
	&+(\text{$\bq$-independent term})
	\end{split}
\end{align}
with the all-1 $K$-dimensional vector $\bo\coloneq(1,\ldots,1)^\top$,
most previous studies regard LSLR as LR
with the entropy regularization term $D_\KL(\bo/K||\bq)$
which penalizes the deviation of $\bq$ from the uniform CPD function $\bx\mapsto\bo/K$;
See \cite{szegedy2016rethinking, muller2019does, goibert2019adversarial, meister2020generalized}.
As a result, the learned logit model $\bq_\LM(\bg(\bx))$ is expected not 
to take an extreme probability estimate like $(1,0,\ldots)^\top$,
or equivalently, the logits by LSLR will be squeezed;
See Theorem \ref{thm:RLM-SKL}, \hyperlink{B6}{B6}.
Many previous studies claim that this squeezing helps avoid over-fitting of the model,
as in the finding \hyperlink{A1}{A1}.

\subsection{Our Loss View: LSLR modifies Loss Function and Consistent Probability Estimator from LR}
\label{sec:Loss}
We here introduce an alternative view of LS that 
LSLR adopts a loss function and consistent estimator
different from those of LR for probability estimation (the loss view).
First, we define the smoothed KL (SKL)-divergence
\begin{align}
\label{eq:SKL-div}%
	D_{\SKL,\alpha}(\bp||\bq)\coloneq\sum_{k=1}^Ks_\alpha(p_k)\ln\tfrac{s_\alpha(p_k)}{s_\alpha(q_k)}
\end{align} 
for $\bp,\bq\in\bs_\alpha^{-1}(\Delta_{K-1})\coloneq\{\bs_\alpha^{-1}(\bp)\mid\bp\in\Delta_{K-1}\}$.
The SKL-divergence satisfies the consistency property
\begin{align}
\label{eq:SKL-consis}%
	\underset{\bq\in\calS}{\argmin}\;D_{\SKL,\alpha}(\bp||\bq)
	=\bp,\quad\text{for all }\bp\in\calS
\end{align}
for $\calS=\Delta_{K-1}$ or $\bs_\alpha^{-1}(\Delta_{K-1})$ and 
$\alpha$ in a certain range (see Theorem \ref{thm:RLM-SKL}, \hyperlink{B3}{B3}).
Then, we view that LSLR \eqref{eq:LSLR} adopts 
not the logit model $\bq_\LM(\bg(\bx))$ but
what we call the roughened logit (R-logit) model,
which is defined with an unconventional link function as
\begin{align}
\label{eq:RLM}%
	\begin{split}
	&\bq_{\RLM,\alpha}(\bg(\bx))
	\coloneq\bs_\alpha^{-1}(\bq_\LM(\bg(\bx)))\\
	&\hphantom{\bq_{\RLM}}
	=\Bigl(s_\alpha^{-1}\Bigl(\tfrac{e^{g_1(\bx)}}{\sum_{k=1}^K e^{g_k(\bx)}}\Bigr),\ldots,
	s_\alpha^{-1}\Bigl(\tfrac{e^{g_K(\bx)}}{\sum_{k=1}^K e^{g_k(\bx)}}\Bigr)\Bigr)^\top,
	\end{split}
\end{align}
as an estimator (as $\bq(\bx)$) of the true CPD function $\bp(\bx)$
through minimizing an empirical estimate of the mean SKL-divergence $\bbE_\bX[D_{\SKL,\alpha}(\bp(\bX)||\bq(\bX))]$.
As one can see from the fact that
$\bs_\alpha$ appearing in the loss \eqref{eq:SKL-div} and 
$\bs_\alpha^{-1}$ in the model \eqref{eq:RLM} cancel out each other,
this estimator, not the logit model, is consistent to the true CPD for LSLR.%
\footnote{%
One may see that LSLR estimates the smoothed CPD function $\bs_\alpha(\bp(\bx))$ with the logit model $\bq_\LM(\bg(\bx))$.
However, since this view changing properties of the data makes it difficult to compare with the original LR, 
our paper will not discuss this view further.}

Note that the logit model is not the only probability estimator;
For example, another well-known probability estimator is the probit model in probit regression.
Our loss view uniformly determines 
which parts we treat as a probability estimator and 
which parts we treat as a loss function for a probability estimation method, 
according to the consistency of the probability estimator to the true CPD function,
for fair comparisons to be performed in this paper.

We here list basic properties of the SKL-divergence $D_{\SKL,\alpha}$ 
and R-logit model $\bq_{\RLM,\alpha}$, which are the KL-divergence 
$D_{\KL}$ and logit model $\bq_\LM$ when $\alpha=0$:
\begin{theorem}
\label{thm:RLM-SKL}
For any $K\ge2$,
\begin{enumerate}
\item[{\hypertarget{B1}{B1}.}]
$q_{\RLM,\alpha,k}(\bg)$ for $\bg\in\bbR^K$ and $k=1,\ldots,K$ 
can range $[-\frac{\alpha}{K(1-\alpha)},\frac{K-\alpha}{K(1-\alpha)}]$
or $[\frac{K-\alpha}{K(1-\alpha)},-\frac{\alpha}{K(1-\alpha)}]$ 
for $\alpha\in[0,1)$ or $\alpha\in(1,\frac{K}{K-1}]$, 
and these intervals cover $[0,1]$.
\item[{\hypertarget{B2}{B2}.}]
$\sum_{k=1}^Kq_{\RLM,\alpha,k}(\bg)=1$ for any $\bg\in\bbR^K$ and $\alpha\neq1$. 
\item[{\hypertarget{B3}{B3}.}]
$D_{\SKL,\alpha}(\bp||\bq)\ge0$, and $D_{\SKL,\alpha}(\bp||\bq)=0$ if and only if $\bp=\bq$, 
for any $\bp,\bq\in\bs_\alpha^{-1}(\Delta_{K-1})$ and $\alpha\in[0,1)\cup(1,\frac{K}{K-1}]$.
Also, $D_{\SKL,1}(\bp||\bq)=0$ for any $\bp,\bq\in\bs_\alpha^{-1}(\Delta_{K-1})$.
\item[{\hypertarget{B4}{B4}.}]
$D_{\SKL,\alpha}(\bp||\bq)$ is convex in $(\bp,\bq)$ for any $\alpha\in[0,\frac{K}{K-1}]$:
$D_{\SKL,\alpha}(r\bp_1+(1-r)\bp_2||r\bq_1+(1-r)\bq_2)\le rD_{\SKL,\alpha}(\bp_1||\bq_1)+(1-r)D_{\SKL,\alpha}(\bp_2||\bq_2)$,
for any $\bp_1,\bp_2,\bq_1,\bq_2\in\bs_\alpha^{-1}(\Delta_{K-1})$ and $r\in[0,1]$.
\item[{\hypertarget{B5}{B5}.}]
$\argmin_{\bg\in\bbR^K, g_K=0} D_{\SKL,\alpha}((1,0,\ldots)^\top||\bq_\LM(\bg))
=(\infty,0,\ldots)^\top$ for $\alpha\in[0,1)\cup(1,\frac{K}{K-1}]$.
\item[{\hypertarget{B6}{B6}.}]
$\argmin_{\bg\in\bbR^K, g_K=0} D_{\SKL,\alpha}((1,0,\ldots)^\top||\bq_{\RLM,\alpha}(\bg))
=(\infty,0,\ldots)^\top$, $(\ln(\frac{K}{\alpha}+1-K),0,\ldots)^\top$, or $(-\infty,0,\ldots)^\top$
for $\alpha=0$, any $\alpha\in(0,1)\cup(1,\frac{K}{K-1})$, or $\alpha=\frac{K}{K-1}$.
\item[{\hypertarget{B7}{B7}.}]
$D_{\SKL,\alpha}(\bp||\bq)=D_{\SKL,K-(K-1)\alpha}(\frac{\bo-\bp}{K-1}||\frac{\bo-\bq}{K-1})$,
$D_{\SKL,\alpha}(\bp||\bq_{\RLM,\alpha}(\bg))=D_{\SKL,K-(K-1)\alpha}(\frac{\bo-\bp}{K-1}||\bq_{\RLM,K-(K-1)\alpha}(\bg))$,
$K-(K-1)\alpha\in[0,1)$,
and $\frac{\bo-\bp}{K-1}\in\bs_{K-(K-1)\alpha}^{-1}(\Delta_{K-1})$,
for any $\bp,\bq\in\bs_\alpha^{-1}(\Delta_{K-1})$, $\bg\in\bbR^K$, and $\alpha\in(1,\frac{K}{K-1}]$.
\end{enumerate}
For $K=2$, 
\begin{enumerate}
\item[{\hypertarget{B8}{B8}.}]
$D_{\SKL,\alpha}(\bp||\bq_{\LM}(\bg))=D_{\SKL,2-\alpha}(\bo-\bp||\bq_{\LM}(-\bg))= D_{\SKL,2-\alpha}(\bp||\bq_{\LM}(\bg))$,
$D_{\SKL,\alpha}(\bp||\bq_{\RLM,\alpha}(\bg))=D_{\SKL,2-\alpha}(\bo-\bp||\bq_{\RLM,2-\alpha}(\bg))=D_{\SKL,2-\alpha}(\bp||\bq_{\RLM,2-\alpha}(-\bg))$,
$2-\alpha\in[0,1)$,
and $\bo-\bp\in\bs_{2-\alpha}^{-1}(\Delta_1)$,
for any $\bp\in\bs_\alpha^{-1}(\Delta_1)$, $\bg\in\bbR^2$, and $\alpha\in(1,2]$.
\end{enumerate}
\end{theorem}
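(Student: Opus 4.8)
The plan is to reduce every item to two bookkeeping identities together with standard facts about the KL-divergence and softmax. Write $\bs_\alpha$ for the componentwise affine map $v\mapsto(1-\alpha)v+\alpha/K$, with inverse $u\mapsto\frac{u-\alpha/K}{1-\alpha}$ for $\alpha\neq1$. The identities are
\begin{align*}
D_{\SKL,\alpha}(\bp||\bq)&=D_\KL(\bs_\alpha(\bp)||\bs_\alpha(\bq)),\\
D_{\SKL,\alpha}(\bp||\bq_{\RLM,\alpha}(\bg))&=D_\KL(\bs_\alpha(\bp)||\bq_\LM(\bg)),
\end{align*}
the first since $\bs_\alpha$ acts componentwise, the second since $\bs_\alpha\circ\bq_{\RLM,\alpha}=\bq_\LM$ by construction. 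Because $\sum_k s_\alpha(p_k)=(1-\alpha)+\alpha=1$ and $s_\alpha$ is nonnegative on $[0,1]$ exactly when $\alpha\le\frac{K}{K-1}$, the affine $\bs_\alpha$ restricts to a bijection of $\bs_\alpha^{-1}(\Delta_{K-1})$ onto $\Delta_{K-1}$ that carries $\Delta_{K-1}$ into itself for such $\alpha$; this is what makes the right-hand sides genuine KL-divergences between probability vectors, and it is the source of the range $\alpha\le\frac{K}{K-1}$ appearing throughout.

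Given this, B2 is $\sum_k s_\alpha^{-1}(u_k)=\frac{(\sum_k u_k)-\alpha}{1-\alpha}=1$ when $\sum_k u_k=1$; B1 follows by pushing the closure $[0,1]$ of the range of each softmax coordinate through the monotone affine $s_\alpha^{-1}$ (increasing for $\alpha<1$, decreasing for $\alpha>1$) and evaluating the endpoints, the covering claim reducing to $-\frac{\alpha}{K(1-\alpha)}\le0\le1\le\frac{K-\alpha}{K(1-\alpha)}$, i.e.\ $(K-1)\alpha\ge0$, for $\alpha\in[0,1)$ and its mirror for $\alpha>1$. For B3 apply Gibbs' inequality to $D_\KL(\bs_\alpha(\bp)||\bs_\alpha(\bq))$ with both arguments in $\Delta_{K-1}$, with equality iff $\bs_\alpha(\bp)=\bs_\alpha(\bq)$, hence iff $\bp=\bq$ by injectivity of $\bs_\alpha$ for $\alpha\neq1$; for $\alpha=1$ each summand is $\frac1K\ln1=0$. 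For B4, $\bs_\alpha^{-1}(\Delta_{K-1})$ is convex (affine preimage of a convex set) and $(\bp,\bq)\mapsto(\bs_\alpha(\bp),\bs_\alpha(\bq))$ is affine, so $D_{\SKL,\alpha}$ is the composition of the jointly convex $D_\KL$ with an affine map, hence jointly convex; $\alpha=1$ is trivial.

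For B5 and B6 the task, after the reductions, is to minimize $D_\KL(\ba||\bu)$ as $\bu$ ranges over the closure of $\{\bs_\alpha(\bq_\LM(\bg)):g_K=0\}=\bs_\alpha(\Delta_{K-1})$ (B5) or of $\{\bq_\LM(\bg):g_K=0\}=\Delta_{K-1}$ (B6), where $\ba=\bs_\alpha((1,0,\ldots,0)^\top)=(1-\alpha\tfrac{K-1}{K},\tfrac\alpha K,\ldots,\tfrac\alpha K)^\top$. Since $D_\KL(\ba||\cdot)$ has its unique zero at $\ba$ and softmax is a homeomorphism from $\{g_K=0\}$ onto the open simplex, the minimizing logit is the finite $\bg$ with $g_k=\ln(a_k/a_K)$ when the target lies in the open simplex, and the corresponding limiting (infinite) logit when the target lies on $\partial\Delta_{K-1}$. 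In B5 the target $\bs_\alpha((1,0,\ldots)^\top)$ is matched by $\bq_\LM(\bg)=(1,0,\ldots)^\top$ (bijectivity of $\bs_\alpha$), forcing $g_1\to\infty$, hence $(\infty,0,\ldots)^\top$. In B6: for $\alpha=0$, $\ba=(1,0,\ldots)^\top$ is a vertex, so $g_1\to\infty$; for $\alpha\in(0,1)\cup(1,\tfrac{K}{K-1})$, $\ba$ is interior and $g_k=\ln(a_k/a_K)$ gives $g_k=0$ for $k\ge2$ and $g_1=\ln\tfrac{1-\alpha(K-1)/K}{\alpha/K}=\ln(\tfrac K\alpha+1-K)$; for $\alpha=\tfrac{K}{K-1}$, $\ba=(0,\tfrac1{K-1},\ldots,\tfrac1{K-1})^\top$ is a boundary point, requiring $q_{\LM,1}(\bg)\to0$, i.e.\ $g_1\to-\infty$ with the rest $0$.

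For B7 put $\alpha'=K-(K-1)\alpha$; the computation $1-\alpha'=-(K-1)(1-\alpha)$ gives the pointwise identity $s_{\alpha'}\big(\tfrac{1-v}{K-1}\big)=s_\alpha(v)$, hence $\bs_{\alpha'}\big(\tfrac{\bo-\bp}{K-1}\big)=\bs_\alpha(\bp)$ and likewise with $\bq$, which together with $\bs_{\alpha'}\circ\bq_{\RLM,\alpha'}=\bq_\LM$ yields both claimed identities at once; the side statements follow since $\alpha\in(1,\tfrac{K}{K-1}]$ maps onto $\alpha'\in[0,1)$ and $\bs_{\alpha'}\big(\tfrac{\bo-\bp}{K-1}\big)=\bs_\alpha(\bp)\in\Delta_{K-1}$. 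B8 is the $K=2$ case of B7 ($\alpha'=2-\alpha$), using in addition the softmax reflection $\bq_\LM(-\bg)=\bo-\bq_\LM(\bg)$ (valid only for $K=2$) and permutation invariance of $D_\KL$ (equivalently $\bs_{2-\alpha}(\bp)=\mathrm{flip}(\bs_\alpha(\bp))$) to swap $\bo-\bp$ for $\bp$ and $\bq_\LM(-\bg)$ for $\bq_\LM(\bg)$, and similarly for the R-logit version. I expect nothing here to be deep; the real care is in B5--B6, namely making $\argmin$ precise when the infimum is not attained, checking the infimum equals $0$ in each $\alpha$-regime, and locating $\bs_\alpha((1,0,\ldots)^\top)$ relative to $\Delta_{K-1}$ (vertex / interior / boundary face), since that trichotomy is exactly what produces the three answers $(\infty,0,\ldots)^\top$, $(\ln(\tfrac K\alpha+1-K),0,\ldots)^\top$, $(-\infty,0,\ldots)^\top$. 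A secondary nuisance throughout is the orientation reversal of $\bs_\alpha$ and the reshaping of $\bs_\alpha^{-1}(\Delta_{K-1})$ for $\alpha>1$, and isolating the degenerate endpoints $\alpha\in\{1,\tfrac{K}{K-1}\}$.
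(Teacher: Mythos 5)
Your proposal is correct, and its backbone differs from the paper's proof in a worthwhile way. The paper proves \hyperlink{B3}{B3} by a Lagrange-multiplier computation on $D_{\SKL,\alpha}$ directly and \hyperlink{B4}{B4} by computing all second derivatives and exhibiting the Hessian quadratic form as a sum of squares, while it dismisses \hyperlink{B1}{B1}, \hyperlink{B2}{B2}, \hyperlink{B7}{B7}, \hyperlink{B8}{B8} as trivial calculations; you instead funnel everything through the two identities $D_{\SKL,\alpha}(\bp||\bq)=D_\KL(\bs_\alpha(\bp)||\bs_\alpha(\bq))$ and $\bs_\alpha\circ\bq_{\RLM,\alpha}=\bq_\LM$, so that \hyperlink{B3}{B3} becomes Gibbs' inequality plus injectivity of $\bs_\alpha$ and \hyperlink{B4}{B4} becomes joint convexity of $D_\KL$ composed with an affine map on a convex domain. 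That buys shorter, computation-free arguments and makes the admissible range $\alpha\le\frac{K}{K-1}$ transparent (it is exactly when $\bs_\alpha$ keeps $\Delta_{K-1}$ inside $\Delta_{K-1}$), and your explicit verification of the reflection identity $s_{K-(K-1)\alpha}\bigl(\tfrac{1-v}{K-1}\bigr)=s_\alpha(v)$ for \hyperlink{B7}{B7}/\hyperlink{B8}{B8} actually fills in what the paper omits; the paper's direct Hessian computation, for its part, gives slightly more information (an explicit sum-of-squares certificate). For \hyperlink{B5}{B5}--\hyperlink{B6}{B6} the two arguments are essentially the same in substance — locating the smoothed target $\bs_\alpha((1,0,\ldots)^\top)$ relative to the reachable set and reading off the logit — and your vertex/interior/boundary trichotomy is, if anything, more complete than the paper's, which only solves the interior case explicitly and invokes symmetry to fix $g_2=\cdots=g_K=0$; both treatments share the same mild informality about an ``$\argmin$'' at $\pm\infty$ when the infimum is not attained, so this is not a gap relative to the paper.
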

The constraint $g_K=0$ in \hyperlink{B5}{B5} and \hyperlink{B6}{B6}
is for removing the degree of freedom of translation of the minimizers;
Consider that, for example, if $\bg=\bar{\bg}$ minimizes $D_{\SKL,\alpha}(\bp||\bq_\LM(\bg))$,
then $\bg=\bar{\bg}+v\bo$ also minimizes that SKL-divergence for any $v\in\bbR$.
The consistency property \eqref{eq:SKL-consis} holds for 
$\alpha\in[0,1)\cup(1,\frac{K}{K-1}]$ as stated in \hyperlink{B3}{B3},
and one can perform LSLR and MLSLR (formulated in the next section) even with $\alpha\in(1,\frac{K}{K-1}]$.
However, we have found no advantage of choosing $\alpha\in(1,\frac{K}{K-1}]$ for $K>2$,
and there is no advantage at all especially when $K=2$ because
LSLRs or MLSLRs with $\alpha\in(1,2]$ and $2-\alpha\in[0,1)$ 
behave equivalently as suggested by \hyperlink{B8}{B8},
so we restrict our subsequent discussion to $\alpha\in[0,1)$
to simplify the statement of the paper;
See appendices for supplemental discussions 
on the case with $\alpha\in(1,\frac{K}{K-1}]$.

\subsection{MLSLR: Modified LSLR}
\label{sec:MLS}
As we described in the previous section,
LS of LSLR modifies a loss function and a consistent estimator for probability estimation, 
from the KL-divergence and logit model of LR 
to the SKL-divergence and R-logit model, respectively.
A direct comparison between LR and LSLR does not distinguish 
effects of these two modifications on behaviors of LS.
We therefore propose a novel method
which we call the modified LSLR (MLSLR),
\begin{align}
\label{eq:MLSLR}%
	\min_{\bg\in\calG}\biggl[-\frac{1}{n}\sum_{i=1}^n\sum_{k=1}^K
	s_\alpha(t_k(y_i))\ln s_\alpha\Bigl(\tfrac{e^{g_k(\bx_i)}}{\sum_{l=1}^Ke^{g_l(\bx_i)}}\Bigr)\biggr]
\end{align}
that adopts as the loss function the same SKL-divergence $D_{\SKL,\alpha}(\bp||\bq)$ as LSLR, 
along with the same logit model $\bq(\bx)=\bq_\LM(\bg(\bx))$ as LR for estimating the CPD function $\bp(\bx)$.
MLSLR allows us to study significance of the modification of the loss function
via its comparison with LR, as well as significance of
the modification of the estimator via its comparison with LSLR.
It should be noted that the latter comparison will also
clarify the importance of squeezing the logits in LSLR,
as MLSLR does not squeeze the logits
(see Theorem \ref{thm:RLM-SKL}, \hyperlink{B5}{B5}).

\subsection{LSQLR: MLSLR with $\alpha\to1$}
\label{sec:SQ}
We find it useful to consider the limit $\alpha\to1$ of MLSLR
in understanding properties of MLSLR and LSLR.
\begin{theorem}
\label{thm:limit}
For any $K\ge2$ and $\bp,\bq\in\bs_\alpha^{-1}(\Delta_{K-1})$,
$\tfrac{2\alpha}{(1-\alpha)^2K}D_{\SKL,\alpha}(\bp||\bq)\approx\|\bp-\bq\|^2$ as $\alpha\to1$, 
where $\|\cdot\|$ is the Euclidean norm in $\bbR^K$.
\end{theorem}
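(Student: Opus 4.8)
The plan is to expand the SKL-divergence to second order in the difference $\bp-\bq$, but crucially to track how the coefficient of that quadratic form behaves as $\alpha\to1$. Write $u_k\coloneq s_\alpha(p_k)=(1-\alpha)p_k+\tfrac{\alpha}{K}$ and $v_k\coloneq s_\alpha(q_k)=(1-\alpha)q_k+\tfrac{\alpha}{K}$, so that $D_{\SKL,\alpha}(\bp||\bq)=\sum_k u_k\ln\tfrac{u_k}{v_k}=\sum_k u_k\ln u_k-\sum_k u_k\ln v_k$. The key observation is that both $u_k$ and $v_k$ converge to $\tfrac{1}{K}$ as $\alpha\to1$, and their difference is $u_k-v_k=(1-\alpha)(p_k-q_k)$, which is $O(1-\alpha)$. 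So this is a situation where we are evaluating a divergence between two distributions that are both collapsing to the uniform distribution, with a separation that shrinks at a controlled rate.

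First I would apply the standard local expansion of KL around a common point. For $\bu,\bv$ both near $\bo/K$, one has $\sum_k u_k\ln\tfrac{u_k}{v_k}=\tfrac12\sum_k\tfrac{(u_k-v_k)^2}{v_k}+(\text{higher order})$; more carefully, Taylor-expanding $\ln$ and using $\sum_k u_k=\sum_k v_k=1$, the leading term is $\tfrac12\sum_k\tfrac{(u_k-v_k)^2}{w_k}$ for some $w_k$ between $u_k$ and $v_k$, hence $w_k\to\tfrac1K$. Substituting $u_k-v_k=(1-\alpha)(p_k-q_k)$ gives $D_{\SKL,\alpha}(\bp||\bq)\approx\tfrac12\cdot K(1-\alpha)^2\sum_k(p_k-q_k)^2=\tfrac{K(1-\alpha)^2}{2}\|\bp-\bq\|^2$ as $\alpha\to1$. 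Multiplying both sides by $\tfrac{2\alpha}{(1-\alpha)^2K}$ and noting $\alpha\to1$ yields the claimed relation $\tfrac{2\alpha}{(1-\alpha)^2K}D_{\SKL,\alpha}(\bp||\bq)\approx\|\bp-\bq\|^2$.

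The main obstacle — and the place requiring care rather than cleverness — is making the asymptotic statement rigorous, i.e.\ pinning down exactly in what sense ``$\approx$'' is meant and controlling the error terms uniformly. Since $\bp,\bq\in\bs_\alpha^{-1}(\Delta_{K-1})$ are themselves allowed to depend on $\alpha$ (the constraint set changes with $\alpha$), one must be careful that $\|\bp-\bq\|$ doesn't itself blow up; however, because $u_k,v_k\in[0,1]$ forces $|p_k-q_k|=\tfrac{|u_k-v_k|}{1-\alpha}\le\tfrac{1}{1-\alpha}$, the difference $\bp-\bq$ can grow, so the cleanest reading is that for \emph{fixed} $\bp,\bq$ (equivalently, for $\bp,\bq$ in a fixed compact subset, which lies in $\bs_\alpha^{-1}(\Delta_{K-1})$ for all $\alpha$ close enough to $1$) the ratio of the two sides tends to $1$, with the higher-order corrections being $O(1-\alpha)$ relative to the leading term. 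I would therefore fix $\bp,\bq$, write the third-order Lagrange remainder in the $\ln$ expansion explicitly, bound $w_k$ away from $0$ (which holds since $w_k\to\tfrac1K>0$), and conclude that $D_{\SKL,\alpha}(\bp||\bq)=\tfrac{K(1-\alpha)^2}{2}\|\bp-\bq\|^2\bigl(1+O(1-\alpha)\bigr)$, from which the theorem follows.
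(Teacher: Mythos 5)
Your proposal is correct and uses essentially the same argument as the paper: a second-order Taylor expansion of the logarithm around the common limit point $\bo/K$, with the linear terms cancelling because $\bs_\alpha(\bp),\bs_\alpha(\bq)\in\Delta_{K-1}$, yielding the leading term $\tfrac{(1-\alpha)^2K}{2\alpha}\|\bp-\bq\|^2$. Your extra remarks on fixing $\bp,\bq$ and bounding the Lagrange remainder only make explicit what the paper's $O\bigl(\tfrac{(1-\alpha)^3}{\alpha^2}\bigr)$ term leaves implicit.
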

This theorem indicates especially that MLSLR with a limiting smoothing level $\alpha\to1$ 
approaches least squares logistic regression (LSQLR)
\begin{align}
\label{eq:LSQLR}%
	\min_{\bg\in\calG}\biggl[\frac{1}{n}\sum_{i=1}^n\sum_{k=1}^K
	\Bigl(t_k(y_i)-\tfrac{e^{g_k(\bx_i)}}{\sum_{l=1}^Ke^{g_l(\bx_i)}}\Bigr)^2\biggr].
\end{align}
LSQLR adopts the logit model $\bq_\LM(\bg(\bx))$ as a consistent estimator (as $\bq(\bx)$) of the true CPD function $\bp(\bx)$ via minimizing 
an empirical version of the mean squared distance $\bbE_\bX[D_\SQ(\bp(\bX)||\bq(\bX))]$
(plus $\bq$-independent quantity $\{1-\bbE_\bX[\|\bp(\bX)\|^2]\}$),
where $D_\SQ(\bp||\bq)\coloneq\|\bp-\bq\|^2$.
Then, one can regard MLSLR with $\alpha\in(0,1)$ as interpolating
between LR ($\alpha=0$) and LSQLR ($\alpha\to1$).

In contrast, 
even if considering the consequence of Theorem~\ref{thm:limit} into account, 
LSLR with $\alpha\to1$
\begin{align}
	\min_{\bg\in\calG}\biggl[\frac{1}{n}\sum_{i=1}^n\sum_{k=1}^K
	\Bigl\{t_k(y_i)-s_\alpha^{-1}\Bigl(\tfrac{e^{g_k(\bx_i)}}{\sum_{l=1}^Ke^{g_l(\bx_i)}}\Bigr)\Bigr\}^2\biggr]
\end{align}
remains the $\alpha$-dependence,
and this method is not practical since $s_\alpha^{-1}$ diverges almost everywhere.
Due to this trouble, we do not study LSLR using $\alpha$ very close to 1.

\subsection{Summary on Our Comparing Methods}
\label{sec:Summary}
LR applies the KL-divergence loss and logit model 
(see Section~\ref{sec:LR} and \eqref{eq:LR}),
LSLR applies the SKL-divergence loss and R-logit model 
(see Sections~\ref{sec:LS} and \ref{sec:Loss}, and \eqref{eq:LSLR}),
and MLSLR including LSQLR applies the SKL-divergence loss and logit model 
(see Sections~\ref{sec:MLS} and \ref{sec:SQ}, and \eqref{eq:MLSLR} and \eqref{eq:LSQLR}).
This paper formulated these methods according to the framework of ERM,
and these methods do not have an explicit term for regularization in our formulations
(so we do not use the terminology `regularization' except when discussing statements of existing studies).
We will compare these methods that use different loss functions and consistent probability estimators, 
under settings regarding the underlying data distribution and data characteristics, 
and the model size or representation ability of the probability estimators,
according to classical analysis for ERM methods.

One may be concerned about the remaining one of the four combinations 
of the KL- or SKL-divergence loss and the logit or R-logit model.
The following problem corresponds to the combination 
of the KL-divergence loss and the R-logit model:
\begin{align}
\label{eq:Other}%
	\min_{\bg\in\calG}\biggl[-\frac{1}{n}\sum_{i=1}^n\sum_{k=1}^K
	t_k(y_i)\ln s_\alpha^{-1}\Bigl(\tfrac{e^{g_k(\bx_i)}}{\sum_{l=1}^Ke^{g_l(\bx_i)}}\Bigr)\biggr].
\end{align}
However, an element of the R-logit model can take a negative value, 
so the KL-divergence for this model will be ill-defined and the optimization will fail.
This method is therefore not promising and will not be considered in this study.

\section{Statistical Analysis: LR versus MLSLR}
\label{sec:Theory}
\subsection{Setting, Notation, and Basic Properties}
\label{sec:Setting}
In the theoretical analysis presented in this section,
for the sake of interpretability of the comparison results,
we consider the simple case of binary classification ($K=2$)
for the task of minimizing misclassification rate ($\ell=\ell_\zo$).
See Appendix~\ref{sec:ProCla}
for analysis under more general settings including 
multi-class cases and cost-sensitive tasks.

We write the distribution $\Pr(\bX=\bx)$ of $\bX$ as $p_0(\bx)$,
relabel $Y=1,2$ to $+1, -1$, 
and abbreviate the conditional probability $\Pr(Y=+1|\bX=\bx)$ as $p_1(\bx)$.
Then, for the real-valued learner class $\calG=\{g:\bbR^d\to\bbR\}$,
the surrogate loss functions $\phi$ for LR, LSLR, MLSLR, and LSQLR 
are respectively given by $\phi(v, y)=\varphi(y v)$ with
\begin{align}
\label{eq:SurrogateLoss}
	\begin{split}
	\varphi_\LR(v)
	&=-\ln\bigl(\tfrac{1}{1+e^{-v}}\bigr),\\
	\varphi_{\LS,\alpha}(v)
	&=-\bigl(1-\tfrac{\alpha}{2}\bigr)\ln\bigl(\tfrac{1}{1+e^{-v}}\bigr)-\tfrac{\alpha}{2}\ln\bigl(\tfrac{1}{1+e^v}\bigr),\\
	\varphi_{\MLS,\alpha}(v)
	&=-\bigl(1-\tfrac{\alpha}{2}\bigr)\ln\bigl(\tfrac{1-\alpha}{1+e^{-v}}+\tfrac{\alpha}{2}\bigr)
	-\tfrac{\alpha}{2}\ln\bigl(\tfrac{1-\alpha}{1+e^{v}}+\tfrac{\alpha}{2}\bigr),\\
	\varphi_\LSQ(v)
	&=\tfrac{1}{2(1+e^v)^2}\,\bigl(\propto
	\bigl(1-\tfrac{1}{1+e^{-v}}\bigr)^2+\bigl(0-\tfrac{1}{1+e^v}\bigr)^2\bigr),
	\end{split}
\end{align}
where $\varphi$ itself is also called a surrogate loss
(subscript LR, LS, MLS, or LSQ of an object indicates that it is for LR, LSLR, MLSLR, or LSQLR).%
\footnote{%
The learner model $g$ is written as a real-valued function in Section \ref{sec:Theory},
which is a simplification in the binary case, from the formulations \eqref{eq:LR}, 
\eqref{eq:LSLR}, \eqref{eq:MLSLR}, and \eqref{eq:LSQLR} of the four methods that 
adopted an $\bbR^2$-valued function, by letting $(g(\bx), 0)^\top$ be the model.}
The loss function $\varphi_\LSQ$ is also known as Savage loss \cite{masnadi2008design}.
Also, a labeling function $h$ is fixed to the sign function, 
$h_{\ell_\zo}(v)\coloneq-1$ (if $v\le0$), $\coloneq+1$ (if $v>0$),
considering the task and surrogate losses.

First, we summarize results showing that LR, MLSLR, and LSQLR can 
consistently perform probability estimation via the logit model,
while LSLR can via the R-logit model:
\begin{corollary}
\label{cor:ProbPred}
Assume $\alpha\in[0,1)$,
and let $\bar{g}\in\argmin_{g:\bbR^d\to\bbR}\calR_\sur(g; \phi)$.
Then, regardless of the distribution of $(\bX,Y)$,
$\frac{1}{1+e^{-\bar{g}(\bx)}}=p_1(\bx)$ a.s.~for LR, MLSLR, and LSQLR,
and $s_\alpha^{-1}\bigl(\frac{1}{1+e^{-\bar{g}(\bx)}}\bigr)=p_1(\bx)$,
a.s.~for LSLR.
\end{corollary}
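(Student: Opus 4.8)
The plan is to work pointwise in $\bx$ and minimize the conditional surrogate risk. Since $\calR_\sur(g;\phi)=\bbE_\bX[\bbE_{Y|\bX}[\varphi(Yg(\bX))]]$, a minimizer $\bar g$ satisfies, for a.e.\ $\bx$, that $\bar g(\bx)$ minimizes the conditional expected loss $L_\bx(v)\coloneq p_1(\bx)\varphi(v)+(1-p_1(\bx))\varphi(-v)$ over $v\in\bbR\cup\{\pm\infty\}$. So I would first reduce the statement to showing: for each $p\coloneq p_1(\bx)\in[0,1]$, the scalar minimizer $v^\star(p)$ of $L(v)\coloneq p\varphi(v)+(1-p)\varphi(-v)$ satisfies $\frac{1}{1+e^{-v^\star(p)}}=p$ for $\varphi\in\{\varphi_\LR,\varphi_{\MLS,\alpha},\varphi_\LSQ\}$, and $s_\alpha^{-1}\!\bigl(\frac{1}{1+e^{-v^\star(p)}}\bigr)=p$ for $\varphi=\varphi_{\LS,\alpha}$.

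For the next step I would recognize that $L(v)$ is exactly (up to a $v$-independent constant, as in \eqref{eq:regularview}) a divergence between the target and the model probability: writing $\sigma(v)\coloneq\frac{1}{1+e^{-v}}$, one has $L_{\LR}(v)=D_\KL(\bp\|\bq_\LM((v,0)^\top))+\text{const}$ with $\bp=(p,1-p)^\top$; $L_{\MLS,\alpha}(v)=D_{\SKL,\alpha}(\bp\|\bq_\LM((v,0)^\top))+\text{const}$; $L_{\LSQ}(v)=D_\SQ(\bp\|\bq_\LM((v,0)^\top))+\text{const}$; and $L_{\LS,\alpha}(v)=D_{\SKL,\alpha}(\bp\|\bq_{\RLM,\alpha}((v,0)^\top))+\text{const}$. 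Then the consistency properties already established—\eqref{eq:KL-consis} for KL, \hyperlink{B3}{B3} of Theorem~\ref{thm:RLM-SKL} for SKL (valid for $\alpha\in[0,1)$), and the elementary fact \eqref{eq:SKL-consis}-analogue $\argmin_{\bq}D_\SQ(\bp\|\bq)=\bp$ for LSQ—immediately give that the minimizing model probability equals $\bp$. For LR, MLSLR, LSQLR the model is $\bq_\LM$, so $\sigma(v^\star)=p$; for LSLR the model is $\bq_{\RLM,\alpha}=\bs_\alpha^{-1}\circ\bq_\LM$, so $\bs_\alpha^{-1}(\sigma(v^\star))=p$, i.e.\ $\sigma(v^\star)=s_\alpha(p)$. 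This is precisely the claimed identities.

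The main obstacle is handling the boundary cases $p\in\{0,1\}$, where the minimizer is attained at $v^\star=\pm\infty$ and the divergences are improper; here I would argue directly that $L(v)$ is monotone (e.g.\ $L_\LR(v)\to 0$ as $v\to+\infty$ when $p=1$, and strictly decreasing), so the infimum is approached with $\sigma(v)\to 1=p$, matching the statement interpreted as a limit—and similarly $\bs_\alpha^{-1}(\sigma(v))\to\bs_\alpha^{-1}(1)=\frac{1-\alpha/2}{1-\alpha}\ne p$ for LSLR, which is consistent with \hyperlink{B6}{B6} noting that the LSLR logit diverges while its \emph{R-logit} probability estimate is the relevant consistent quantity only on the interior. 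A second, minor technical point is justifying the interchange of "minimize the integral" and "minimize the integrand pointwise": this is standard since $\varphi$ is continuous, bounded below by $0$, and the class $\{g:\bbR^d\to\bbR\}$ is unrestricted, so no measurability obstruction arises beyond choosing a measurable selection of $v^\star(p_1(\bx))$, which exists because $v^\star$ is a continuous (indeed explicit) function of $p$ on $(0,1)$. Apart from these, the argument is a direct application of the already-stated consistency lemmas, so I would keep the calculation of $L(v)=\text{divergence}+\text{const}$ as the only explicit computation and cite \eqref{eq:KL-consis}, \hyperlink{B3}{B3}, and the squared-error consistency for the rest.
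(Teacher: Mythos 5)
Your main argument is correct and is essentially the paper's own route: reduce to pointwise minimization of the conditional risk, identify $L(v)$ (up to a $v$-independent constant) with $D_\KL$, $D_{\SKL,\alpha}$, or $D_\SQ$ between $(p,1-p)^\top$ and the logit or R-logit model, and invoke the consistency properties \eqref{eq:KL-consis} and Theorem~\ref{thm:RLM-SKL}, \hyperlink{B3}{B3}; the paper obtains the corollary exactly this way, as the binary case of Proposition~\ref{prop:ProbPred2}, which it derives from \hyperlink{B3}{B3}. Your divergence identifications for all four losses check out (for LSQ there is also a harmless positive multiplicative factor).

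The one genuine error is your boundary-case aside for LSLR, which has \hyperlink{B5}{B5} and \hyperlink{B6}{B6} backwards. At $p=1$ the LSLR conditional risk is $-(1-\tfrac{\alpha}{2})\ln\tfrac{1}{1+e^{-v}}-\tfrac{\alpha}{2}\ln\tfrac{1}{1+e^{v}}$, which is not monotone: it diverges as $v\to+\infty$ and attains its unique minimum at the finite point $v=\ln\tfrac{2-\alpha}{\alpha}$, where $\tfrac{1}{1+e^{-v}}=1-\tfrac{\alpha}{2}$ and hence $s_\alpha^{-1}\bigl(1-\tfrac{\alpha}{2}\bigr)=1=p$. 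This is precisely \hyperlink{B6}{B6} (logit squeezing): LSLR remains exactly consistent at degenerate $p$, with the minimizer attained, so your claim that $s_\alpha^{-1}\bigl(\tfrac{1}{1+e^{-v}}\bigr)\to s_\alpha^{-1}(1)=\tfrac{1-\alpha/2}{1-\alpha}\neq p$ along the minimizing sequence is false. It is LR, MLSLR, and LSQLR whose minimizing logit diverges when $p\in\{0,1\}$ (\hyperlink{B5}{B5}); only for these does the argmin over real-valued $g$ fail to be attained and the statement need the limiting interpretation you describe (a point the paper glosses over as well). Dropping or correcting that sentence, and keeping your measurable-selection remark, yields a complete proof.
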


Also, it can be found that surrogate loss $\phi$ is properly designed 
for the task loss $\ell_\zo$ under the labeling function $h_{\ell_\zo}$.
\begin{corollary}
\label{cor:ClasCali}
Assume $\alpha\in[0,1)$,
and let $\bar{g}=\argmin_{g:\bbR^d\to\bbR}\calR_\sur(g; \phi)$.
Then, regardless of the distribution of $(\bX,Y)$,
$\calR_\tsk(h_{\ell_\zo}\circ\bar{g};\ell_\zo)=\inf_{f:\bbR^d\to[K]}\calR_\tsk(f;\ell_\zo)$
for LR, LSLR, MLSLR, and LSQLR.
\end{corollary}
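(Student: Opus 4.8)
The plan is to obtain the statement directly from the probability-estimation consistency already recorded in Corollary~\ref{cor:ProbPred}, rather than going through the general machinery of classification-calibrated surrogate losses, which would be awkward for the non-convex Savage loss $\varphi_\LSQ$. First I would record the elementary facts that, for $K=2$ and $\alpha\in[0,1)$, the smoothing map $s_\alpha(v)=(1-\alpha)v+\frac{\alpha}{2}$ is strictly increasing with $s_\alpha(\frac12)=\frac12$ and $s_\alpha([0,1])=[\frac{\alpha}{2},1-\frac{\alpha}{2}]$, so $s_\alpha^{-1}$ is strictly increasing on $[\frac{\alpha}{2},1-\frac{\alpha}{2}]$ and also fixes $\frac12$; and that the logistic link $v\mapsto\frac{1}{1+e^{-v}}$ is strictly increasing with value $\frac12$ at $v=0$.

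Next I would take $\bar{g}\in\argmin_{g:\bbR^d\to\bbR}\calR_\sur(g;\phi)$ and use Corollary~\ref{cor:ProbPred}: $\frac{1}{1+e^{-\bar{g}(\bx)}}=p_1(\bx)$ a.s.\ for LR, MLSLR, and LSQLR, and $s_\alpha^{-1}(\frac{1}{1+e^{-\bar{g}(\bx)}})=p_1(\bx)$ a.s.\ for LSLR, which in the last case forces $\frac{1}{1+e^{-\bar{g}(\bx)}}=s_\alpha(p_1(\bx))\in[\frac{\alpha}{2},1-\frac{\alpha}{2}]$ and hence $\bar{g}(\bx)$ finite a.s. Applying the strictly increasing maps above, which fix $\frac12$, then gives in every case $\bar{g}(\bx)>0\iff p_1(\bx)>\frac12$ and $\bar{g}(\bx)=0\iff p_1(\bx)=\frac12$, a.s.; for LR, MLSLR, and LSQLR one moreover allows $\bar{g}(\bx)=\pm\infty$ on $\{p_1(\bX)\in\{1,0\}\}$, where the sign is still as stated. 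Thus $h_{\ell_\zo}(\bar{g}(\bx))=+1$ on $\{p_1(\bX)>\frac12\}$ and $=-1$ on $\{p_1(\bX)<\frac12\}$, a.s.

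Finally I would invoke the standard lower bound $\calR_\tsk(f;\ell_\zo)=\bbE_\bX[p_1(\bX)\,\mathbbm{1}(f(\bX)\ne+1)+(1-p_1(\bX))\,\mathbbm{1}(f(\bX)\ne-1)]\ge\bbE_\bX[\min\{p_1(\bX),1-p_1(\bX)\}]$, valid for every $f:\bbR^d\to[K]$, with equality for any $f$ that a.s.\ outputs $+1$ on $\{p_1>\frac12\}$ and $-1$ on $\{p_1<\frac12\}$ — its value on the tie set $\{p_1=\frac12\}$ being irrelevant since both labels incur conditional risk $\frac12$ there. Since $h_{\ell_\zo}\circ\bar{g}$ is such a classifier by the previous step, $\calR_\tsk(h_{\ell_\zo}\circ\bar{g};\ell_\zo)=\bbE_\bX[\min\{p_1(\bX),1-p_1(\bX)\}]=\inf_{f:\bbR^d\to[K]}\calR_\tsk(f;\ell_\zo)$, which is the claim; all four methods are covered simultaneously because the only method-dependent ingredient is the pointwise identity from Corollary~\ref{cor:ProbPred}. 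I expect no genuine obstacle: the one thing to do carefully is the measure-theoretic bookkeeping on the null-effect sets $\{p_1(\bX)\in\{0,1\}\}$ and $\{p_1(\bX)=\frac12\}$. An independent route would instead verify that each $\varphi$ in \eqref{eq:SurrogateLoss} is classification-calibrated and apply the surrogate-to-task risk transfer of Bartlett, Jordan and McAuliffe, but this requires the calibration criterion for general, possibly non-convex, margin losses and is less self-contained than the argument above.
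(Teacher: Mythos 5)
Your argument is correct, but it is not the route the paper indicates for this corollary: in the body the paper derives Corollary~\ref{cor:ClasCali} from the fact that each $\varphi$ in \eqref{eq:SurrogateLoss} is classification calibrated, invoking the surrogate-to-task risk transfer of Bartlett, Jordan and McAuliffe \cite{bartlett2006convexity} (and \cite{pires2013cost} for the generalizations), whereas you argue by the plug-in/Bayes-classifier route: use Corollary~\ref{cor:ProbPred} to identify the sign of $\bar{g}$ with the sign of $p_1-\tfrac12$ (via the strictly increasing links and the fact that $s_\alpha$ fixes $\tfrac12$ for $\alpha\in[0,1)$), then compare with the pointwise lower bound $\bbE_\bX[\min\{p_1(\bX),1-p_1(\bX)\}]$. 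Interestingly, your route is essentially the one the paper itself uses in Appendix~\ref{sec:ProCla}, where Proposition~\ref{prop:ClasCali2} (the multi-class, cost-sensitive generalization) is obtained ``trivially'' from Proposition~\ref{prop:ProbPred2} and the form $\calR_\tsk(f;\ell)=\bbE_\bX[\sum_k\Pr(Y=k|\bX)\ell(f(\bX),k)]$; so your proof is the binary specialization of that appendix argument. The trade-off is as you suspect: your approach is self-contained, needs no calibration check for the non-convex Savage loss, and exploits the exact identity of Corollary~\ref{cor:ProbPred}, but it only covers exact minimizers; the calibration route additionally yields quantitative excess-risk transfer (via the $\psi$-transform) and consistency along minimizing sequences, which is why the paper prefers to cite it. The only caveat in your write-up --- the sets where $p_1(\bx)\in\{0,1\}$, on which the real-valued minimizer is not attained and one must interpret $\bar{g}(\bx)=\pm\infty$ --- is inherited from the statement of Corollary~\ref{cor:ProbPred} itself and does not affect the sign argument or the conclusion.
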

This result can be proved from the fact that the loss 
$\varphi$ is classification calibrated; 
Refer to \cite{bartlett2006convexity}.
Also, this result can be generalized to the multi-class cases ($K>2$) 
and cost-sensitive tasks ($\ell\neq\ell_\zo$) \cite{pires2013cost}.

These properties form an important basis for analysis of the performance in 
probability estimation and classification tasks 
with an empirical surrogate risk minimizer, 
which are the subjects below.
These results indicate that the methods have no difference in the limit of the performances, 
and suggest that we should discuss their estimation performance (error) 
and the adequacy of the models in more specific settings.

\subsection{Lower Efficiency with Correctly-Specified Models}
\label{sec:Efficiency}
\subsubsection{Theories on Asymptotic Behaviors}
\label{sec:AB}
In order to make a detailed comparison, 
we focus in this section on a specific case where 
the data are distributed in association with a certain linear model and 
methods adopt a linear learner class $\calG$ (the correctly-specified model).
Namely, assuming that the true conditional positive probability is 
$p_1(\bx)=\frac{1}{1+e^{-\tilde{\bbeta}^\top\bx}}$ for any $\bx\in\bbR^d$,
we study the estimation result of the true parameter $\tilde{\bbeta}$ 
by LR, MLSLR, and LSQLR that use the same logit model with the learner class 
$\calG=\{\bg(\cdot)=\bbeta^\top\cdot\mid\bbeta\in\bbR^d\}$ as a probability estimator.
Since LSLR adopts a consistent probability estimator 
(the R-logit model) different from the others, 
it will not give a consistent parameter estimate of 
$\tilde{\bbeta}$ under the above-mentioned setting.
Thus, it is difficult to perform fair comparisons of LSLR with the other methods,
and so we here consider only LR and MLSLR including LSQLR.
The only difference between these compared methods lies in their loss functions.

With the correctly-specified model, one can show consistency, asymptotic normality, and 
asymptotic mean squared error (AMSE) of an empirical parameter estimate defined by
\begin{align}
\label{eq:estimate}
	\hat{\bbeta}_n\coloneq\underset{\bbeta\in\bbR^d}{\argmin}\;
	\frac{1}{n}\sum_{i=1}^n\phi(\bbeta^\top\bx_i, y_i),
\end{align}
on the basis of well-established theories for 
generalized linear models (see \cite[Section 3]{fahrmeir1985consistency} for LR) or 
M-estimators (see \cite{bianco1996robust} or \cite[Chapter 6]{huber2004robust} for MLSLR and LSQLR).
\begin{theorem}
\label{thm:consistency}
For LR, MLSLR, or LSQLR, assume $\alpha\in[0,1)$,
\begin{enumerate}
\item[{\hypertarget{C1}{C1}.}]
$\phi$ is a surrogate loss function defined by \eqref{eq:SurrogateLoss}.
\item[{\hypertarget{C2}{C2}.}]
$\Pr(\bbeta^\top\bX=0)=0$ for any $\bbeta\in\bbR^d$ such that $\bbeta\neq\bzero$.
\item[{\hypertarget{C3}{C3}.}]
$p_1(\bx)=\frac{1}{1+e^{-\tilde{\bbeta}^\top\bx}}$ for any $\bx\in\bbR^d$ and some $\tilde{\bbeta}\in\bbR^d$.
\end{enumerate}
Then, $\hat{\bbeta}_n$ defined by \eqref{eq:estimate} converges almost surely to $\tilde{\bbeta}$.
\end{theorem}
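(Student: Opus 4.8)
The plan is to verify the standard M-estimation consistency hypotheses and invoke the classical result (e.g.\ \cite[Chapter 6]{huber2004robust} or \cite{bianco1996robust}), so the work is really in checking three things: that the population risk $\calR_\sur(\bbeta) \coloneq \bbE_{(\bX,Y)}[\varphi(Y\bbeta^\top\bX)]$ is minimized uniquely at $\tilde{\bbeta}$, that $\varphi$ is convex and smooth enough for the minimizer to be well-behaved, and that a uniform law of large numbers applies so that $\hat{\bbeta}_n \to \tilde{\bbeta}$ a.s. First I would record that each $\varphi$ in \eqref{eq:SurrogateLoss} is convex, $C^1$ (indeed $C^\infty$), with $\varphi'$ bounded (for $\varphi_\LR$, $\varphi_{\MLS,\alpha}$, $\varphi_\LSQ$ one checks $|\varphi'|\le C_\alpha<\infty$ on $\bbR$; the $\MLS$ case uses $\alpha\in[0,1)$ so the argument of each $\ln$ stays bounded away from $0$). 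Consequently $\bbeta\mapsto\varphi(y\bbeta^\top\bx)$ is convex in $\bbeta$, and $\calR_\sur$ is convex; by \hyperlink{C2}{C2} the Hessian is positive definite wherever it exists on the support directions, so any minimizer is unique once we show one exists.

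Next I would identify the unique population minimizer with $\tilde{\bbeta}$. Because $\varphi$ is differentiable and convex, $\tilde{\bbeta}$ minimizes $\calR_\sur$ iff the first-order condition $\bbE[\bX\,\partial_v\phi(\tilde{\bbeta}^\top\bX,Y)]=\bzero$ holds. Conditioning on $\bX=\bx$, this reduces to: the scalar function $v\mapsto p_1(\bx)\varphi(v)+(1-p_1(\bx))\varphi(-v)$ is stationary at $v=\tilde{\bbeta}^\top\bx$. For each of the three losses this is exactly the statement, already delivered by Corollary~\ref{cor:ProbPred} (and the proper/consistent-estimator property of the KL-, SKL-, and squared-distance divergences in Theorem~\ref{thm:RLM-SKL}, \hyperlink{B3}{B3}): with the logit model the population minimizer of each loss recovers $p_1(\bx)=1/(1+e^{-\tilde{\bbeta}^\top\bx})$ pointwise, hence $v=\tilde{\bbeta}^\top\bx$ is the conditional minimizer a.s. Under \hyperlink{C3}{C3} the logit model is correctly specified, so $\bbeta=\tilde{\bbeta}$ simultaneously attains the conditional minimizer for a.e.\ $\bx$, therefore it attains the unconstrained minimum of $\calR_\sur$; convexity plus \hyperlink{C2}{C2} makes it the unique such point (any other $\bbeta$ differs on a set of $\bx$ of positive measure by \hyperlink{C2}{C2}, strictly increasing the conditional risk there).

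Finally I would push this to the empirical minimizer. With $\varphi$ convex, the empirical objective $M_n(\bbeta)\coloneq\frac1n\sum_i\varphi(y_i\bbeta^\top\bx_i)$ is convex, and by the strong law of large numbers $M_n(\bbeta)\to\calR_\sur(\bbeta)$ a.s.\ for each fixed $\bbeta$; pointwise a.s.\ convergence of convex functions upgrades automatically to a.s.\ uniform convergence on compact sets (the standard convexity lemma, e.g.\ Rockafellar / Andersen--Gill, as used in \cite{huber2004robust}). Combined with the unique well-separated minimizer $\tilde{\bbeta}$ of the limit and a coercivity check---$\calR_\sur(\bbeta)\to\infty$ as $\|\bbeta\|\to\infty$, which again follows from \hyperlink{C2}{C2} and the fact that each $\varphi$ grows (at least linearly for $\varphi_\LR,\varphi_{\MLS,\alpha}$; $\varphi_\LSQ$ is bounded, so there one instead argues directly that the positive-definite limiting Hessian forces $\hat{\bbeta}_n$ to stay in a compact neighborhood)---the argmax/argmin consistency theorem gives $\hat{\bbeta}_n\to\tilde{\bbeta}$ a.s.

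The main obstacle I anticipate is the $\varphi_\LSQ$ (Savage-loss) case, and more generally making the ``coercivity / the minimizer stays bounded'' step airtight: $\varphi_\LSQ$ is bounded and has vanishing derivative as $v\to+\infty$, so $\calR_\sur$ need not be coercive and one cannot argue existence of the empirical minimizer by a crude compactness argument; instead the argument must lean on the correct specification (so the population risk is strictly convex near $\tilde{\bbeta}$ with positive-definite Hessian, by \hyperlink{C2}{C2}) together with uniform convergence to localize $\hat{\bbeta}_n$. Relatedly, for $\varphi_{\MLS,\alpha}$ one must keep track of the restriction $\alpha\in[0,1)$ throughout, since at $\alpha=1$ the SKL-divergence degenerates (Theorem~\ref{thm:RLM-SKL}, \hyperlink{B3}{B3}) and the loss loses strict convexity; for $\alpha$ bounded away from $1$ all the constants above are finite and the argument goes through uniformly.
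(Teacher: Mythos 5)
Your identification step (the population minimizer is $\tilde{\bbeta}$, via Fisher consistency of the logit model for each loss) is sound in spirit, but the backbone of your argument rests on a false premise: convexity. For $\alpha\in(0,1)$ the loss $\varphi_{\MLS,\alpha}$ is \emph{bounded} on $\bbR$ (its limits as $v\to\pm\infty$ are finite constants; this boundedness of the $\rho$-transformation is exactly what Section~\ref{sec:Similarity} credits for the robustness of MLSLR), and $\varphi_\LSQ$ is likewise bounded, with $\varphi_\LSQ''(v)\propto e^v(2e^v-1)(1+e^v)^{-4}<0$ for $v<-\ln 2$. A bounded nonconstant function on $\bbR$ cannot be convex, so your claims that ``each $\varphi$ in \eqref{eq:SurrogateLoss} is convex,'' that $\calR_\sur$ is convex, and that $\varphi_{\MLS,\alpha}$ grows at least linearly are wrong precisely for the two methods the theorem is really about (only $\varphi_\LR$ is convex and coercive). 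Consequently the pieces you lean on---the convexity lemma upgrading pointwise SLLN to uniform convergence, uniqueness of the minimizer from the first-order condition, and coercivity of $\calR_\sur$ for MLSLR---all collapse for MLSLR and LSQLR. The difficulty you flag at the end for $\varphi_\LSQ$ is therefore not a corner case: it applies equally to $\varphi_{\MLS,\alpha}$, and your proposal does not resolve it.

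The identification part can be repaired without convexity (conditionally on $\bX=\bx$ the excess risk is the SKL divergence between the true conditional distribution and the model, which by Theorem~\ref{thm:RLM-SKL}, \hyperlink{B3}{B3} is zero iff they coincide; \hyperlink{C2}{C2} and \hyperlink{C3}{C3} then force $\bbeta=\tilde{\bbeta}$ as the unique global minimizer), but the localization/uniform-convergence step needs machinery for bounded, non-convex M-estimators. That is in fact the paper's route: it invokes \cite{fahrmeir1985consistency} for LR, and for MLSLR and LSQLR it argues as in \cite[Theorems 2.3 and 2.4]{bianco1996robust}, which builds on \cite[Chapter 6]{huber2004robust}, where it is the boundedness and continuity of the loss together with assumption \hyperlink{C2}{C2} (no mass on hyperplanes, controlling the behavior of the risk as $\|\bbeta\|\to\infty$) that yield almost-sure convergence---not convexity. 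To make your proof correct you would have to replace the convexity-based steps by that argument (or an equivalent Wald-type compactification argument for bounded losses).
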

\begin{theorem}
\label{thm:normality}
Assume \hyperlink{C1}{C1}--\hyperlink{C3}{C3} in Theorem \ref{thm:consistency}, $\alpha\in[0,1)$, and
\begin{enumerate}
\item[{\hypertarget{C4}{C4}.}]
$\bbE_\bX[\|\bX\|^2]<\infty$ for LR, 
or $\bbE_\bX[\|\bX\|^3]<\infty$ for MLSLR and LSQLR.
\end{enumerate}
Then, for $\hat{\bbeta}_n$ defined by \eqref{eq:estimate}, 
$\sqrt{n}(\hat{\bbeta}_n-\tilde{\bbeta})$ converges in distribution to 
a $d$-dimensional normal distribution with mean $\bzero$ and 
covariance matrix $\rmC=\rmB^{-1}\rmA\rmB^{-1}$ with
\begin{align}
\label{eq:rmAB}
	\begin{split}
	&\rmA=\bbE_{(\bX,Y)}\bigl[\nabla\phi(\tilde{\bbeta}^\top\bX,Y)\nabla\phi(\tilde{\bbeta}^\top\bX,Y)^\top \bigr],\\
	&\rmB=\bbE_{(\bX,Y)}\bigl[\nabla^2\phi(\tilde{\bbeta}^\top\bX,Y)\bigr],
	\end{split}
\end{align}
where $\nabla$ and $\nabla^2$ are respectively the nabla 
and Hessian operator with respect to the model parameter,
and where $\rmA$ and $\rmB$ for LR, MLSLR, and LSQLR are
\begin{align}
\label{eq:rmABs}
	\rmA_\LR&=\rmB_\LR=\bbE_\bX\bigl[p_1(\bX)\{1-p_1(\bX)\}\bX\bX^\top\bigr],\nonumber\\
	\rmA_{\MLS,\alpha}&=\bbE_\bX\Bigl[\tfrac{\{p_1(\bX)\}^3\{1-p_1(\bX)\}^3\bX\bX^\top}{\{p_1(\bX)-\alpha(p_1(\bX)-\frac{1}{2})\}^2\{1-p_1(\bX)+\alpha(p_1(\bX)-\frac{1}{2})\}^2}\Bigr],\nonumber\\
	\rmB_{\MLS,\alpha}&=\bbE_\bX\Bigl[\tfrac{\{p_1(\bX)\}^2\{1-p_1(\bX)\}^2\bX\bX^\top}{\{p_1(\bX)-\alpha(p_1(\bX)-\frac{1}{2})\}\{1-p_1(\bX)+\alpha(p_1(\bX)-\frac{1}{2})\}}\Bigr],\nonumber\\
	\rmA_\LSQ&=\bbE_\bX\bigl[\{p_1(\bX)\}^3\{1-p_1(\bX)\}^3\bX\bX^\top\bigr],\nonumber\\
	\rmB_\LSQ&=\bbE_\bX\bigl[\{p_1(\bX)\}^2\{1-p_1(\bX)\}^2\bX\bX^\top\bigr].
\end{align}
\end{theorem}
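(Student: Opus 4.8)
The plan is to apply a standard M-estimator asymptotic normality theorem (as in \cite{bianco1996robust} or \cite[Chapter 6]{huber2004robust}): given that $\hat{\bbeta}_n\to\tilde{\bbeta}$ a.s.\ by Theorem~\ref{thm:consistency}, that $\tilde{\bbeta}$ is the unique zero of $\bbE_{(\bX,Y)}[\nabla\phi(\bbeta^\top\bX,Y)]$, that the map $\bbeta\mapsto\phi(\bbeta^\top\bx,y)$ is twice continuously differentiable, and that suitable dominating integrability conditions hold, one gets $\sqrt{n}(\hat{\bbeta}_n-\tilde{\bbeta})\to_d\calN(\bzero,\rmB^{-1}\rmA\rmB^{-1})$ with $\rmA,\rmB$ as in \eqref{eq:rmAB}. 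So the proof reduces to (i) verifying the hypotheses of that theorem for each of the three losses under \hyperlink{C1}{C1}--\hyperlink{C4}{C4}, and (ii) explicitly computing $\rmA$ and $\rmB$ to obtain \eqref{eq:rmABs}.

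For step (i), the key subroutines are: first, a Taylor expansion of $\bbeta\mapsto\frac{1}{n}\sum_i\nabla\phi(\bbeta^\top\bx_i,y_i)$ around $\tilde{\bbeta}$, giving $\sqrt{n}(\hat{\bbeta}_n-\tilde{\bbeta})=-\bar{\rmB}_n^{-1}\,\frac{1}{\sqrt{n}}\sum_i\nabla\phi(\tilde{\bbeta}^\top\bx_i,y_i)$ for some mean-value Hessian average $\bar{\rmB}_n$; second, the CLT applied to the i.i.d.\ mean-zero terms $\nabla\phi(\tilde{\bbeta}^\top\bx_i,y_i)$ (mean zero because $\tilde{\bbeta}$ minimizes the population surrogate risk, by Corollary~\ref{cor:ProbPred} / first-order optimality); third, a uniform law of large numbers giving $\bar{\rmB}_n\to\rmB$ in probability; fourth, Slutsky. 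The integrability needed for these is exactly what \hyperlink{C4}{C4} encodes. Note $\nabla\phi(v,y)=\varphi'(yv)\,y\bx$ and $\nabla^2\phi(v,y)=\varphi''(yv)\,\bx\bx^\top$, so one needs $\bbE[\,|\varphi'(y\tilde{\bbeta}^\top\bX)|^2\|\bX\|^2]<\infty$ and $\bbE[\,|\varphi''(y\tilde{\bbeta}^\top\bX)|\,\|\bX\|^2]<\infty$, plus a local Lipschitz bound on $\varphi''$ near the true value to control the mean-value remainder. For LR, $\varphi_\LR'$ and $\varphi_\LR''$ are bounded (by $1$ and $\frac14$), so $\bbE[\|\bX\|^2]<\infty$ suffices. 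For MLSLR and LSQLR, $\varphi'$ is bounded but $\varphi''$ is not bounded near $v\to+\infty$ — one checks that $|\varphi_{\MLS,\alpha}''(v)|$ and $|\varphi_\LSQ''(v)|$ grow at most linearly in $|v|$ (hence in $\|\bX\|$ after substitution), and likewise the third-derivative-type Lipschitz modulus, which is why \hyperlink{C4}{C4} upgrades to $\bbE[\|\bX\|^3]<\infty$ for those two.

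For step (ii), the computation of $\rmA$ and $\rmB$: write $\rmA=\bbE_\bX\!\bigl[\bbE_{Y\mid\bX}[\varphi'(Y\tilde{\bbeta}^\top\bX)^2]\,\bX\bX^\top\bigr]$ and $\rmB=\bbE_\bX\!\bigl[\bbE_{Y\mid\bX}[\varphi''(Y\tilde{\bbeta}^\top\bX)]\,\bX\bX^\top\bigr]$, and use $\Pr(Y=+1\mid\bX=\bx)=p_1(\bx)=\sigma(\tilde{\bbeta}^\top\bx)$ with $\sigma(v)=1/(1+e^{-v})$. For LR one has $\varphi_\LR'(v)=-(1-\sigma(v))=-\sigma(-v)$ and $\varphi_\LR''(v)=\sigma(v)(1-\sigma(v))$, and the standard identities $p_1(1-p_1)^2+(1-p_1)p_1^2=p_1(1-p_1)$ yield $\rmA_\LR=\rmB_\LR=\bbE_\bX[p_1(1-p_1)\bX\bX^\top]$. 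For LSQLR, $\varphi_\LSQ(v)=\tfrac12\sigma(-v)^2$ gives $\varphi_\LSQ'(v)=-\sigma(-v)^2\sigma(v)$ and a short differentiation for $\varphi_\LSQ''$; plugging $p_1=\sigma(\tilde{\bbeta}^\top\bx)$ and taking the $Y\mid\bX$ average collapses the two $Y=\pm1$ contributions into the stated $p_1^3(1-p_1)^3$ and $p_1^2(1-p_1)^2$ weights. For MLSLR one differentiates $\varphi_{\MLS,\alpha}$ in \eqref{eq:SurrogateLoss}; writing $\sigma=\sigma(v)$, the argument of the first log is $(1-\alpha)\sigma+\tfrac{\alpha}{2}$ and its derivative in $v$ is $(1-\alpha)\sigma(1-\sigma)$, and after the $Y\mid\bX$ average the $\alpha$-dependent denominators $\{p_1-\alpha(p_1-\tfrac12)\}$ and $\{1-p_1+\alpha(p_1-\tfrac12)\}$ emerge exactly as in \eqref{eq:rmABs} — this is the one genuinely tedious algebraic manipulation, best organized by keeping track of $s_\alpha(\sigma)$ and $s_\alpha(1-\sigma)$ as the two denominators and using $p_1-\alpha(p_1-\tfrac12)=s_\alpha(p_1)$. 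The main obstacle is not conceptual but bookkeeping: correctly handling the unbounded second derivatives of the MLSLR and LSQLR losses so that the mean-value Hessian in the Taylor argument is uniformly integrable under the moment condition \hyperlink{C4}{C4}, which requires a careful local growth/Lipschitz bound on $\varphi''$ rather than the global boundedness available in the LR case.
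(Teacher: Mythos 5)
You take essentially the same route as the paper: the paper handles LR by citing the generalized-linear-model theory of Fahrmeir--Kaufmann and handles MLSLR and LSQLR via the M-estimation results of Bianco--Yohai and Huber, which is exactly the Taylor-expansion/CLT/ULLN/Slutsky scheme you outline, together with the direct conditional computation of $\rmA$ and $\rmB$. One correction to your step (i): your stated reason for the stronger moment condition is factually wrong --- $\varphi_{\MLS,\alpha}''$ (for $\alpha\in(0,1)$) and $\varphi_\LSQ''$ are \emph{bounded}, not linearly growing; the arguments of the logarithms in $\varphi_{\MLS,\alpha}$ lie in $[\tfrac{\alpha}{2},1-\tfrac{\alpha}{2}]$, and writing $\sigma(v)=1/(1+e^{-v})$ one gets $\varphi_\LSQ''(v)=\sigma(v)\{1-\sigma(v)\}^2\{3\sigma(v)-1\}$, which is bounded on $\bbR$. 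Indeed, the boundedness of these losses (and of their derivatives) is precisely the feature the paper exploits to invoke the robust-LR machinery; the requirement $\bbE_\bX[\|\bX\|^3]<\infty$ in \hyperlink{C4}{C4} enters through the regularity conditions of that M-estimator theory (e.g.\ controlling the mean-value Hessian remainder, whose Lipschitz modulus in $\bbeta$ scales as $\|\bX\|^3$ even with bounded third derivative of $\varphi$), not through any unboundedness of $\varphi''$. This mislocation does not invalidate your argument --- the integrability verification is in fact easier than you suggest --- but the justification should be restated accordingly.
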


In the large-sample limit $n\to\infty$, the AMSE of $\hat{\bbeta}_n$ becomes
\begin{align}
	n\bbE\bigl[\|\hat{\bbeta}_n-\tilde{\bbeta}\|^2\bigr]\to\trace(\rmC).
\end{align}
Therefore, the ratio $\trace(\rmC_\LR)/\trace(\rmC)$ (which is called asymptotic relative efficiency, ARE)
serves as an indicator of the estimation performance of a method corresponding to the matrix $\rmC$;
The smaller the ARE is, the lower the asymptotic efficiency of the method is
(a larger-size sample is needed to achieve the same level of estimation performance as LR).
Note that it can be theoretically found that LR gives asymptotically the most efficient estimate
by considering the Cram\'er-Rao bound since LR is a maximum likelihood method.

\begin{table}[t]
\centering
\renewcommand{\tabcolsep}{5pt}
\caption{%
ARE in estimation of $\tilde{\bbeta}$ under \eqref{eq:nominal}.
The lower it values, the less efficient it is.
Note that ARE is 1 if $\tilde{\beta}_2=0$.}
\label{tab:ARE}
\begin{tabular}{c|ccccc}
\toprule
\multirow{2}{*}{$\tilde{\bbeta}^\top$} & \multicolumn{4}{c}{MLSLR} & LSQLR \\
 & $\alpha=0.2$ & $\alpha=0.4$ & $\alpha=0.6$ & $\alpha=0.8$ & $\alpha\to1$ \\
\midrule
$(0,1)$ & .9815 & .9627 & .9496 & .9421 & .9396 \\
$(0,2)$ & .9043 & .8531 & .8239 & .8085 & .8036 \\
$(0,4)$ & .7815 & .7112 & .6749 & .6566 & .6510 \\
$(1,1)$ & .9604 & .9286 & .9085 & .8972 & .8937 \\
$(1,2)$ & .8844 & .8282 & .7969 & .7805 & .7754 \\
$(1,4)$ & .7760 & .7051 & .6688 & .6504 & .6447 \\
$(2,1)$ & .8915 & .8279 & .7916 & .7725 & .7665 \\
$(2,2)$ & .8281 & .7605 & .7248 & .7065 & .7009 \\
$(2,4)$ & .7605 & .6885 & .6518 & .6334 & .6277 \\
\bottomrule
\end{tabular}
\end{table}

Table \ref{tab:ARE} shows AREs of MLSLR and LSQLR in the example where 
the data with a 2-dimensional covariate follow the distribution $\tilde{F}$, 
in which the probability density at $(\bX,Y)=(\bx,+1)$ is given as a product of
\begin{align}
\label{eq:nominal}
	p_0(\bx)=\delta_{X_1}(1)\cdot\tfrac{1}{\sqrt{2\pi}}\exp\bigl(-\tfrac{1}{2}x_2^2\bigr),\quad
	p_1(\bx)=\tfrac{1}{1+e^{-\tilde{\bbeta}^\top\bx}}
\end{align}
with $\tilde{\beta}_1=0,1,2$, $\tilde{\beta}_2=1,2,4$,
where $\delta_\bZ(\bz)$ is a point mass distribution at $\bZ=\bz$.
%
It indicates that the asymptotic efficiency of MLSLR tends to decrease, 
as the smoothing level $\alpha$ increases to 1.

\subsubsection{Simulation Experiment}
\label{sec:ESimulation}
\begin{figure}[t]
\centering
\begin{tabular}{cc}
\includegraphics[width=4cm, bb=0 0 780 348]{./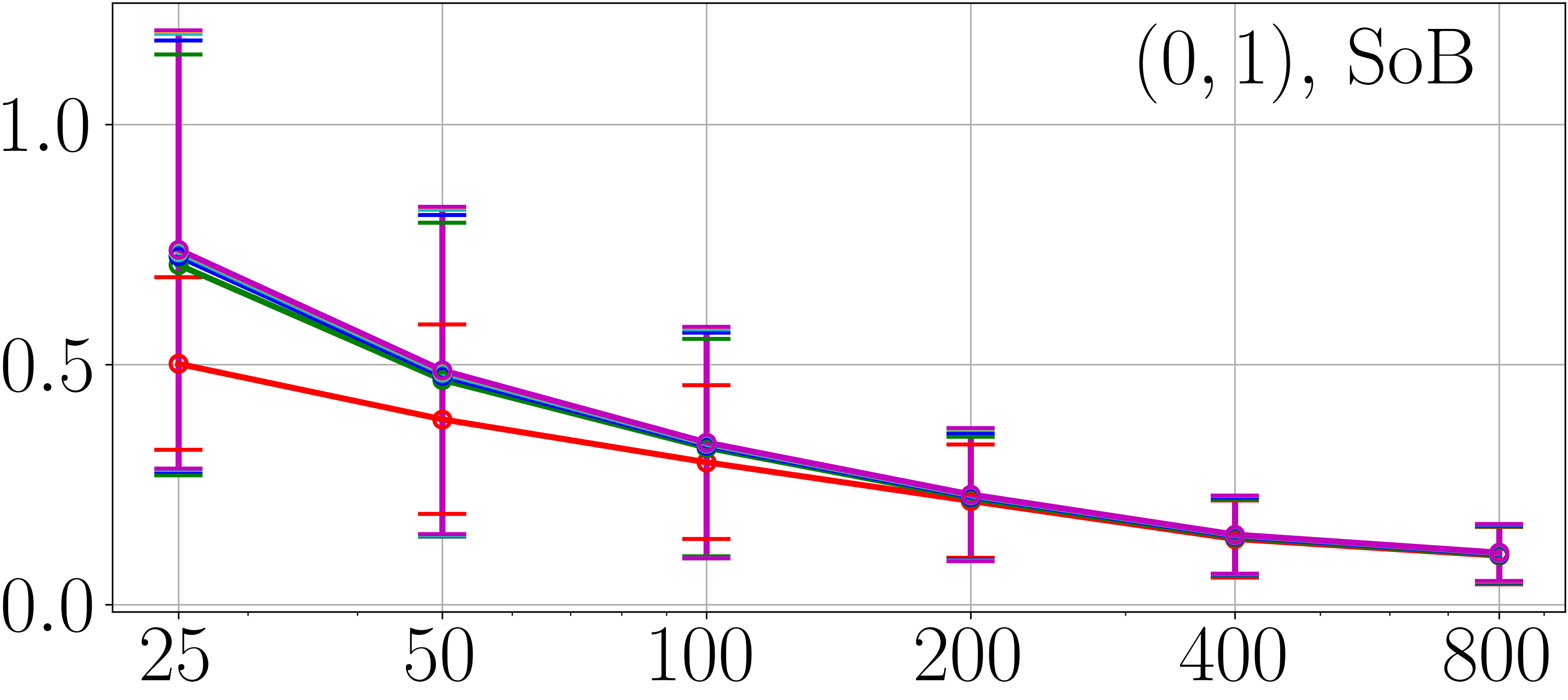}&
\includegraphics[width=4cm, bb=0 0 780 348]{./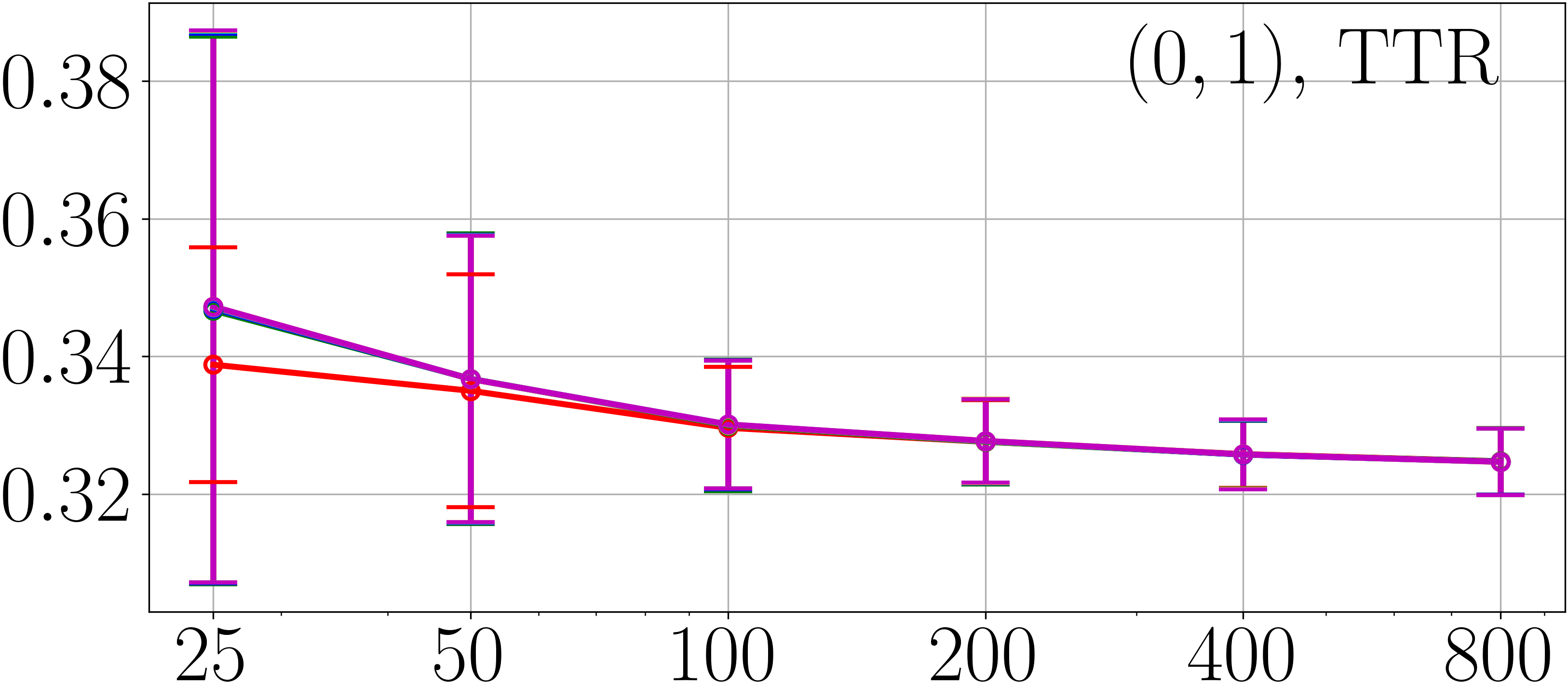}\\
\includegraphics[width=4cm, bb=0 0 780 348]{./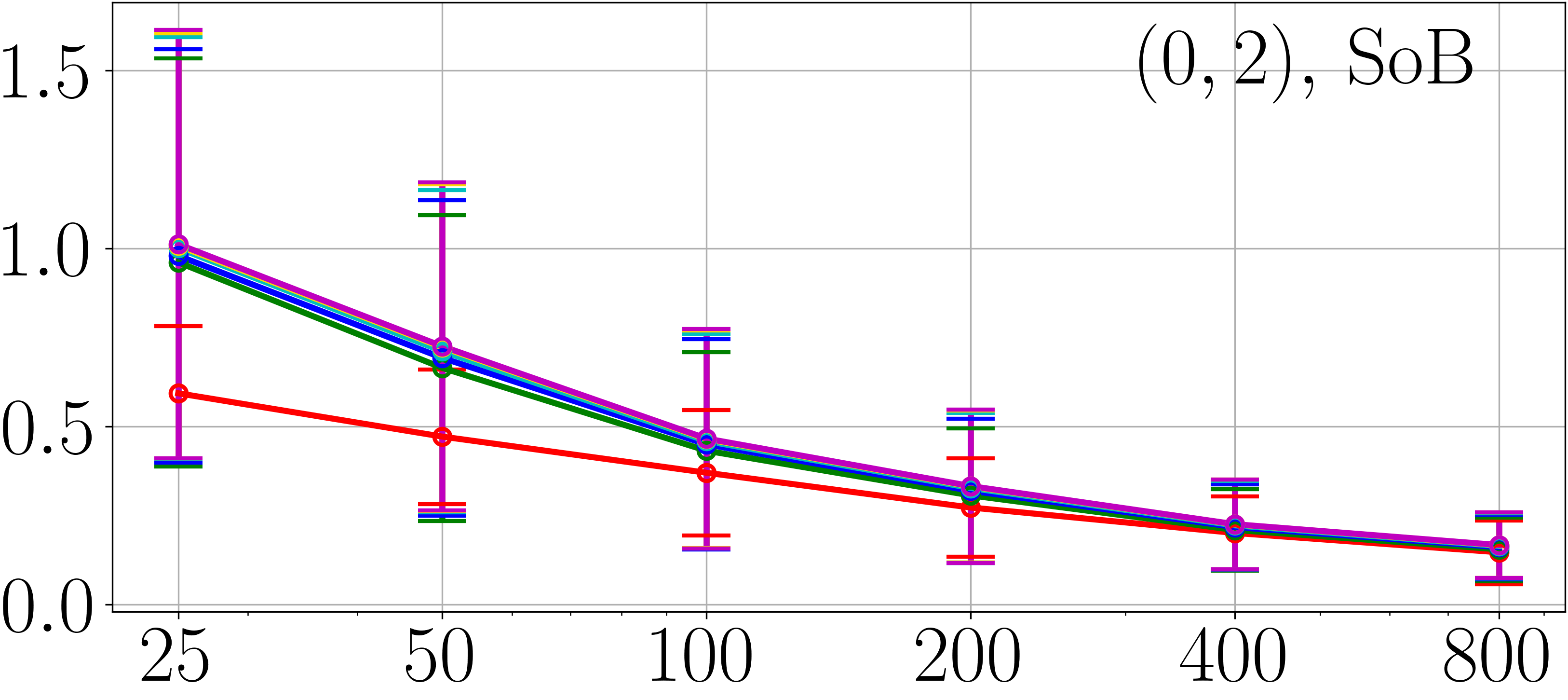}&
\includegraphics[width=4cm, bb=0 0 780 348]{./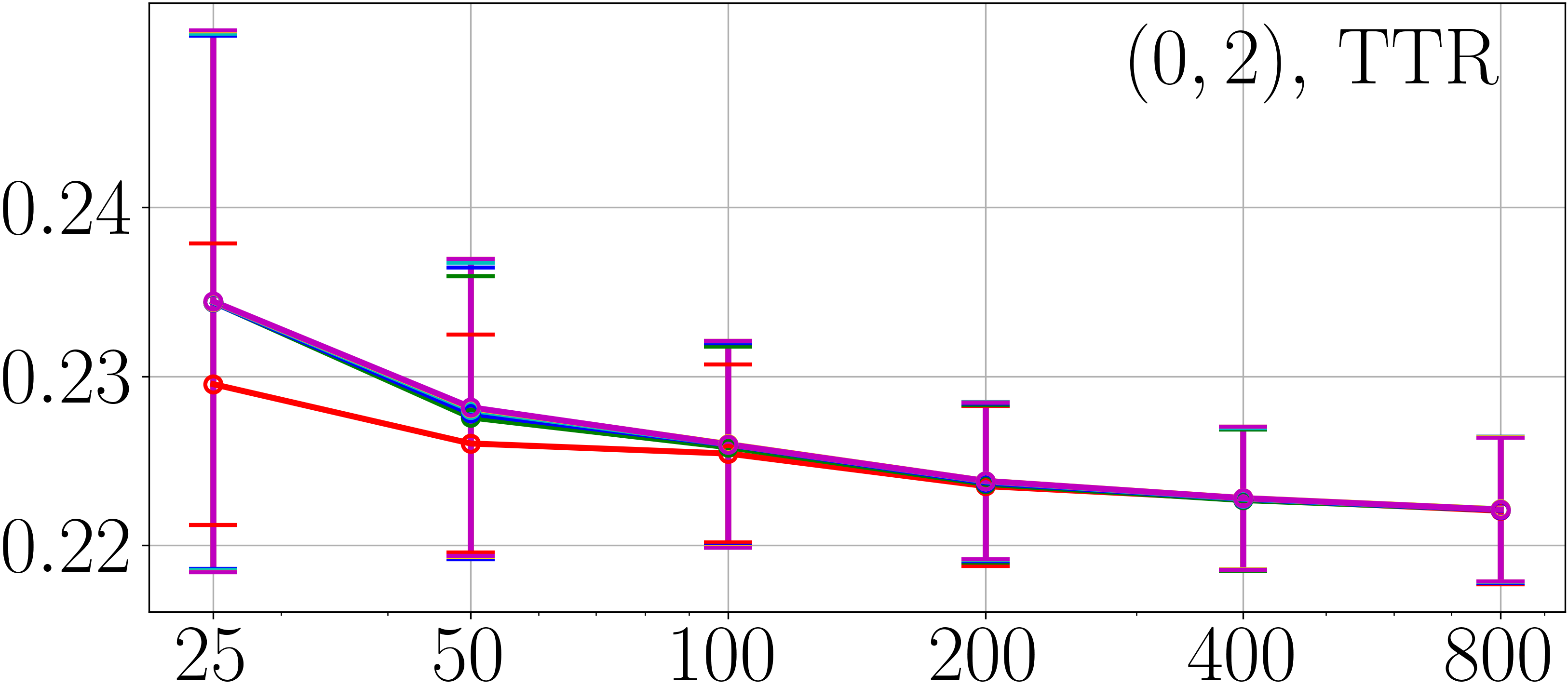}\\
\includegraphics[width=4cm, bb=0 0 780 348]{./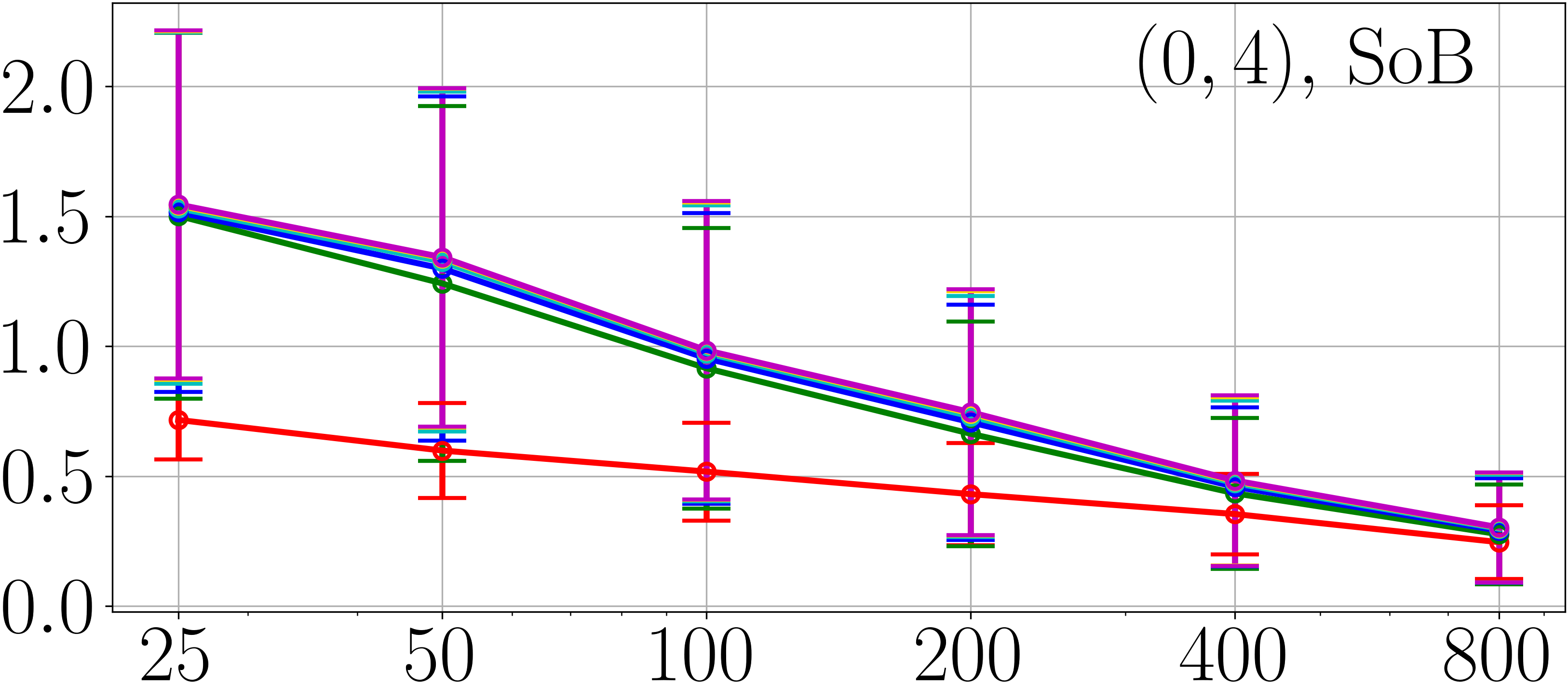}&
\includegraphics[width=4cm, bb=0 0 780 348]{./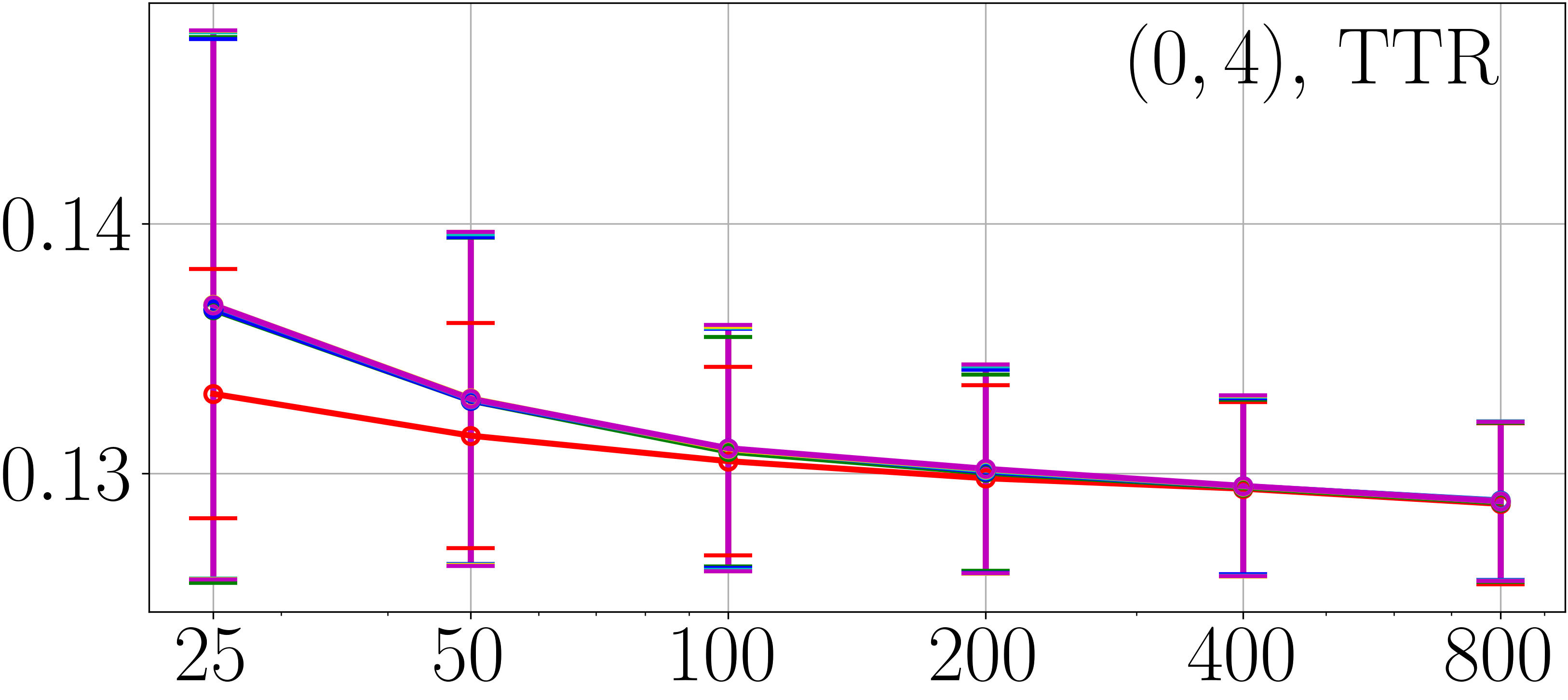}\\
\includegraphics[width=4cm, bb=0 0 780 348]{./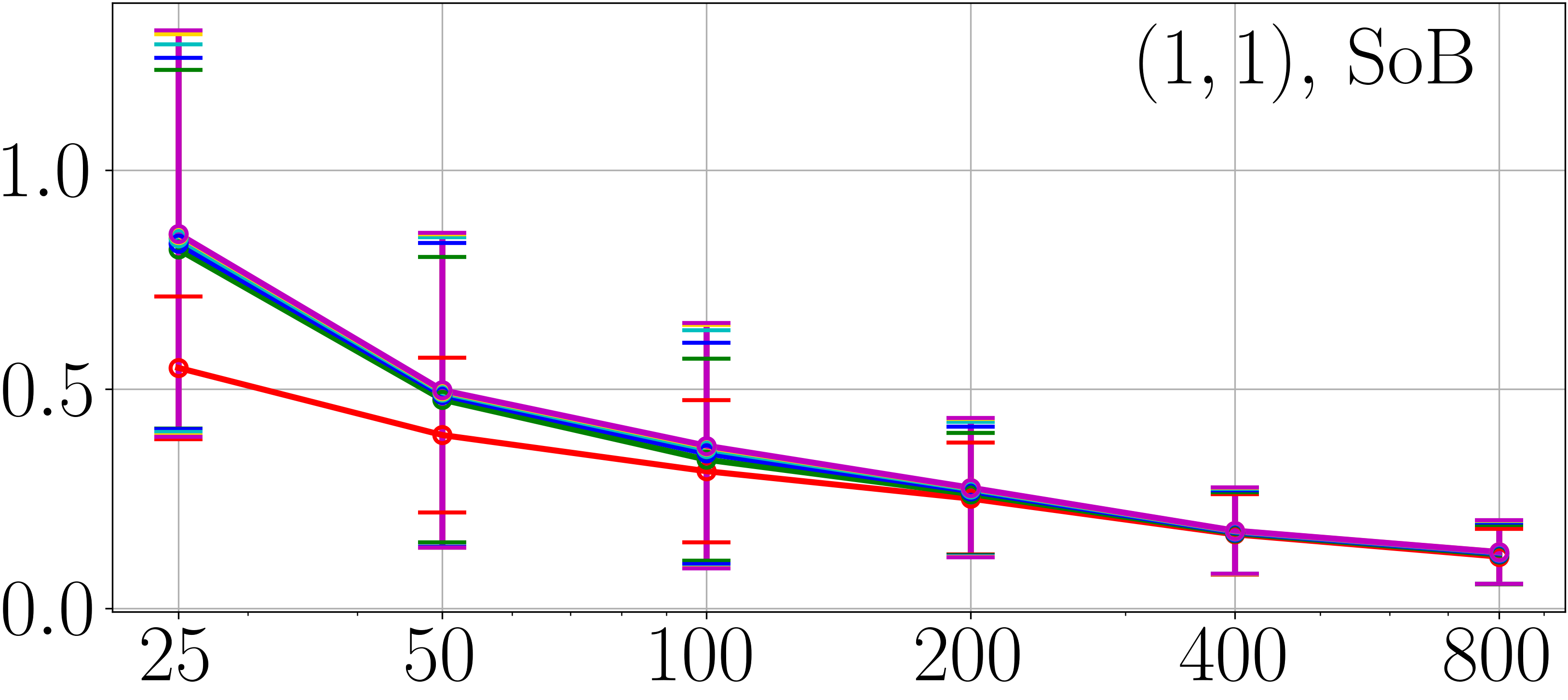}&
\includegraphics[width=4cm, bb=0 0 780 348]{./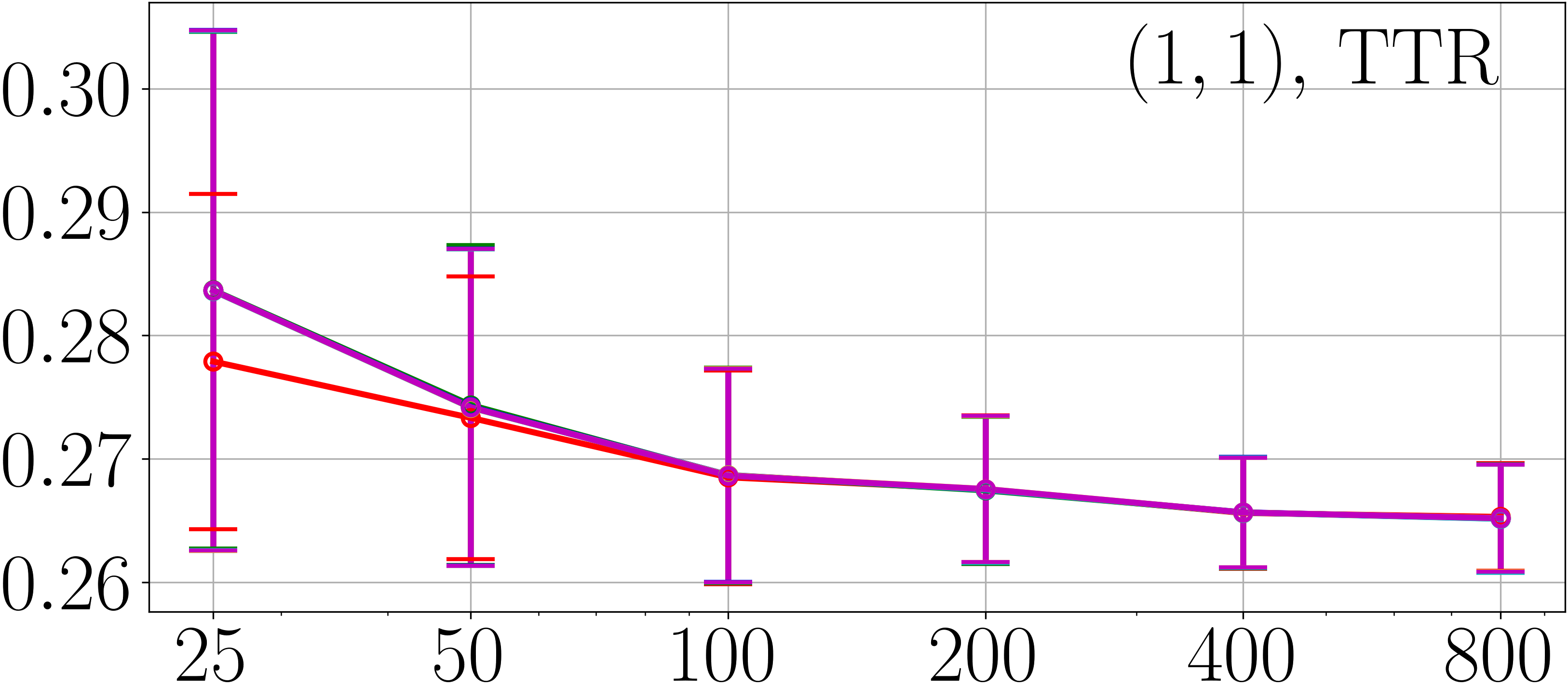}\\
\includegraphics[width=4cm, bb=0 0 780 348]{./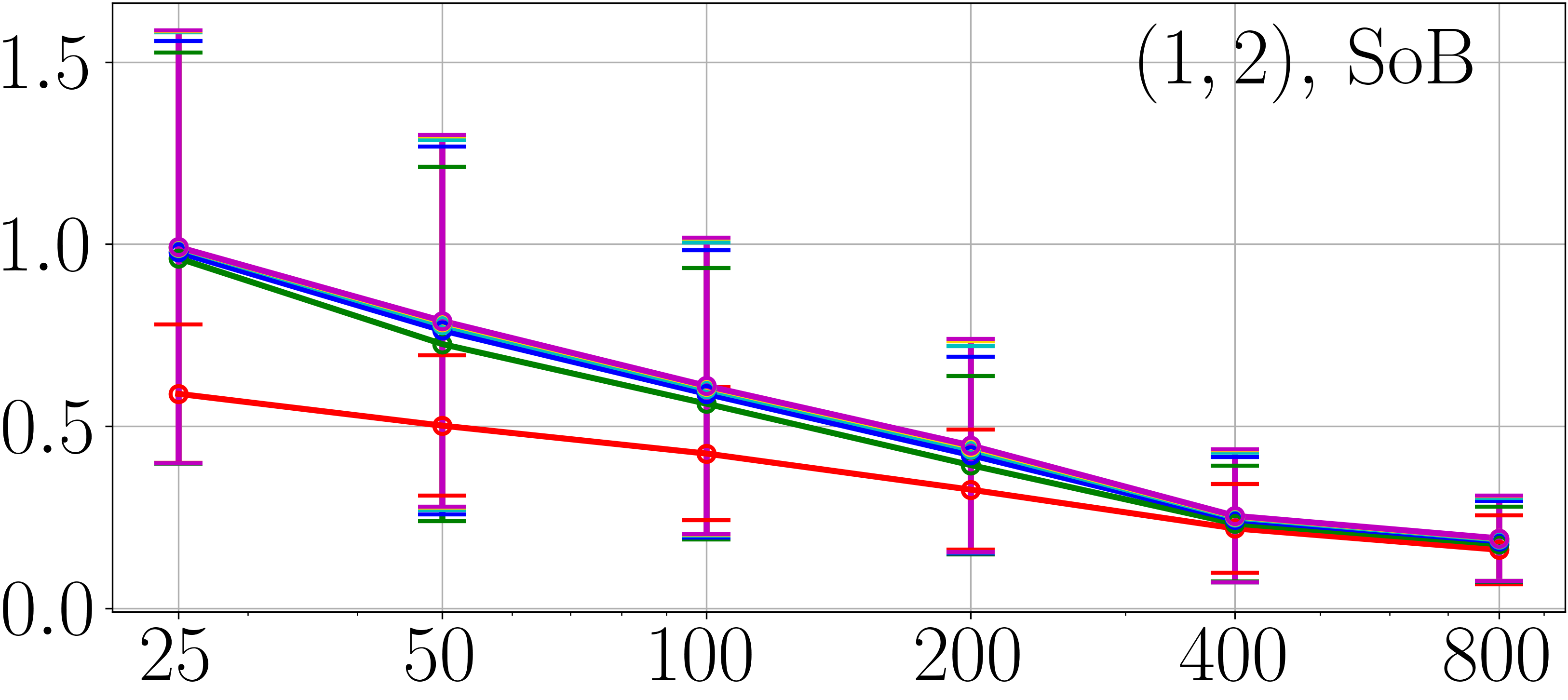}&
\includegraphics[width=4cm, bb=0 0 780 348]{./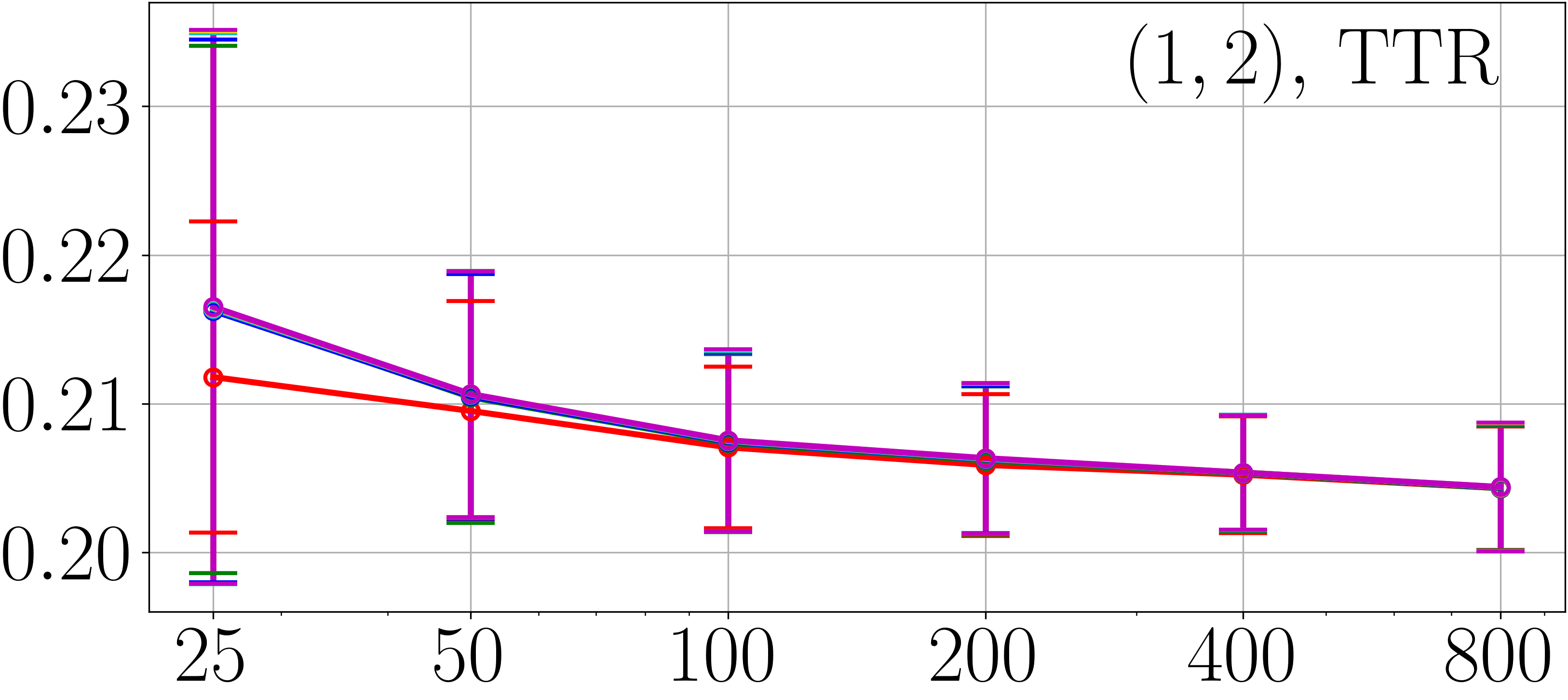}\\
\includegraphics[width=4cm, bb=0 0 780 348]{./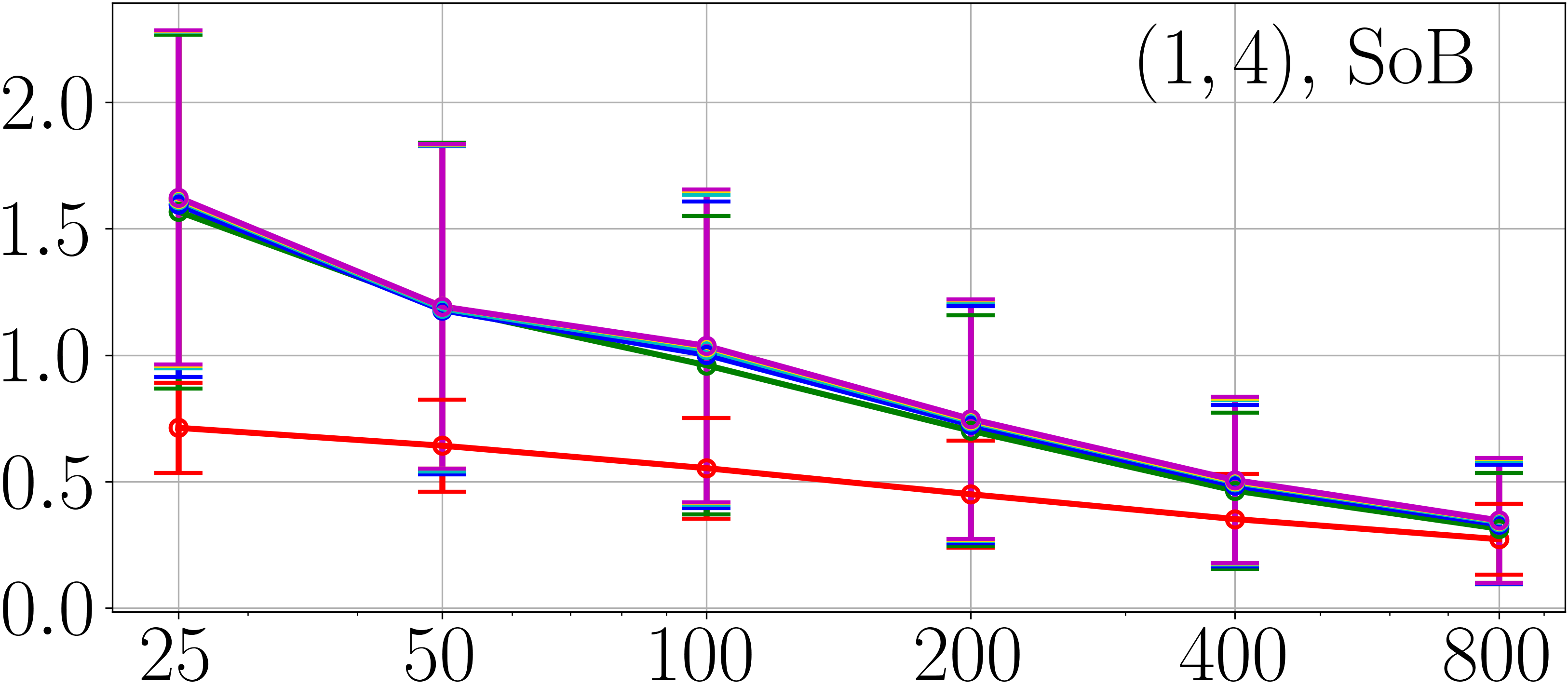}&
\includegraphics[width=4cm, bb=0 0 780 348]{./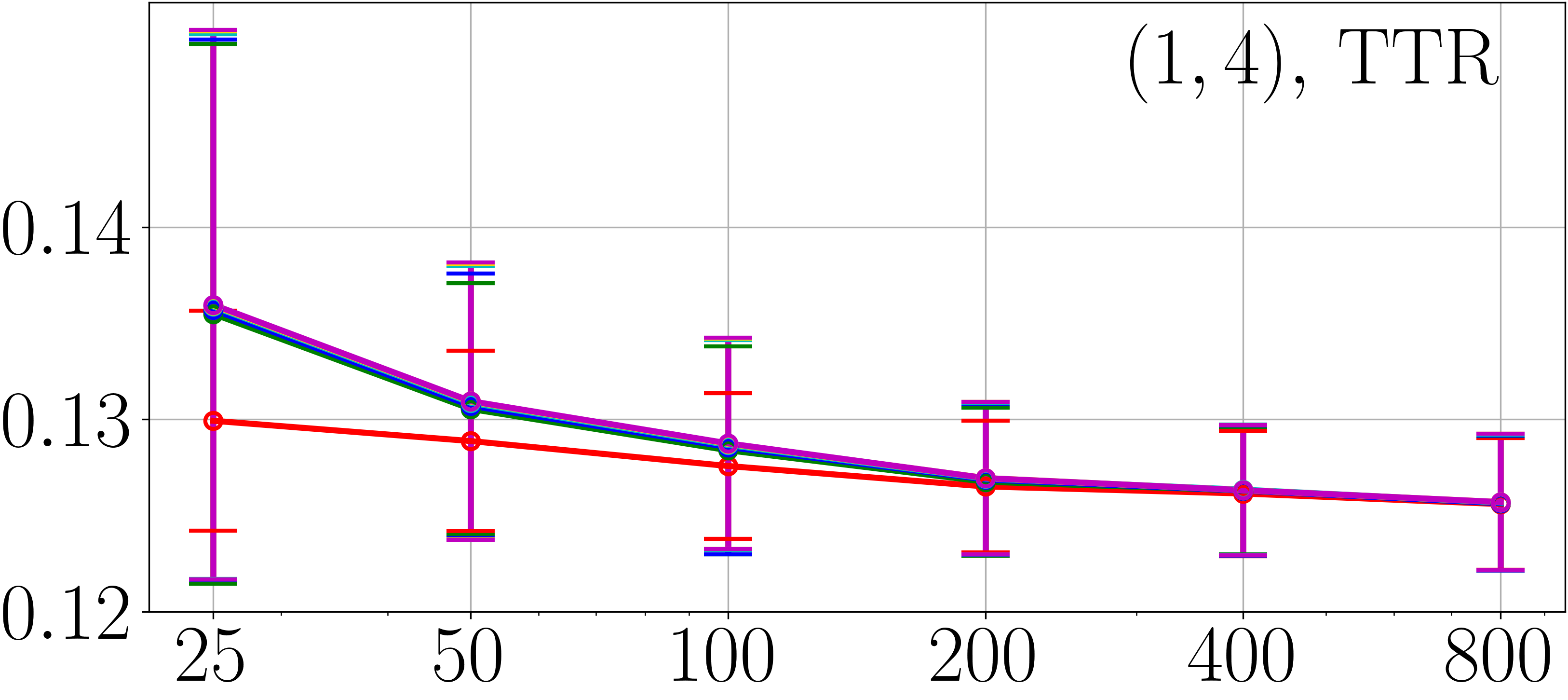}\\
\includegraphics[width=4cm, bb=0 0 780 348]{./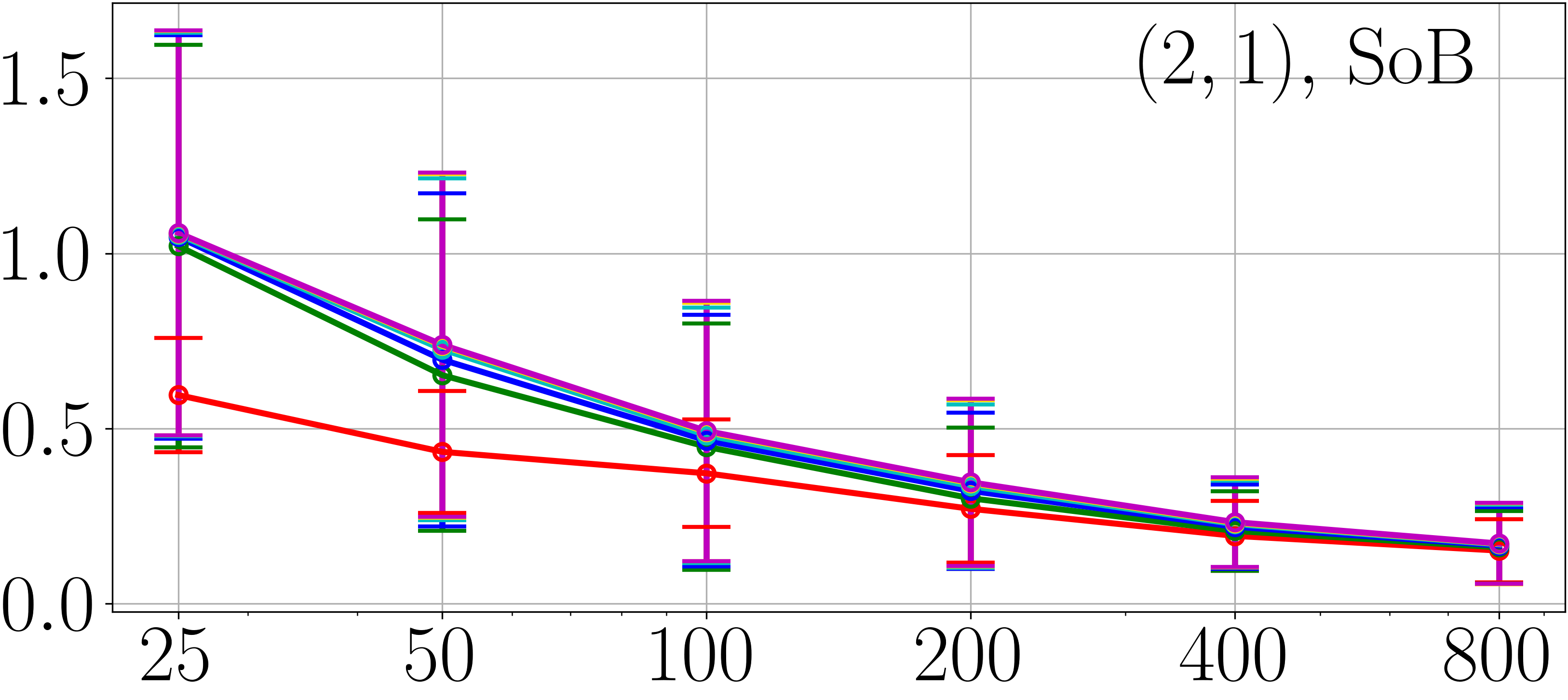}&
\includegraphics[width=4cm, bb=0 0 780 348]{./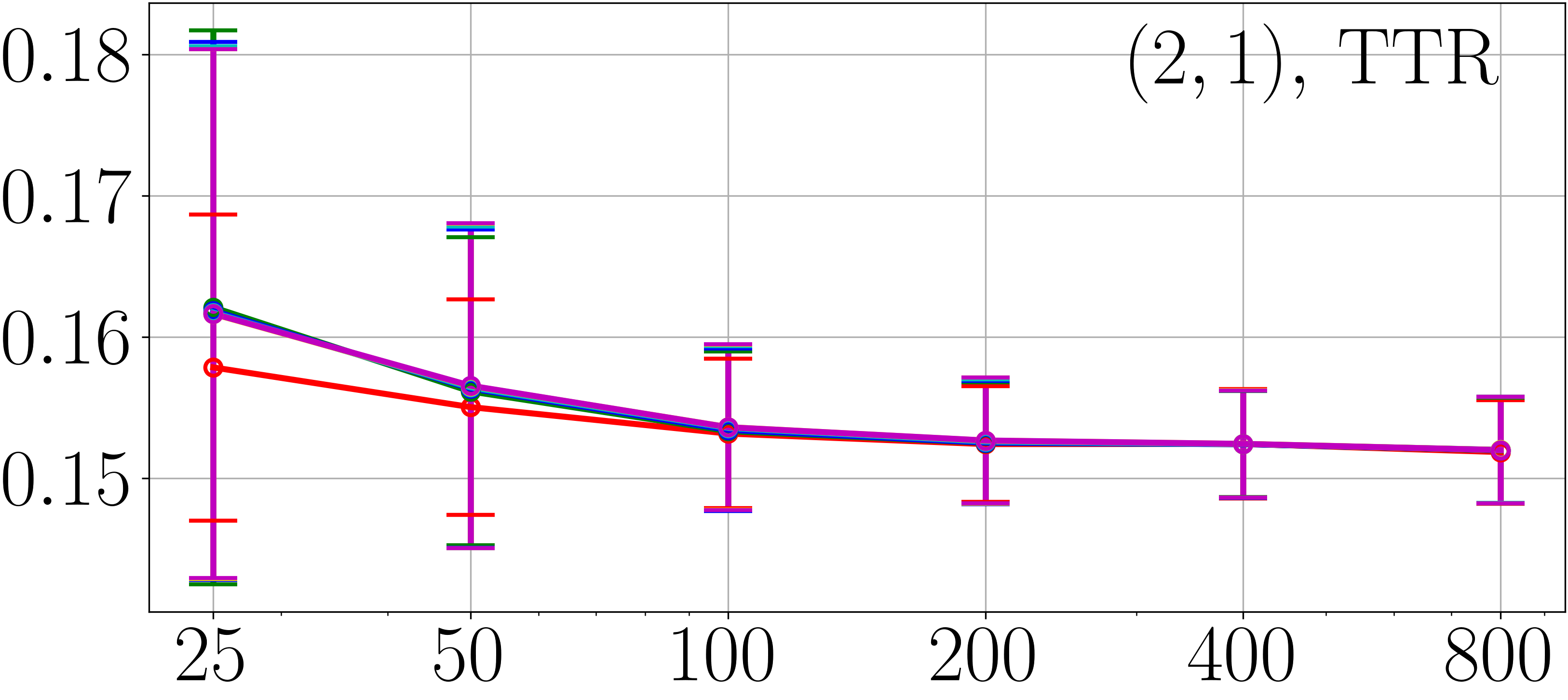}\\
\includegraphics[width=4cm, bb=0 0 780 348]{./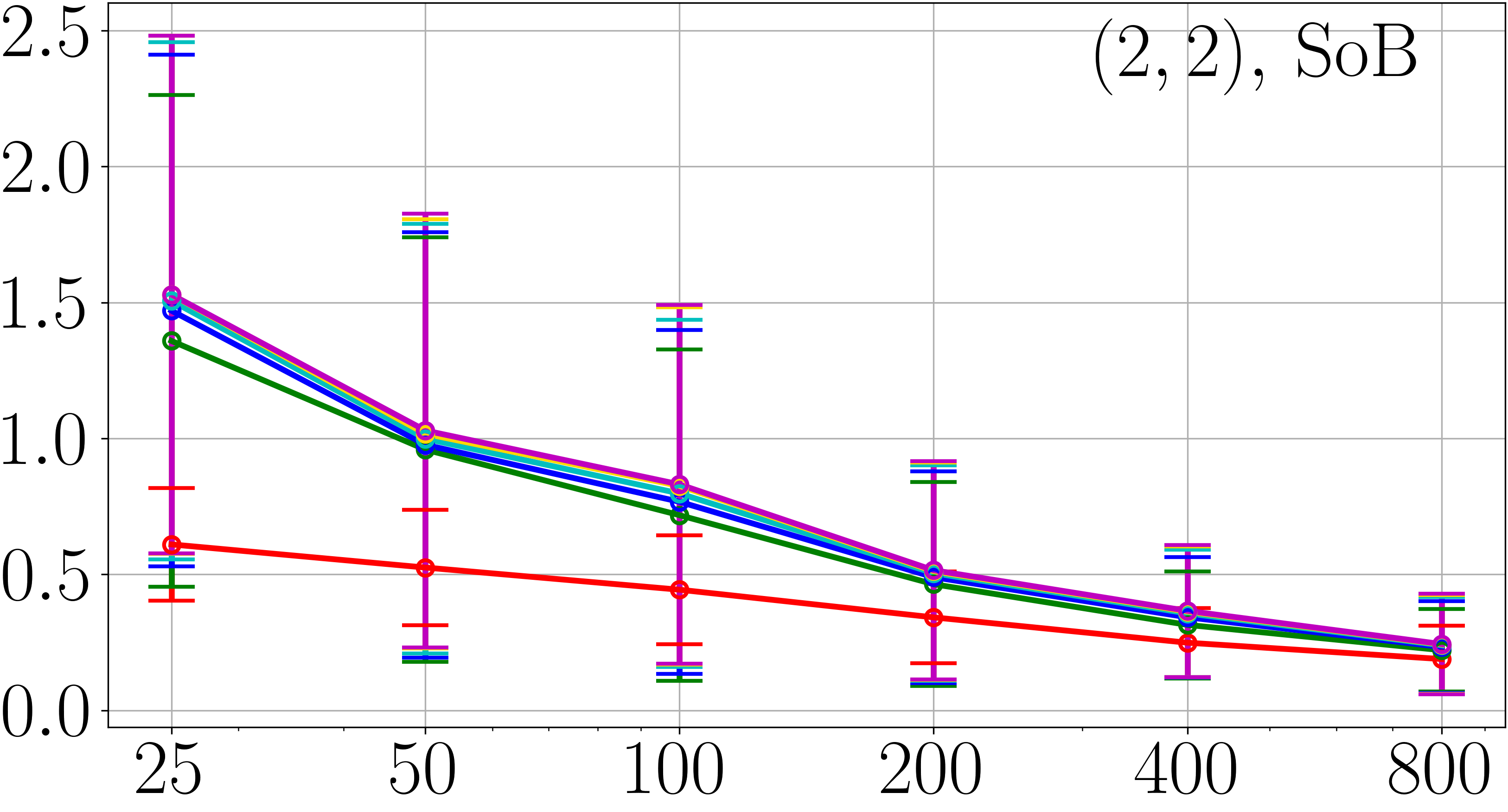}&
\includegraphics[width=4cm, bb=0 0 780 348]{./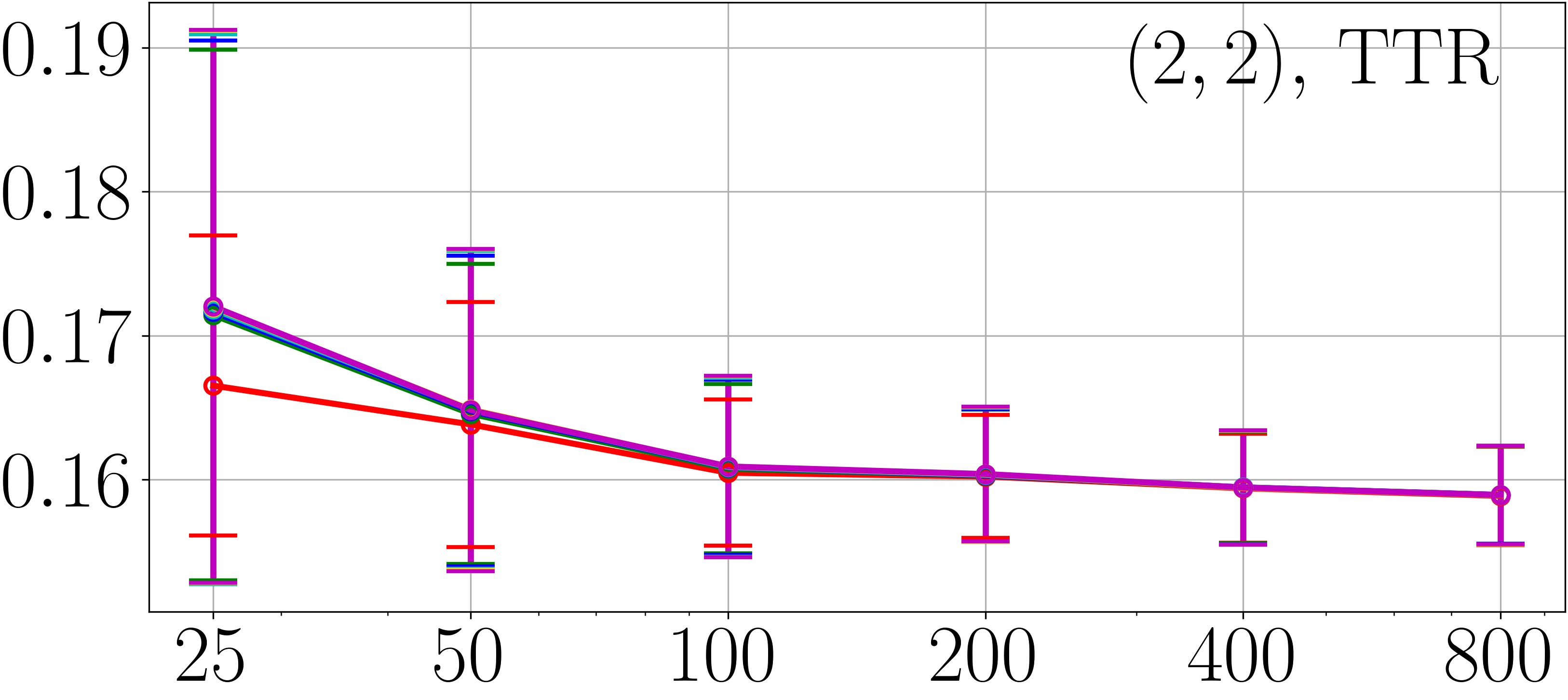}\\
\includegraphics[width=4cm, bb=0 0 780 348]{./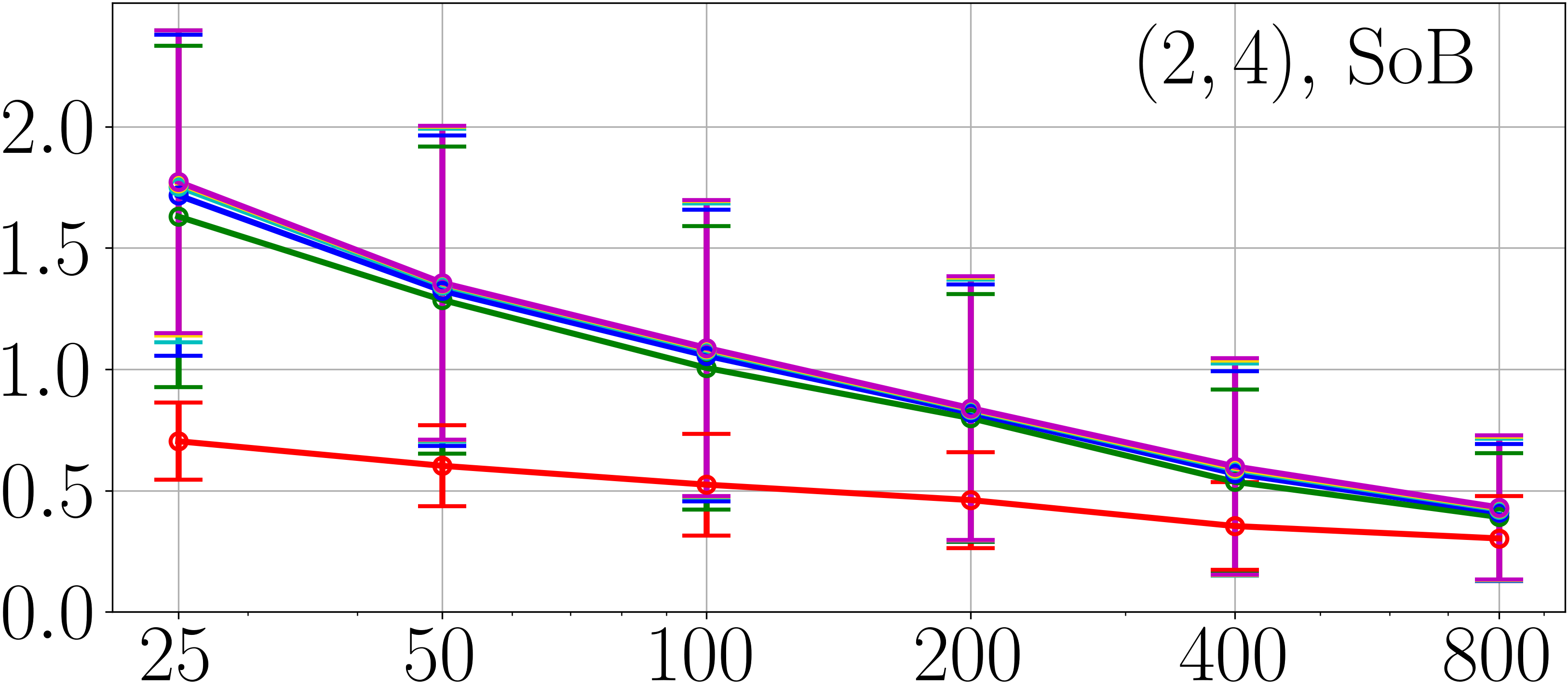}&
\includegraphics[width=4cm, bb=0 0 780 348]{./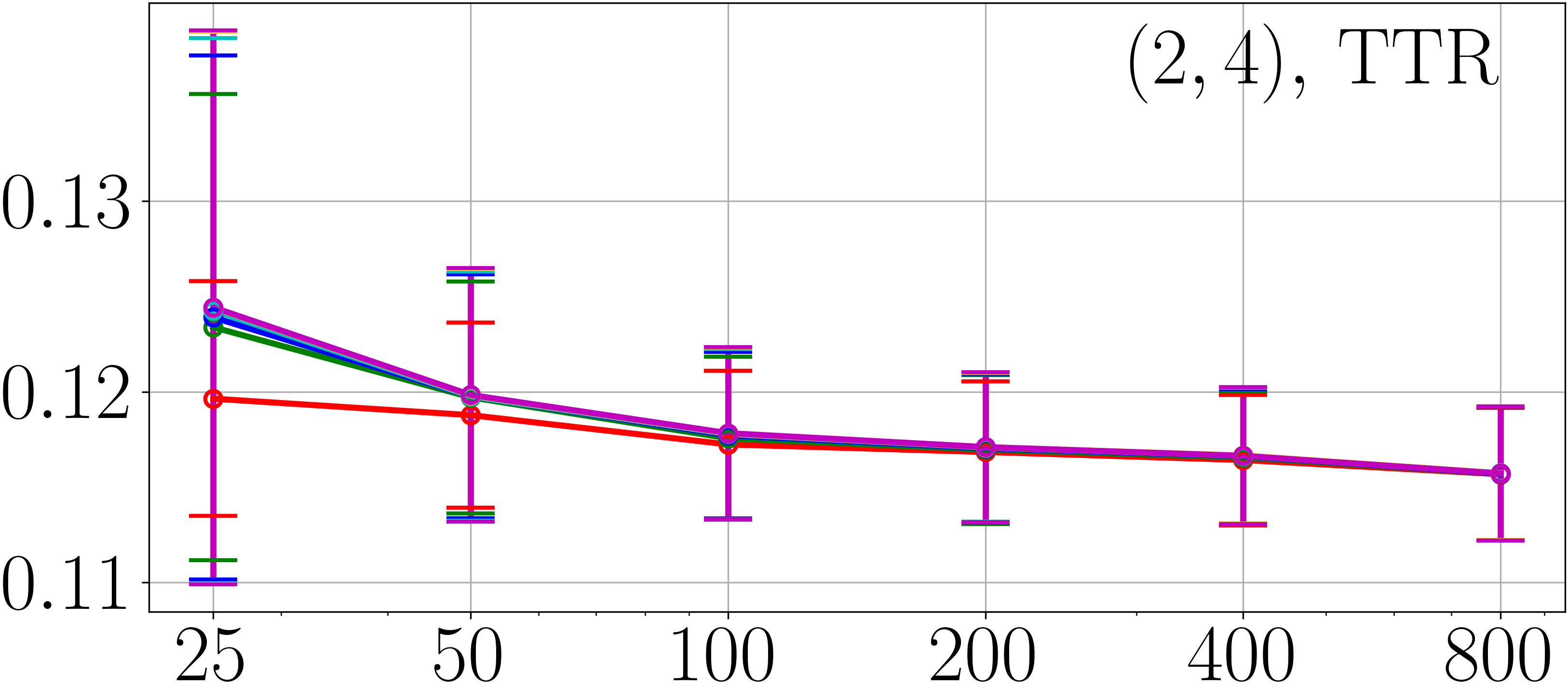}
\end{tabular}
\caption{%
Errorbar plots of SoB and TTR for LR (red), MLSLR with $\alpha=0.2,0.4,0.6,0.8$ (green, blue, cyan, yellow), and LSQLR (magenta) 
versus $n$, where circles and errorbars represent the mean and STD of the errors (for Section \ref{sec:ESimulation}).}
\label{fig:ES}
\end{figure}

We are also interested in estimation performance of MLSLR and LSQLR 
in a finite-sample situation as well as the large-sample limit. 
The numerical experiment in this section was performed to see it.%
\footnote{%
Our program codes for experiments in Sections \ref{sec:ESimulation}, 
\ref{sec:RSimulation}, and \ref{sec:Experiment} and Appendix~\ref{sec:OutAlpha} 
are in \url{https://github.com/yamasakiryoya/LSIS}.}

We consider the task with the zero-one task loss $\ell_\zo$.
With the correctly-specified model \eqref{eq:nominal},
we calculated $\hat{\bbeta}_n$ with a training sample of size $n=25, 50, \ldots, 800$
and evaluated the size of bias (SoB) $\|\hat{\bbeta}_n-\tilde{\bbeta}\|$ and 
test task risk (TTR) $\frac{1}{m}\sum_{i=1}^m\ell_\zo(h_{\ell_\zo}(\hat{\bbeta}_n^\top\bx_i), y_i)$
(i.e., misclassification rate) with a test sample of size $m=10^4$,
for LR, MLSLR with $\alpha=0.2, 0.4, 0.6, 0.8$, and LSQLR.
Note that it makes no sense to compare the test surrogate risk (TSR) 
$\frac{1}{m}\sum_{i=1}^m\phi(\hat{\bbeta}_n^\top\bx_i, y_i)$ 
for different surrogate losses.
We trained the model parameter using (batch-version) Adam with 
learning rate that is multiplied by $10^{-1/2}$ every 50 epochs from 0.01 for 150 epochs,
from the initial point $\tilde{\bbeta}$ for LR
or together with multi-start strategy with 30 initial points scattered from $\tilde{\bbeta}$ for MLSLR and LSQLR.
Figure \ref{fig:ES} shows some of the results 
on the mean and standard deviation (STD) of the errors over 100 randomized trials.

Consequently, similarly to the ARE results in Table \ref{tab:ARE}, 
it was also experimentally confirmed that
parameter estimation performance with the correctly-specified model 
decreased along with increase of the smoothing level $\alpha$
even when the sample size is small (see mean of SoB in Figure \ref{fig:ES}).
Along with this behavior, TTR for MLSLR and LSQLR also deteriorated.

\subsection{Robustness against Model Misspecification}
\label{sec:Robustness}
\subsubsection{Similarity to Existing Robust LRs}
\label{sec:Similarity}
Use of a loss function that leads to low efficiency with correctly-specified models 
may be motivated by expectation for robustness against model misspecification, 
as also suggested by \cite{zhang2019theoretically}:
One may want to make the estimation result stable even 
when the data distribution deviates from the specified model.
Various studies have confirmed that LR lacks robustness against
model misspecification \cite{pregibon1981logistic},
and then proposed robust LRs via the idea of $\rho$-transformation 
(see below) of the loss function $\varphi_\LR$
\cite{pregibon1982resistant, bianco1996robust, croux2003implementing};
See also the monograph \cite[Chapter 7.2]{maronna2019robust}.

\cite{pregibon1982resistant} proposed to use a $\rho$-transformed loss
\begin{align}
\label{eq:rho-tra}
	\varphi(v)=\rho(\varphi_\LR(v)),
\end{align}
where $\rho:[0,\infty)\to\bbR$ is a function with sublinear growth 
(e.g., $\rho_{{\rm P},c}(v)=v$ (if $v\le c$), $=2(cv)^{1/2}-c$ (if $v>c$) with some constant $c>0$,
inspired by a Huber-type loss).
However, his robust estimator based on $\rho_{{\rm P},c}$ 
is not consistent even with correctly-specified models, 
and it has been pointed out that it is not sufficiently robust against outliers \cite{copas1988binary}.
Thus, seeking a robust LR with consistency (having a guarantee like Corollary \ref{cor:ProbPred}),
\cite{bianco1996robust}%
\footnote{%
They originally defined their estimator by the loss function
\begin{align}
\label{eq:BY-origin}
	\varphi(v)=\zeta_{1,c}(\varphi_\LR(v))+\eta\bigl(\tfrac{1}{1+e^{-v}}\bigr)+\eta\bigl(\tfrac{1}{1+e^{v}}\bigr),
\end{align}
where $\zeta_{1,c}(v)=v-v^2/(2c)$ (if $v\le c$), $=c/2$ (if $v>c$) with some constant $c>0$,
and where $\eta(v)=\int_0^v \zeta_{1,c}'(-\ln t)\,dt$.
The form \eqref{eq:BY-review}--\eqref{eq:eta-AF} in the body is that our study derived 
so that it follows the unified formulation through the $\rho$-transformation \eqref{eq:rho-tra}.}
found the following $\rho$-transformation:
\begin{align}
\label{eq:BY-review}
	\rho_{\BY,c}(v)=\zeta_{1,c}(v)+\zeta_{2,c}(e^{-v})+\zeta_{2,c}(1-e^{-v})
\end{align}
that is non-decreasing in $v\ge0$ and bounded,
where
\begin{align}
\label{eq:eta-AF}
	\begin{split}
	&\zeta_{2,c}(v)=
	\bigl[v-e^{-c}+\tfrac{1}{c}\bigl\{e^{-c}(c+1)\\
	&\hphantom{\zeta_{2,c}(v)=\bigl[v-e^{-c}+\tfrac{1}{c}\,}
	+v(\ln v-1)\bigr\}\bigr]\mathbbm{1}(v\ge e^{-c}).
	\end{split}
\end{align}
Also, \cite{croux2003implementing} used,
besides a data-weighting scheme similar to \cite{carroll1993robustness}, 
$\bar{\zeta}_{1,c}(v)=ve^{-\sqrt{c}}$ (if $v\le c$),
$=-2e^{-\sqrt{v}}(1+\sqrt{v})+e^{-\sqrt{c}}(2(1+\sqrt{c})+c)$ (if $v>c$)
instead of $\zeta_{1,c}$ in the formulation \eqref{eq:BY-origin},
yielding an increasing and bounded $\rho$-transformation $\rho_{\CH,c}$.%

\begin{figure}[t]
\centering
\begin{tabular}{ccc}\hskip-5pt
\includegraphics[width=2.85cm, bb=0 0 780 420]{./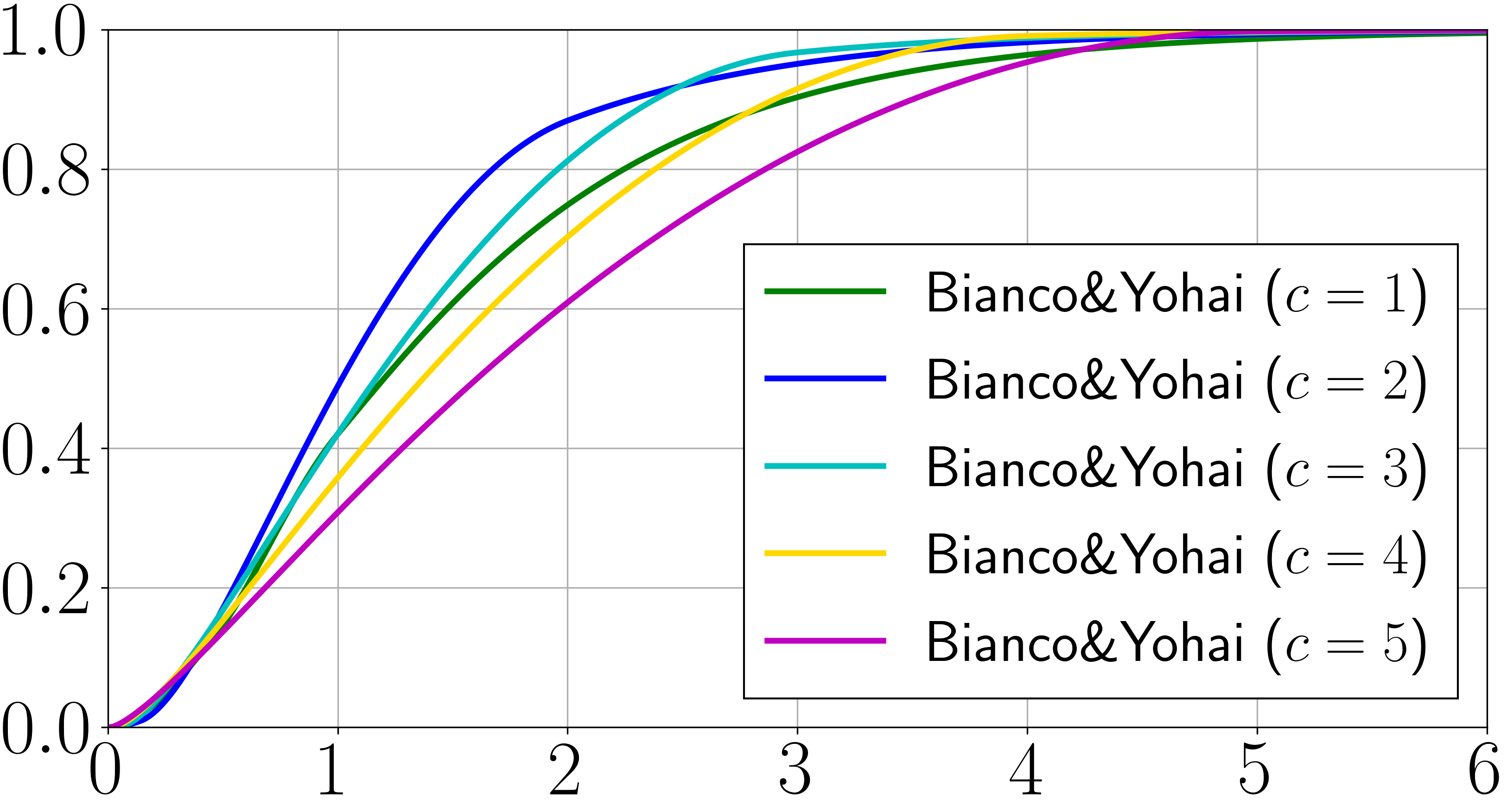}&\hskip-10pt
\includegraphics[width=2.85cm, bb=0 0 780 420]{./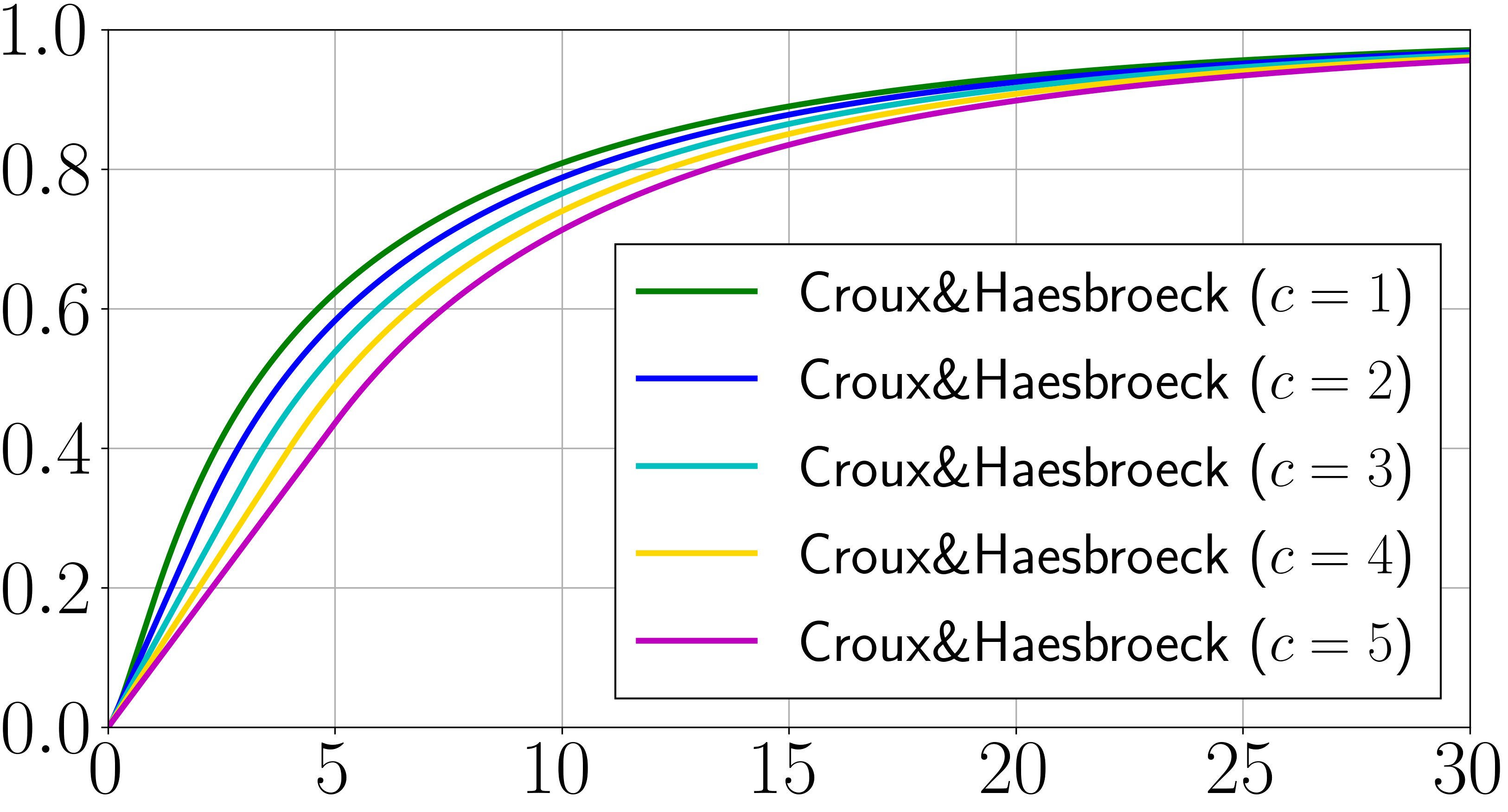}&\hskip-10pt
\includegraphics[width=2.85cm, bb=0 0 780 420]{./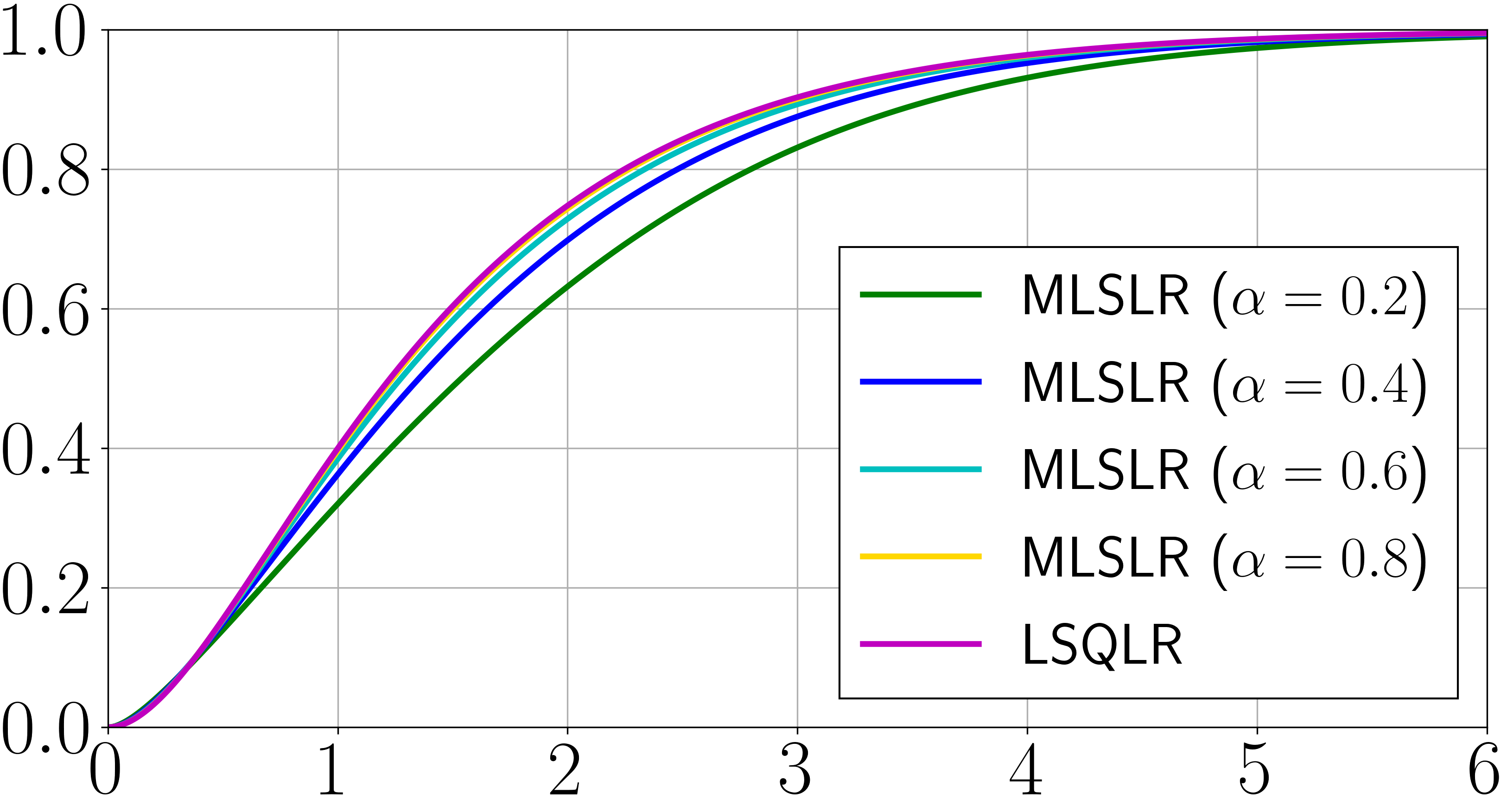}
\end{tabular}
\caption{%
Rescaled $\rho$-transformation:
$\{\rho(v)-\rho(0)\}/\{\lim_{v'\to\infty}\rho(v')-\rho(0)\}$ versus $v\ge0$,
for $\rho=\rho_{\BY,c}, \rho_{\CH,c}, \rho_{\MLS,\alpha}$, and $\rho_\LSQ$.}
\label{fig:rho}
\end{figure}

Our observation here is that MLSLR and LSQLR can be regarded as
instances of $\rho$-transformation-based robust LR, with
\begin{align}
	\begin{split}
	\rho_{\MLS,\alpha}(v)
	=&\,-\bigl(1-\tfrac{\alpha}{2}\bigr)\ln\bigl((1-\alpha)e^{-v}+\tfrac{\alpha}{2}\bigr)\\
	&\,-\tfrac{\alpha}{2}\ln\bigl((1-\alpha)(1-e^{-v})+\tfrac{\alpha}{2}\bigr),\\
	\rho_\LSQ(v)
	=&\,\tfrac{1}{2}(1-e^{-v})^2.
	\end{split}
\end{align}
Both of these transformations are increasing (in $v\ge0$) and bounded.
This property and Figure \ref{fig:rho} show close similarity of 
MLSLR, LSQLR, and the existing robust LRs by \cite{bianco1996robust, croux2003implementing},
which will suggest the robustness of MLSLR and LSQLR.

\subsubsection{Simulation Experiment}
\label{sec:RSimulation}
In robust statistics, 
it is common to study the performance of an estimator under 
(point-mass) data contaminations, which cause model misspecification.
The discussion in this section follows that approach.

We introduce the $\bbR^d$-valued functional $\bT$, defined on 
the space of probability distribution functions $F$ of random variables $(\bX,Y)\in\bbR^d\times[K]$,
that represents the procedure of the surrogate risk minimization:
\begin{align}
	\bT(F)\coloneq\underset{\bbeta\in\bbR^d}{\argmin}\;
	\bbE_{(\bX,Y)\sim F}\bigl[\phi(\bbeta^\top\bX,Y)\bigr],
\end{align}
where the expectation is taken with respect to $(\bX,Y)\sim F$.
Now, we are interested in behaviors of
the estimate $\bbeta_{\epsilon, (\bx_c, y_c)}=\bT(F_{\epsilon, (\bx_c, y_c)})$ 
for the contaminated distribution
\begin{align}
\label{eq:ContDis}
	F_{\epsilon, (\bx_c, y_c)}=(1-\epsilon)\tilde{F}+\epsilon\delta_{(\bX,Y)}(\bx_c,y_c),
\end{align}
where $\tilde{F}$ is a nominal distribution of $(\bX,Y)$ such that 
$p_1(\bx)=\frac{1}{1+e^{-\tilde{\bbeta}^\top\bx}}$ implying $\tilde{\bbeta}=\bT(\tilde{F})$,
and $\epsilon\ll1$ is a small ratio.

\begin{figure}[t]
\centering
\begin{tabular}{cc}
\includegraphics[width=4cm, bb=0 0 780 348]{./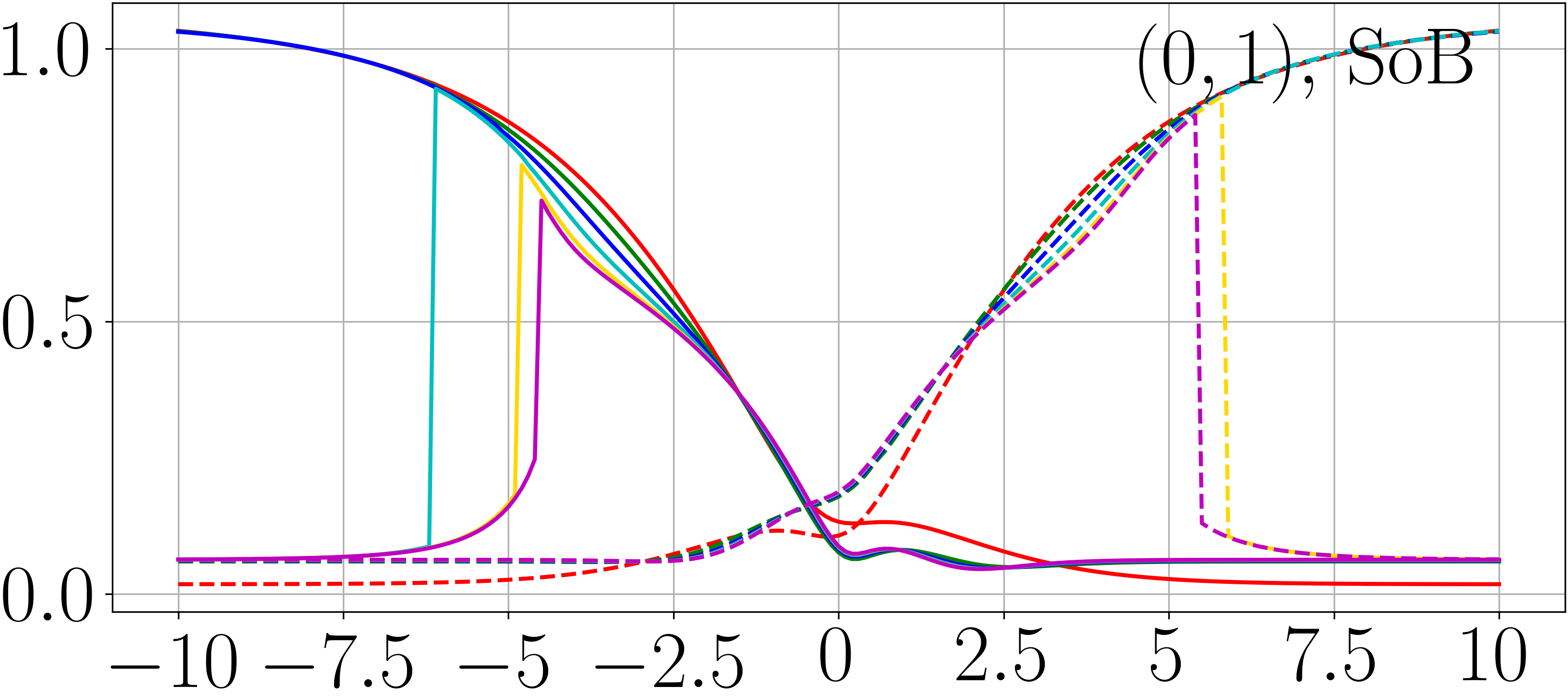}&
\includegraphics[width=4cm, bb=0 0 780 348]{./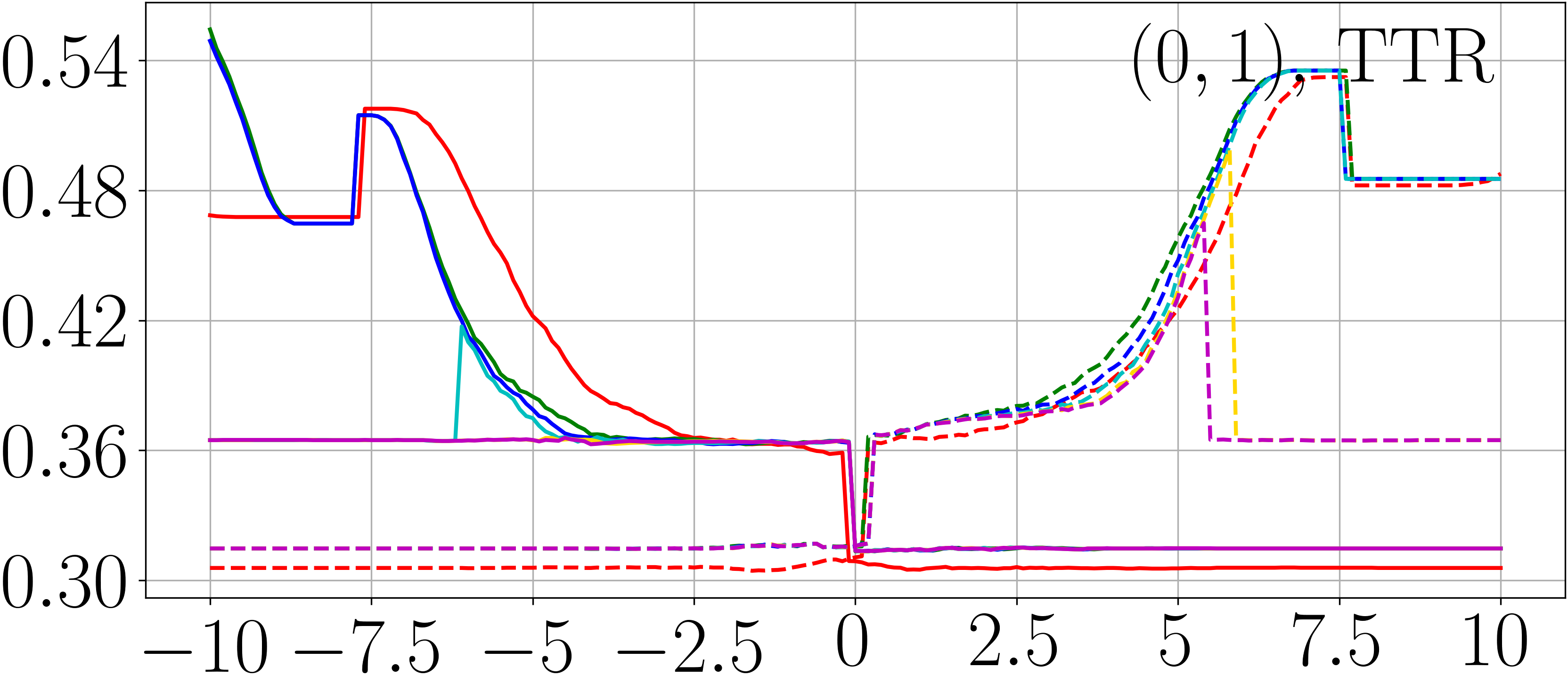}\\
\includegraphics[width=4cm, bb=0 0 780 348]{./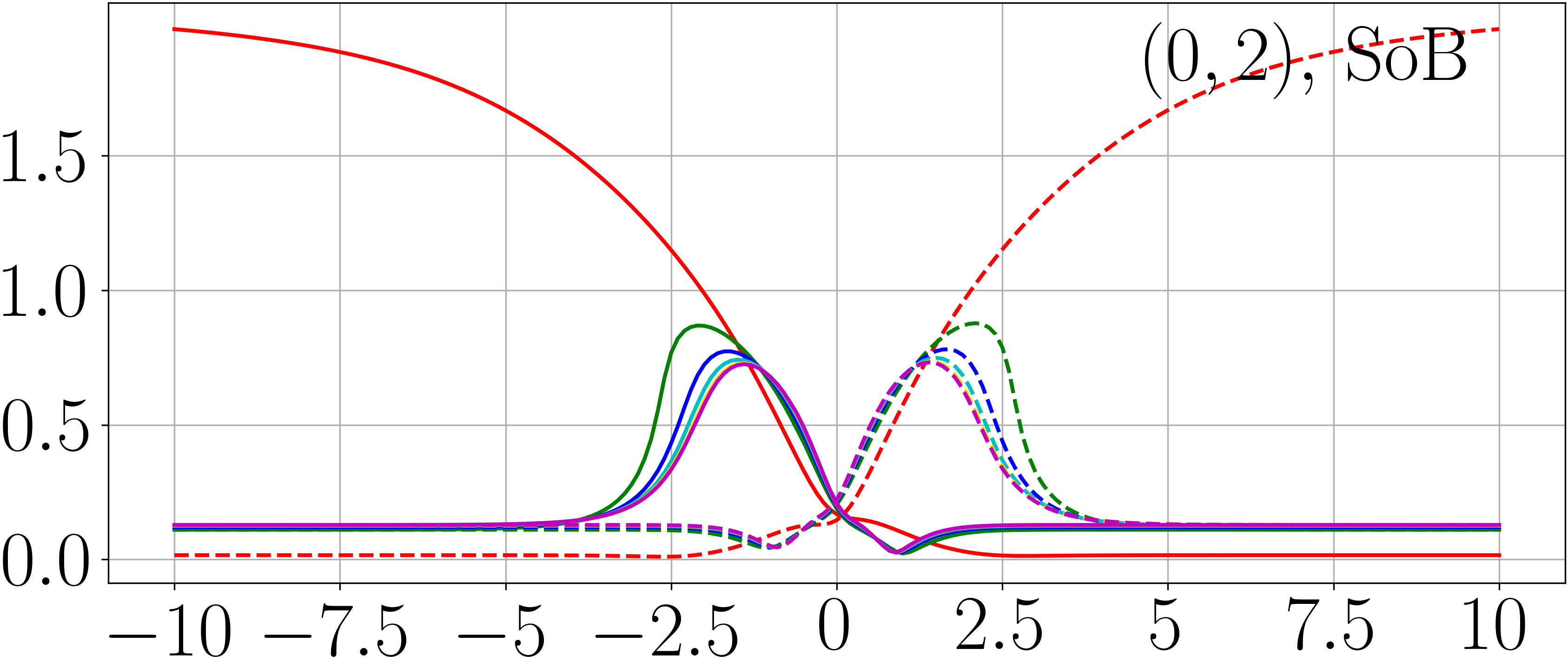}&
\includegraphics[width=4cm, bb=0 0 780 348]{./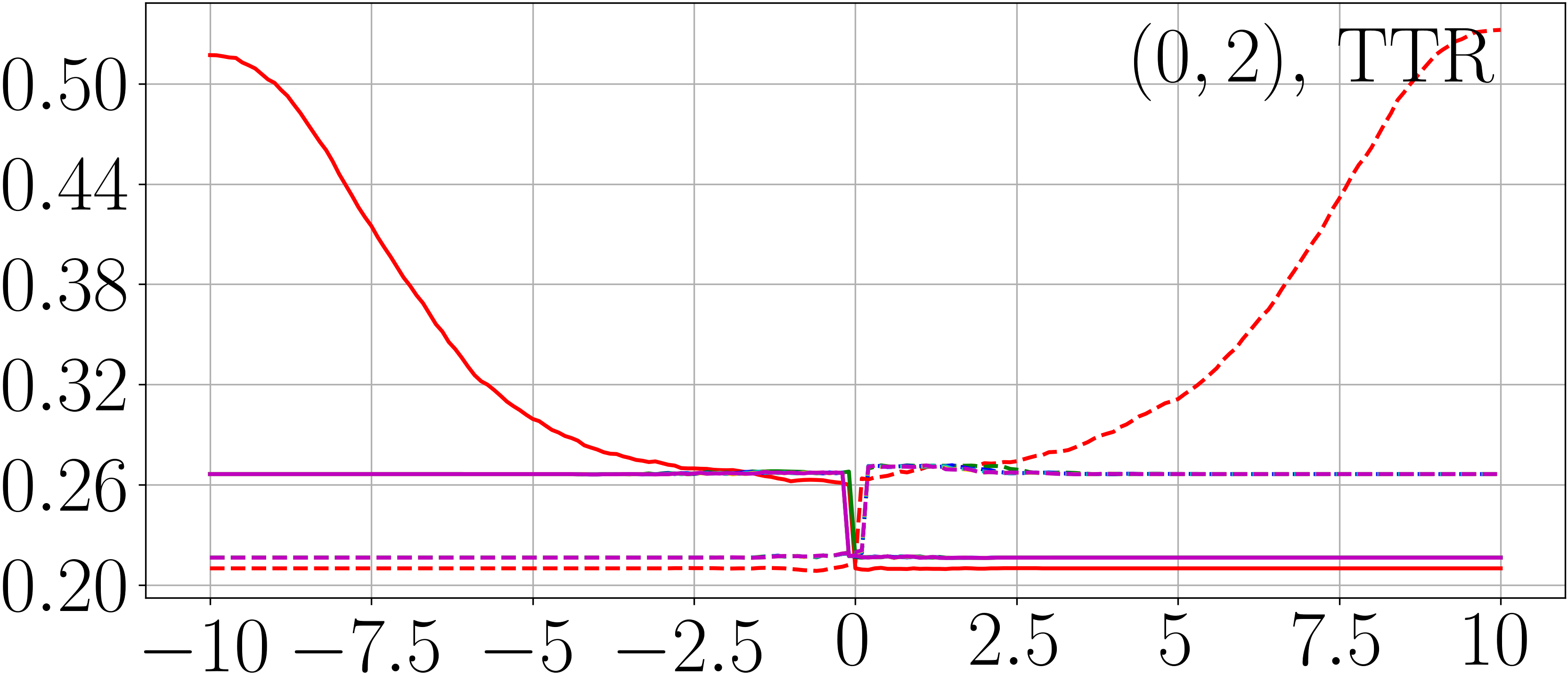}\\
\includegraphics[width=4cm, bb=0 0 780 348]{./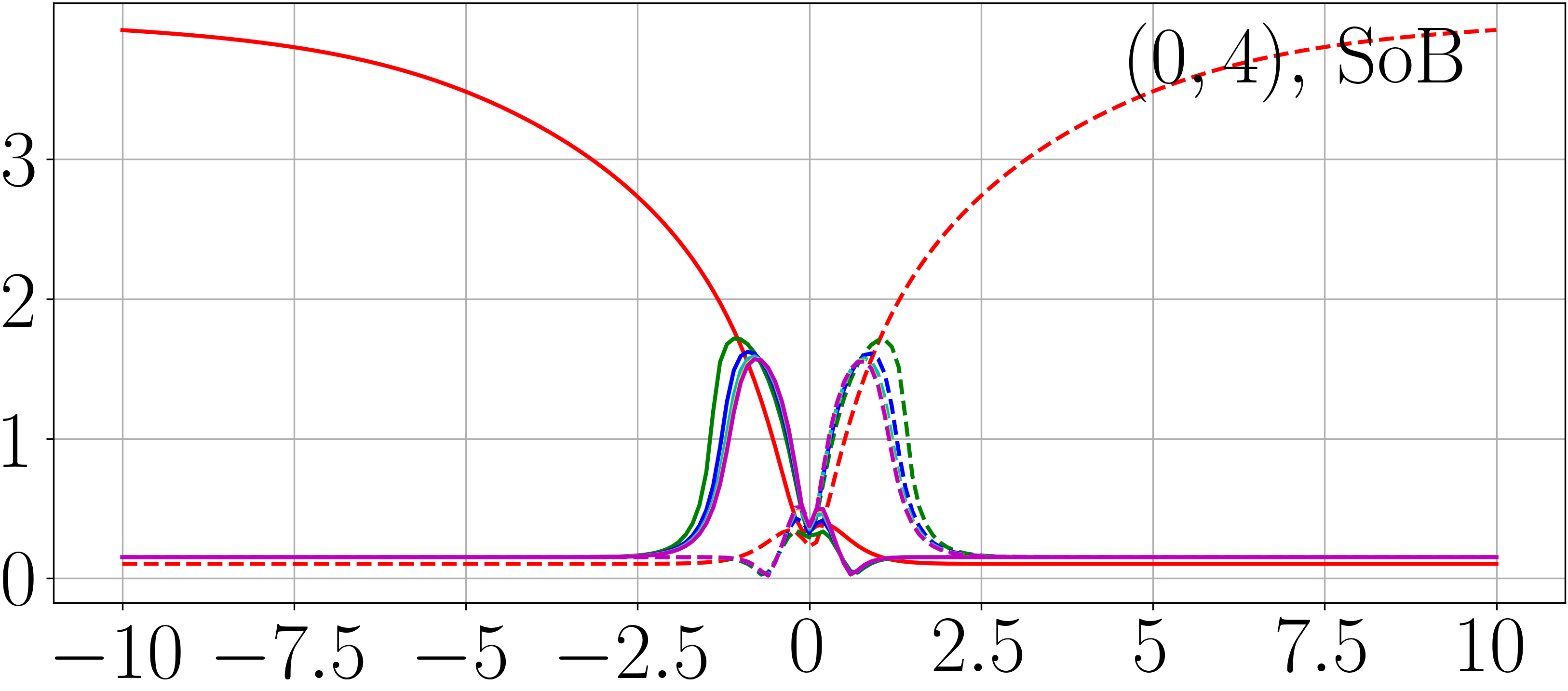}&
\includegraphics[width=4cm, bb=0 0 780 348]{./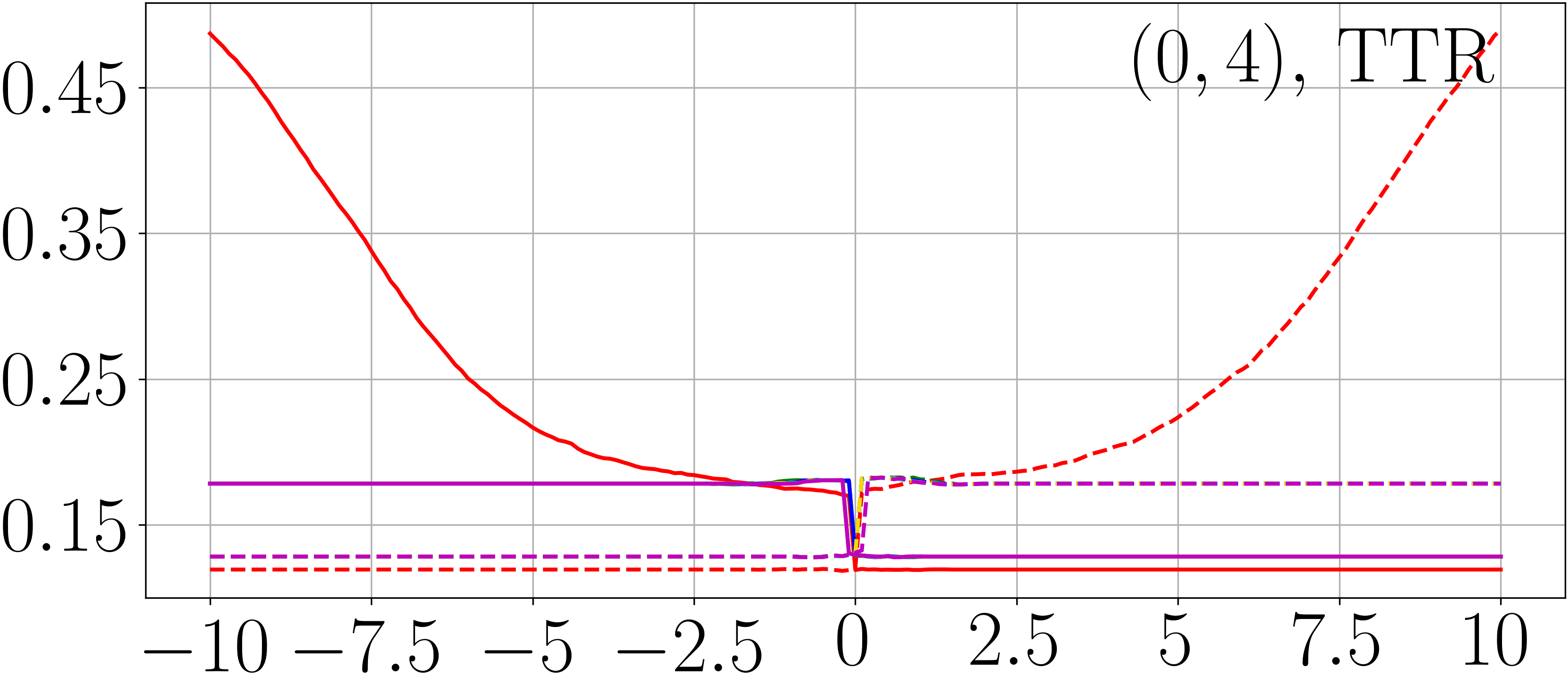}\\
\includegraphics[width=4cm, bb=0 0 780 348]{./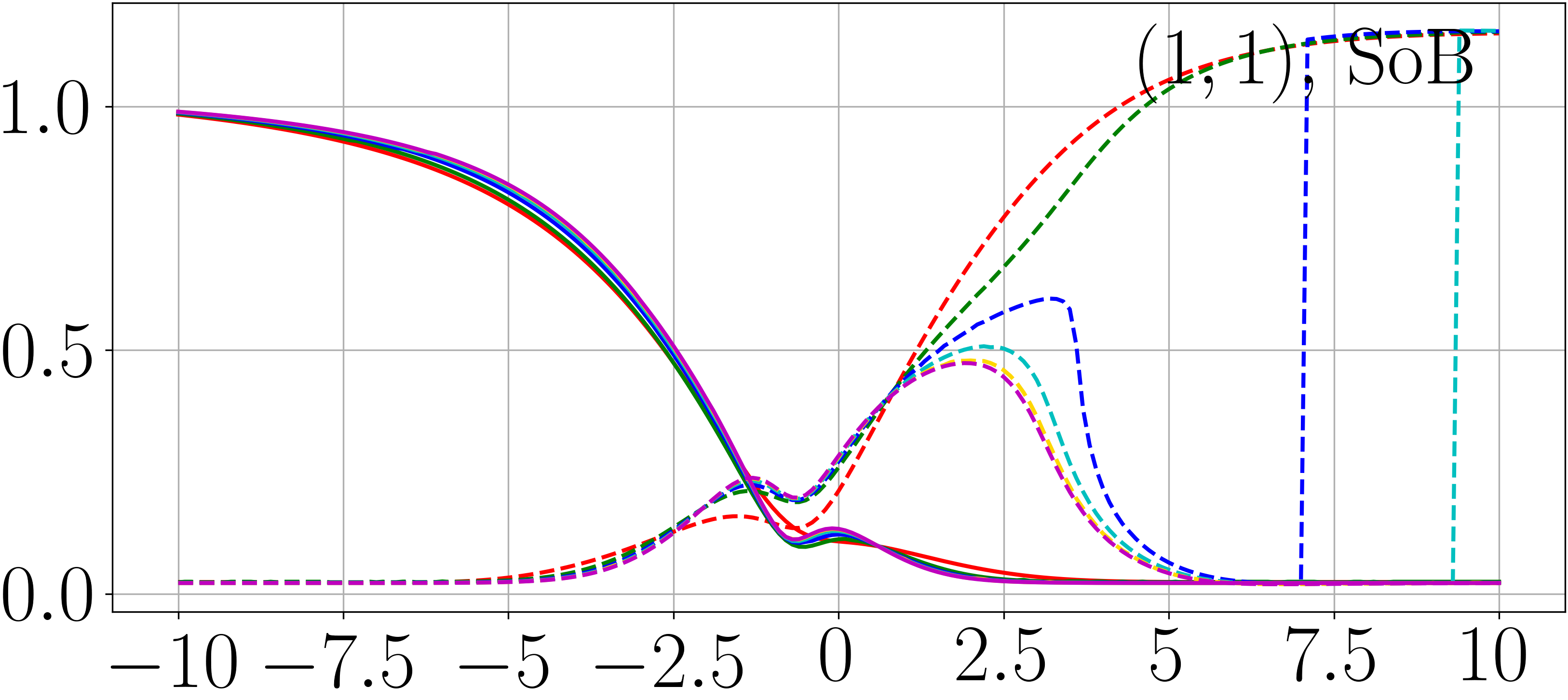}&
\includegraphics[width=4cm, bb=0 0 780 348]{./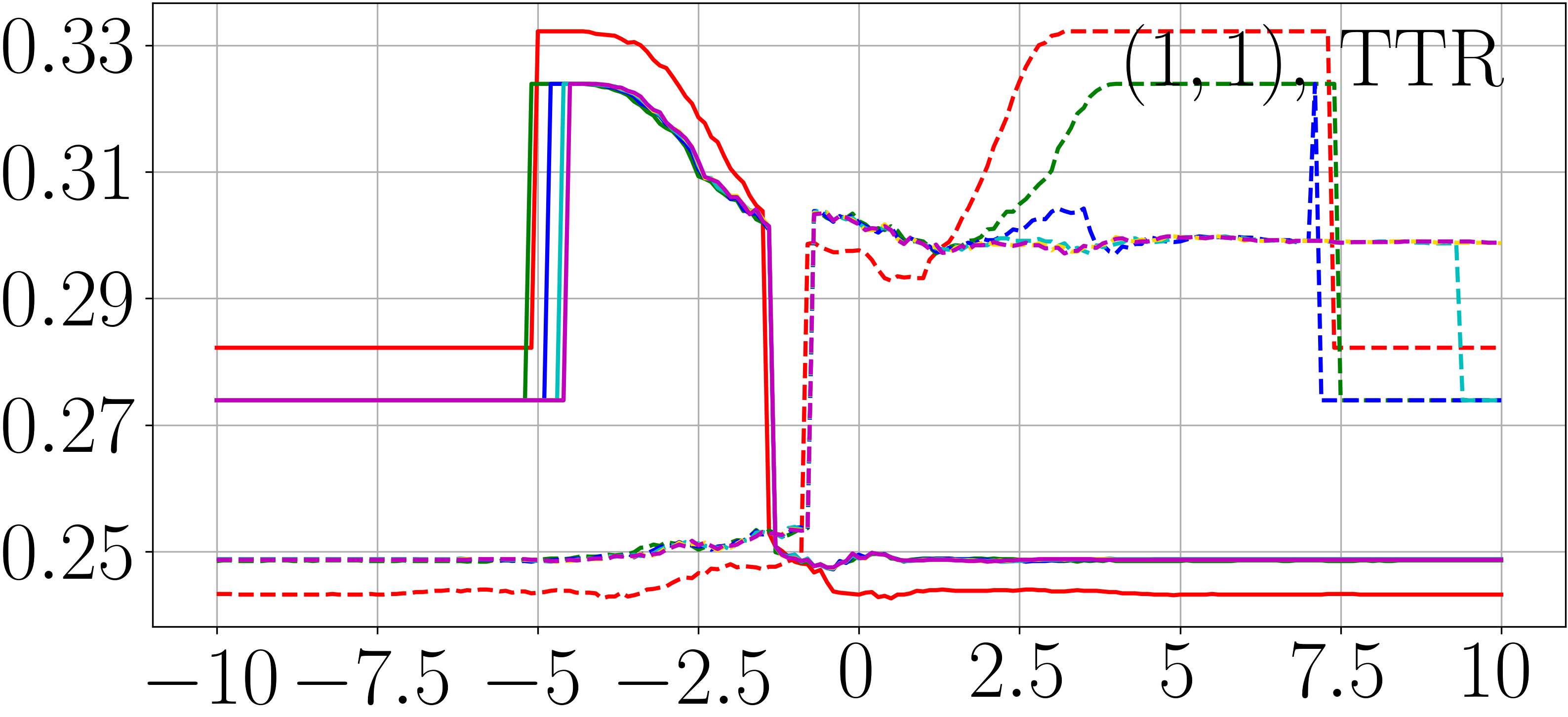}\\
\includegraphics[width=4cm, bb=0 0 780 348]{./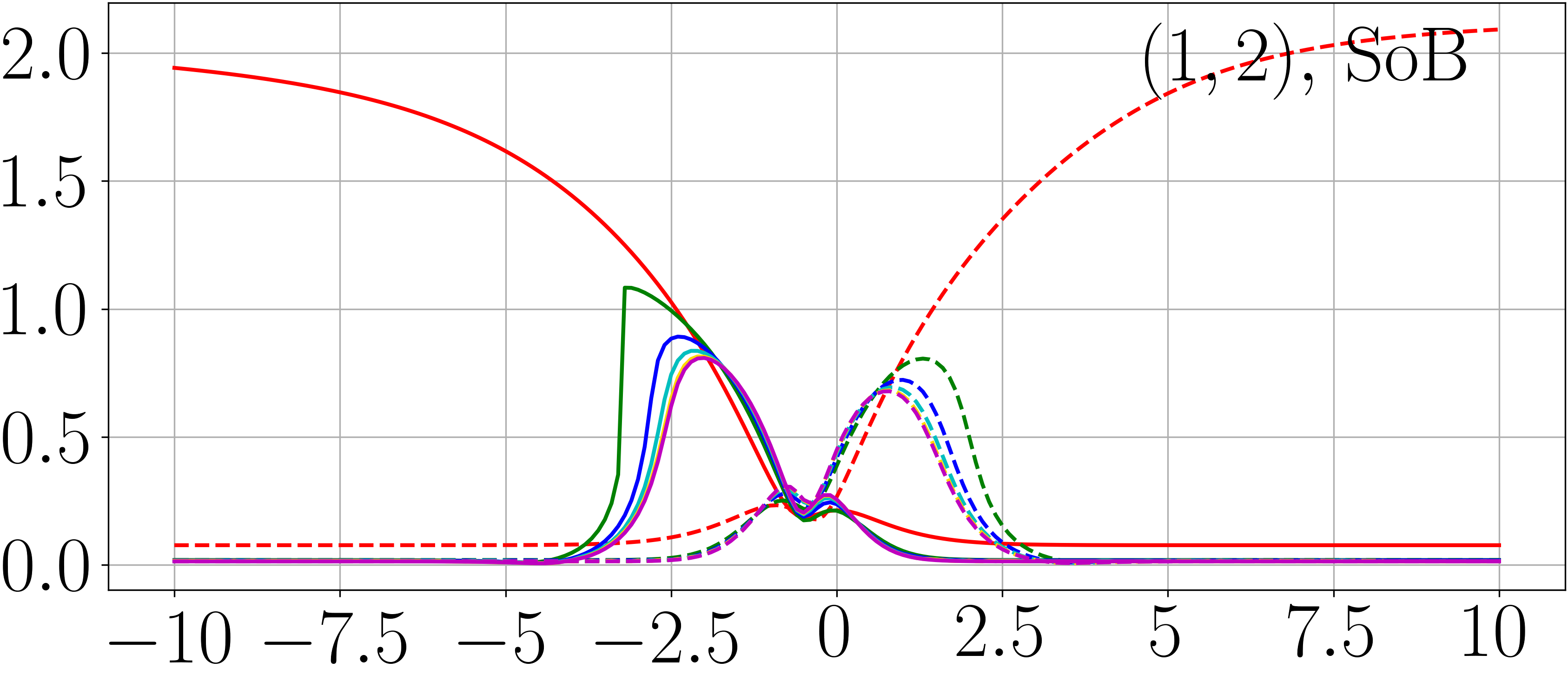}&
\includegraphics[width=4cm, bb=0 0 780 348]{./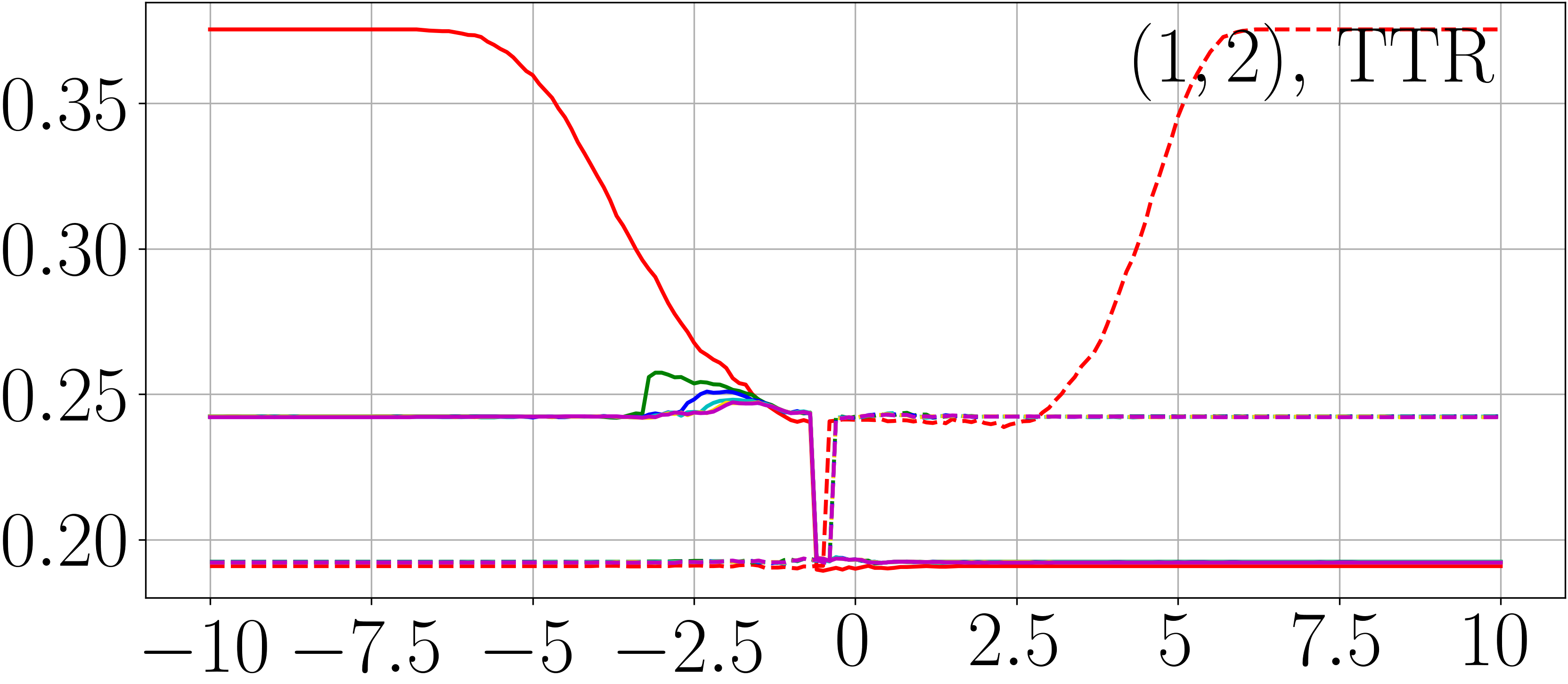}\\
\includegraphics[width=4cm, bb=0 0 780 348]{./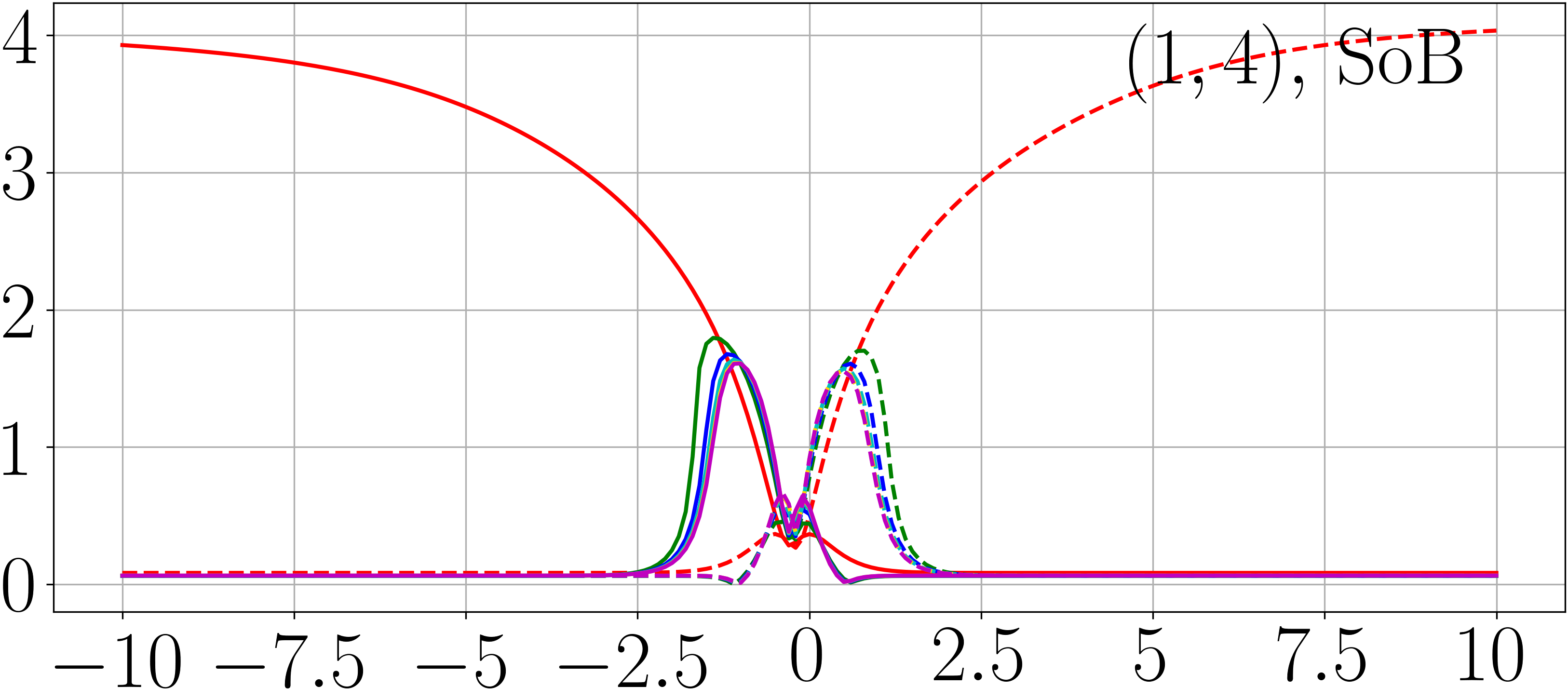}&
\includegraphics[width=4cm, bb=0 0 780 348]{./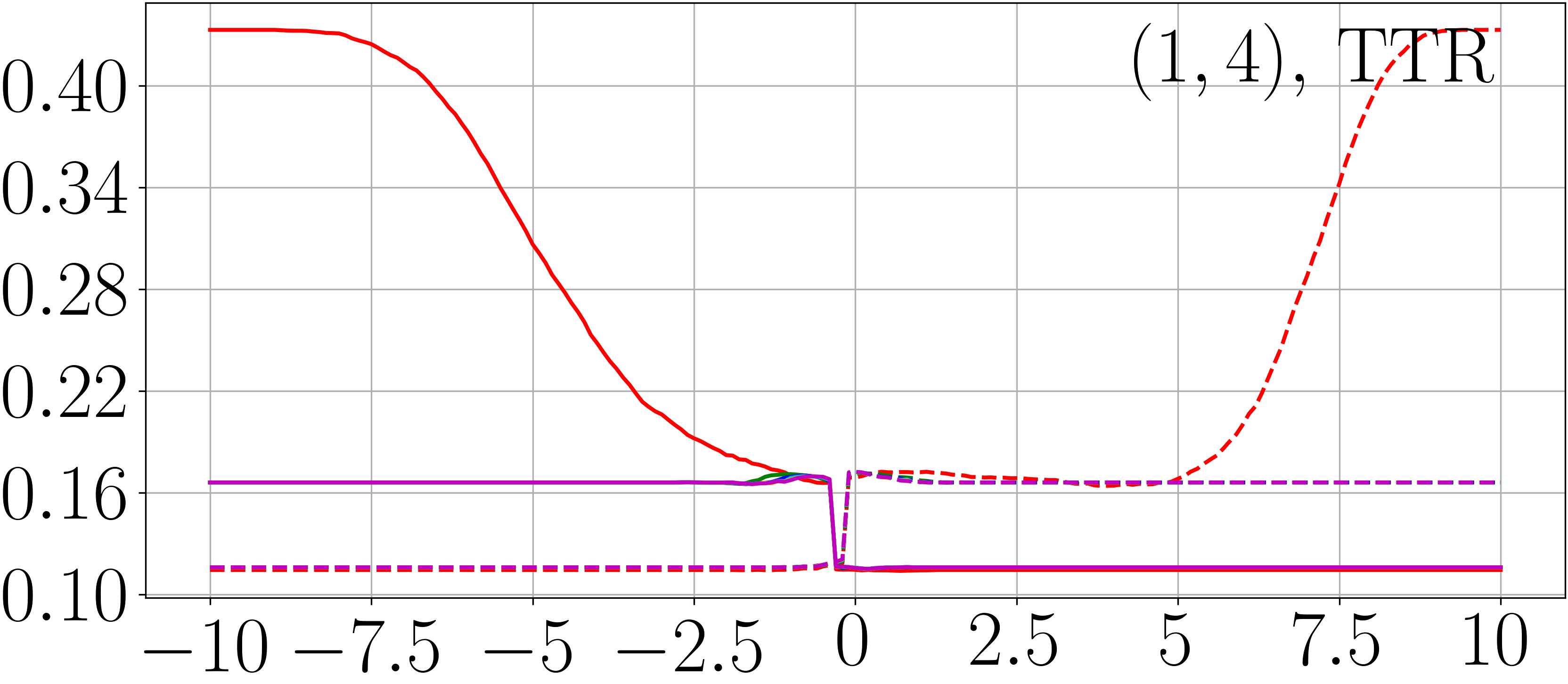}\\
\includegraphics[width=4cm, bb=0 0 780 348]{./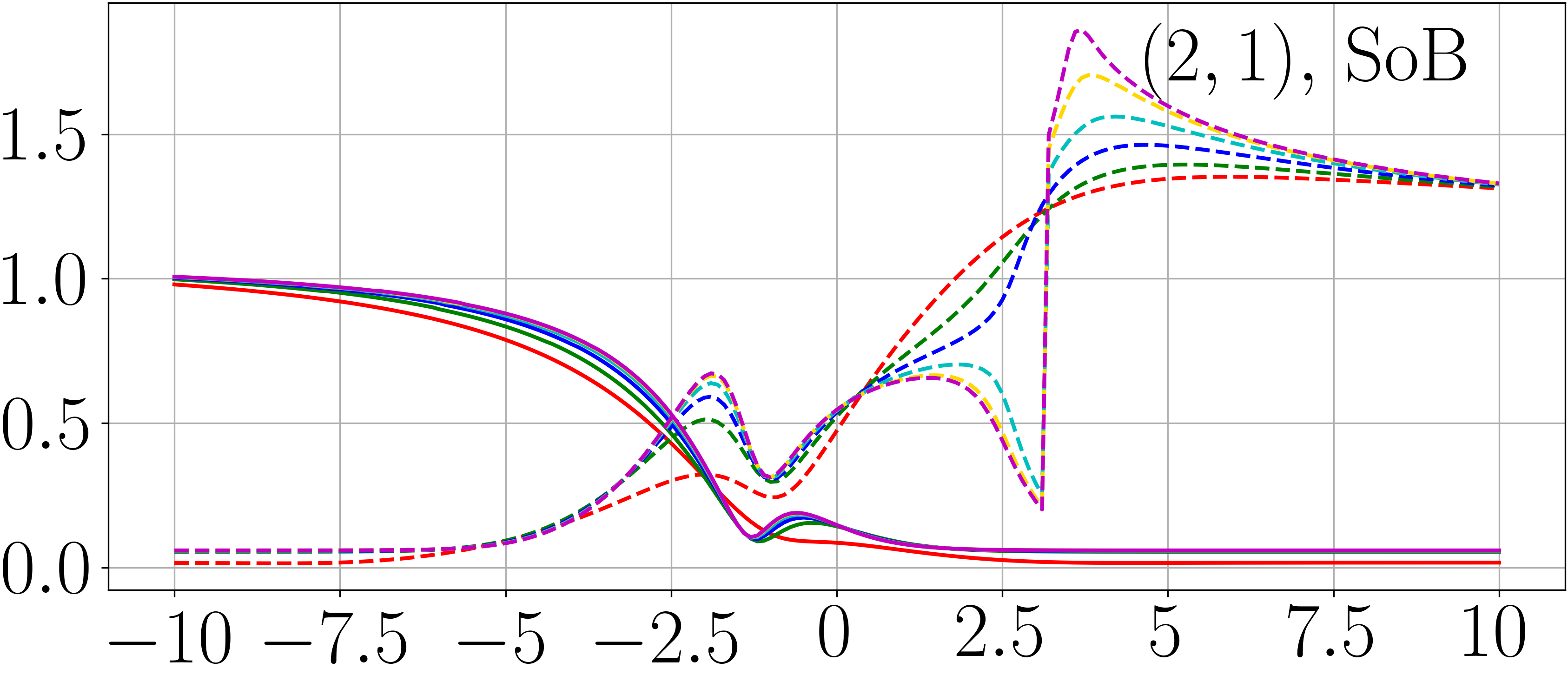}&
\includegraphics[width=4cm, bb=0 0 780 348]{./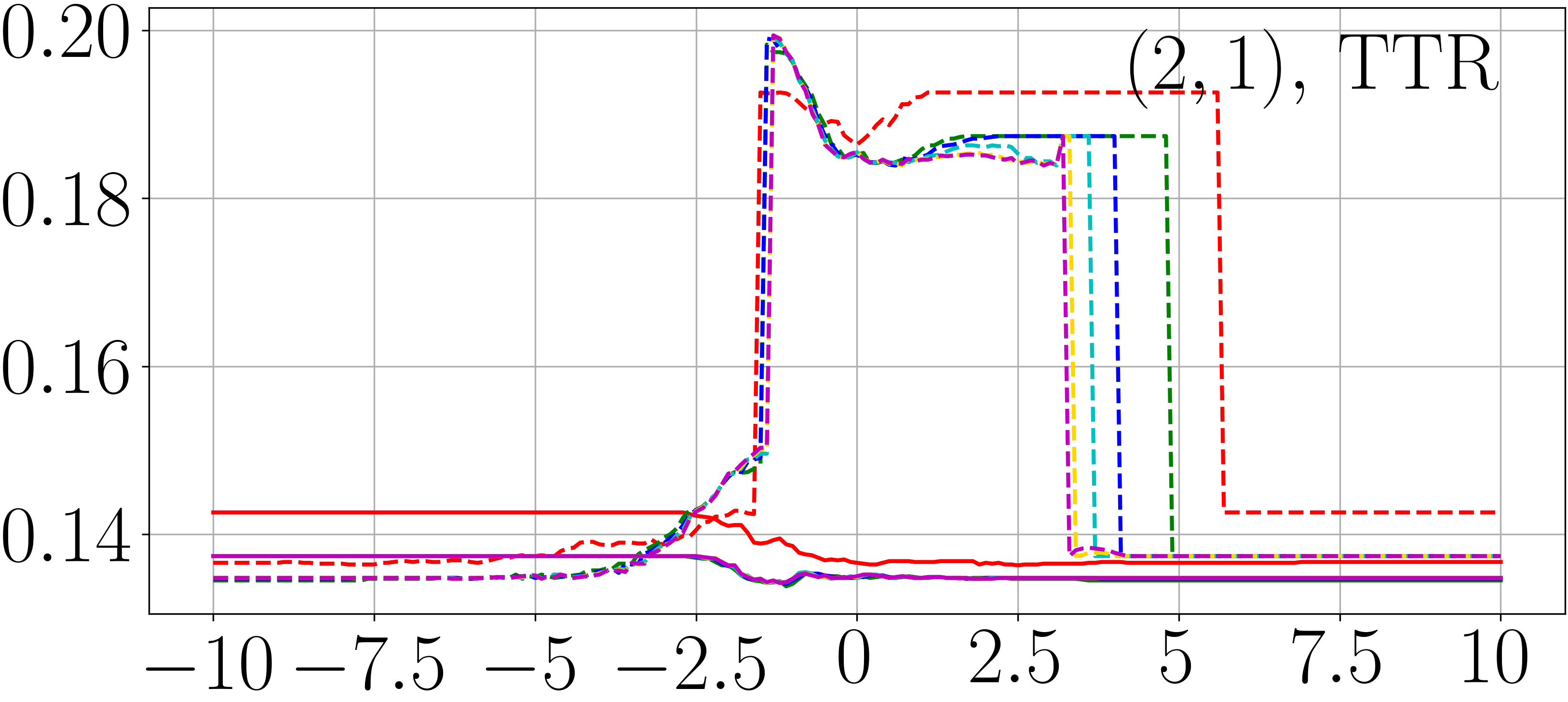}\\
\includegraphics[width=4cm, bb=0 0 780 348]{./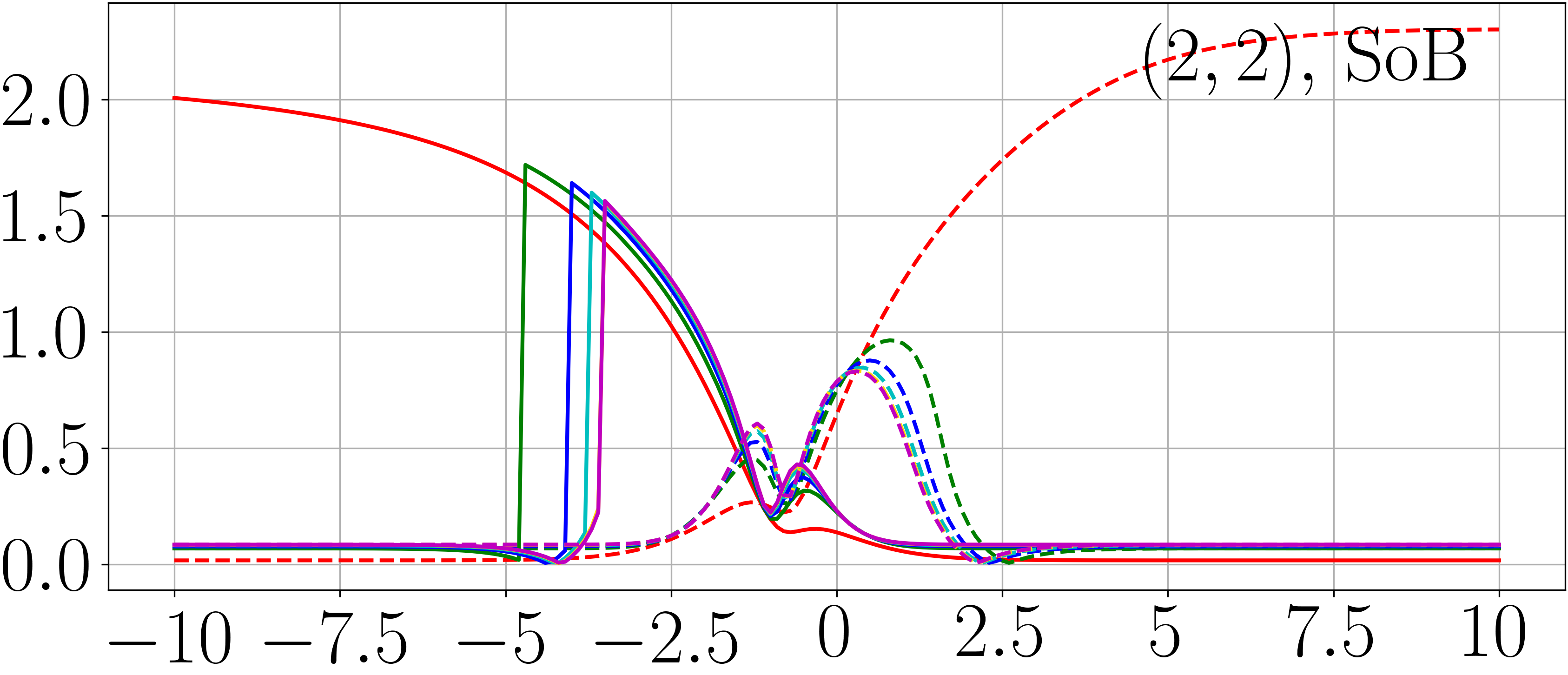}&
\includegraphics[width=4cm, bb=0 0 780 348]{./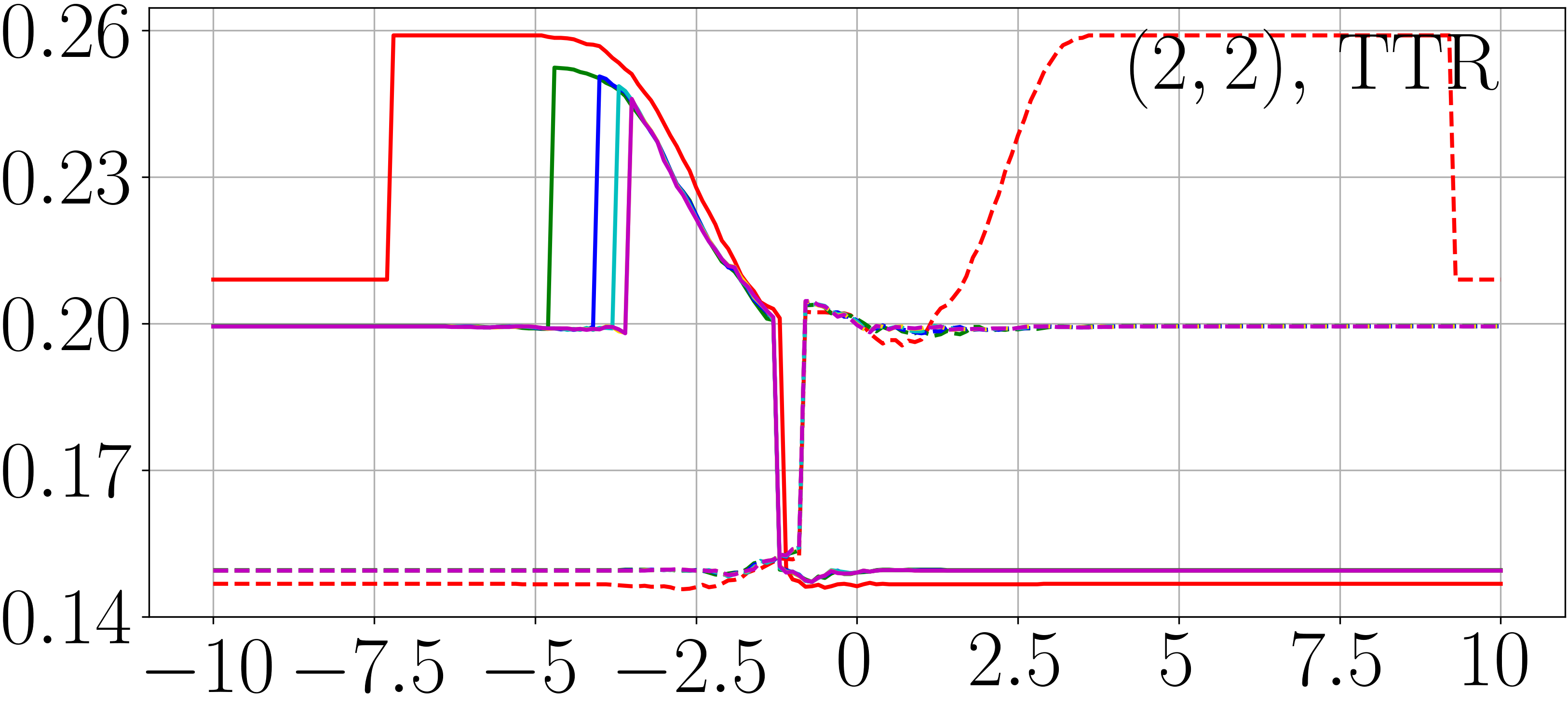}\\
\includegraphics[width=4cm, bb=0 0 780 348]{./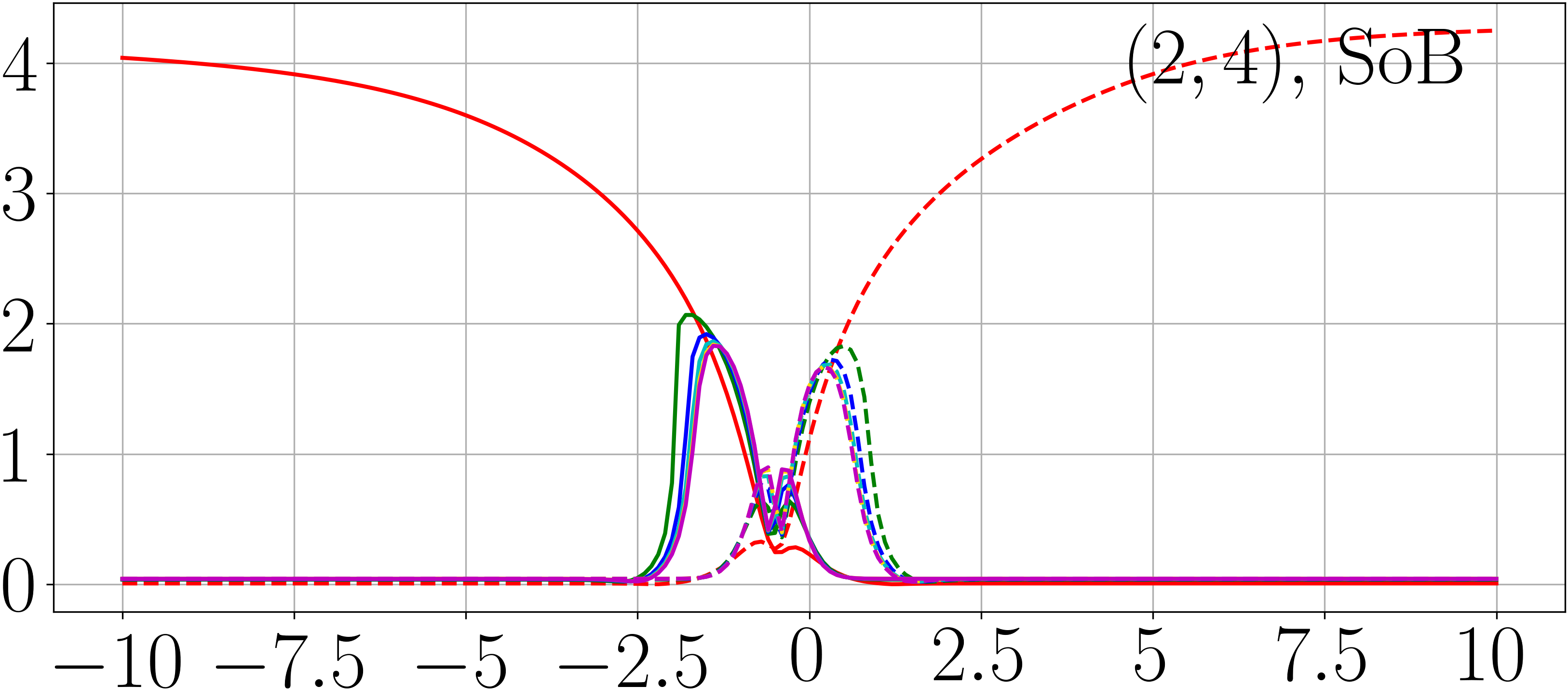}&
\includegraphics[width=4cm, bb=0 0 780 348]{./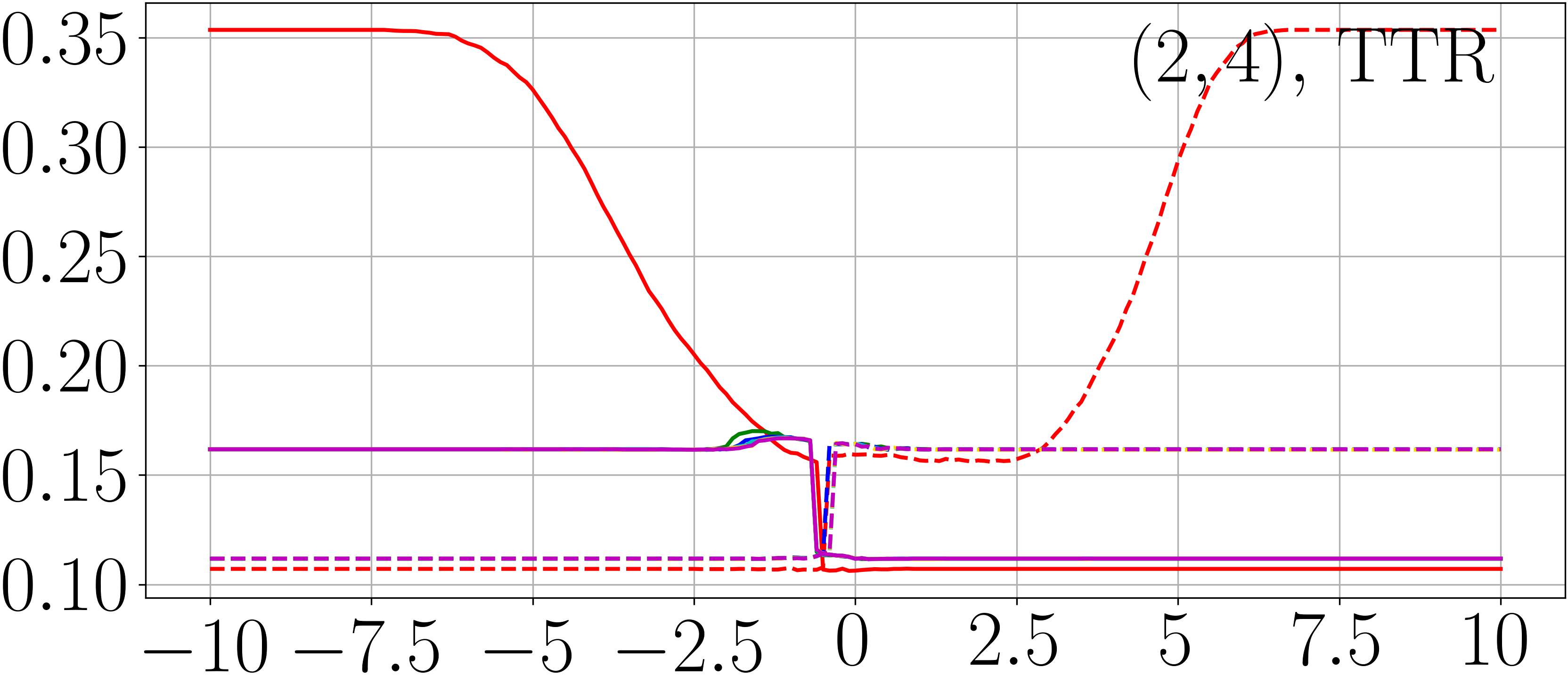}
\end{tabular}
\caption{%
Plots of SoB and TTR for LR (red), MLSLR with $\alpha=0.2,0.4,0.6,0.8$ (green, blue, cyan, yellow), and LSQLR (magenta) 
versus $x_{c,2}$, where solid and dashed lines are for $y_c=+1, -1$ (for Section \ref{sec:RSimulation}).}
\label{fig:RS}
\end{figure}

Although theoretical discussion to guarantee the robustness of an estimator often adopts the notion of `breakdown point' 
($\inf\{\epsilon\mid\sup_{(\bx_c,y_c)}\|\bbeta_{\epsilon,(\bx_c,y_c)}\|=\infty\}$),
it has been known that it does not bring about meaningful results in the context of binary classification;
For example, \cite[Theorem 1]{croux2002breakdown}%
\footnote{%
\cite[Theorem 2]{croux2002breakdown} discussed an unconventional version of breakdown point
($\inf\{\epsilon\mid\inf_{(\bx_c,y_c)}\|\bbeta_{\epsilon,(\bx_c,y_c)}\|=0\}$) too, under the linear learner class $\calG$.
However, such breakdown is different from the situation in which the largest logit takes a quite large value, 
which is regarded as a trouble in studies on LS.
Also, it does not provide suggestions for cases where flexible models such as a neural network model are used, 
because it can occur depending heavily on the linearity of the learner.
This paper thus adopts only the conventional version of the breakdown point.}
states that the breakdown (in the above-mentioned sense) in the finite-sample situation
will not happen even for LR, which is in conflict with
the observed instability of LR as confirmed by \cite{pregibon1981logistic}.
Due to such difficulty in theoretical discussion, 
we study the robustness by resorting to a simulation experiment here.
%
The focuses of this experiment are to see 
if MLSLR and LSQLR actually make LR robust
and how the choice of smoothing level $\alpha$ affects the robustness of those methods.

We consider the setting with the task with the zero-one task loss $\ell_\zo$,
the nominal distribution $\tilde{F}$ given by \eqref{eq:nominal}, and 
the point-mass contamination of $\epsilon=0.05$, $x_{c,2}=-10, -9.9, \ldots, 10$ ($x_{c,1}=1$), and $y_c=\pm1$.
We could not find a closed-form representation of $\bbeta_{\epsilon, (\bx_c, y_c)}$,
so we estimated it with a sample from $F_{\epsilon, (\bx_c, y_c)}$ of sufficiently large size 
such that the resulting variance gets negligibly small (denote it $\hat{\bbeta}_{\epsilon, (\bx_c, y_c)}$).
We calculated $\hat{\bbeta}_{\epsilon, (\bx_c, y_c)}$ with a training sample of size $n=10^4$
and evaluated SoB $\|\hat{\bbeta}_{\epsilon, (\bx_c, y_c)}-\tilde{\bbeta}\|$
and TTR $\frac{1}{m}\sum_{i=1}^m\ell_\zo(h_{\ell_\zo}(\hat{\bbeta}_{\epsilon, (\bx_c, y_c)}^\top\bx_i), y_i)$
with a sample of size $m=10^4$
(both samples follow the distribution \eqref{eq:ContDis}),
for LR, MLSLR with $\alpha=0.2, 0.4, 0.6, 0.8$, and LSQLR (by 1 trial).
We trained the model parameter in the same way as one in Section \ref{sec:ESimulation}.
Figure \ref{fig:RS} shows some of the results.

Using a larger smoothing level $\alpha$ tended to improve SoB and TTR 
for many $(\bx_c, y_c)$'s that are so anomalous for \eqref{eq:nominal}
that $\frac{1}{1+e^{-y_c\tilde{\bbeta}^\top\bx_c}}$ gets quite small
(in the setting with larger $\tilde{\beta}_2$).
This result indicates that LS greatly robustified LR with larger $\alpha$.
Also, the improvement of TTR was larger than that of the experiment in Section \ref{sec:ESimulation}, 
which clarifies that LS is promising for better classification performance.

\section{Experiment: LSLR versus MLSLR}
\label{sec:Experiment}
In this section, so as to study the significance of LSLR 
using a consistent probability estimator different from LR and MLSLR,
we perform an experimental comparison between LSLR and MLSLR 
that respectively apply probability estimators based on the R-logit model and the logit model, 
while using intrinsically the same (surrogate) loss functions based on the SKL-divergence for probability estimation.
Several previous works claim that squeezing of the logits in LSLR 
(see Theorem \ref{thm:RLM-SKL}, \hyperlink{B6}{B6})
works as regularization and is an advantage of LS.
Recalling the contrasting property that the logits of MLSLR can output quite large values
(see Theorem \ref{thm:RLM-SKL}, \hyperlink{B5}{B5}),
if this claim were correct, LSLR would perform better than MLSLR.
The experiment will test whether a regularization mechanism based on logit-squeezing actually works.

Once LSLR and MLSLR select a (common) model class $\calG$ (e.g., the network architecture),
optima of their surrogate risk $\min_{\bg\in\calG}\calR_\sur(\bg;\phi)$ will be different.
In order to reduce such a difference stemmed from 
the lack of representation ability of the model
and focus on their estimation performance,
we perform an experiment with a large-size learner model,
unlike those experiments in Sections \ref{sec:ESimulation} and \ref{sec:RSimulation}.

Following the experiments by \cite{muller2019does} with the CIFAR-10 dataset,
we trained LR, LSLR and MLSLR with $\alpha=0.2, 0.4, 0.6, 0.8$,
and LSQLR based on ResNet-18 architecture 
(while we did not use the weight-decay \cite{krogh1992simple} in the implementation of \cite{muller2019does}) 
with the training sample of size $n=5\times10^4$,
and evaluated TSR $\frac{1}{m}\sum_{i=1}^m\phi(\bg(\bx_i),y_i)$ and 
TTR $\frac{1}{m}\sum_{i=1}^m\ell_\zo(h_{\ell_\zo}(\bg(\bx_i)),y_i)$ 
(where $h_{\ell_\zo}(\bv)=\argmax_{k\in[K]}\{v_k\}$) with $m=10^4$.
We trained each model for 150 epochs using the Nesterov's accelerated SGD similar to \cite{muller2019does},
and adopted a model at the point in time 
when each test risk achieved its minimum among those evaluated at the end of each epoch.
The results for 20 trials are summarized in Table \ref{tab:CIFAR10}
(mean and STD of TTR for LR and LSQLR were $.0784_{.0046}$ and $.0762_{.0019}$).
Note that it is meaningless to compare TSRs for different $\alpha$'s,
and we compare TSRs or TTRs of LSLR and MLSLR with the same $\alpha$
or TTRs with the different $\alpha$'s.

\begin{table}[t]
\centering
\renewcommand{\tabcolsep}{5pt}
\caption{%
Mean and STD (as $\text{mean}_{\text{STD}}$) of TSR and TTR for LSLR and MLSLR (for Section \ref{sec:Experiment}).
The lower it values, the better the method is.
We showed results judged to be better by one-sided Wilcoxon rank sum test with a significance level of 0.05 in bold.}
\label{tab:CIFAR10}
\begin{tabular}{c|cccc}
\toprule
& \multicolumn{4}{c}{LSLR (upper) and MLSLR (lower)}\\
& $\alpha=0.2$ & $\alpha=0.4$ & $\alpha=0.6$ & $\alpha=0.8$ \\
\midrule
\multirow{2}{*}{TSR} &
$1.0316_{.0151}$ & $1.5413_{.0055}$ & $1.9299_{.0041}$ & $2.1942_{.0010}$ \\&
$1.0316_{.0045}$ & $\mathbf{1.5365_{.0026}}$ & $\mathbf{1.9229_{.0011}}$ & $\mathbf{2.1902_{.0004}}$ \\
\multirow{2}{*}{TTR} &
$.0781_{.0075}$ & $.0768_{.0048}$ & $.0830_{.0068}$ & $.0952_{.0059}$ \\&
$\mathbf{.0697_{.0019}}$ & $\mathbf{.0670_{.0017}}$ & $\mathbf{.0689_{.0015}}$ & $\mathbf{.0736_{.0021}}$ \\
\bottomrule
\end{tabular}
\end{table}
\begin{table}[t]
\centering
\renewcommand{\tabcolsep}{5pt}
\caption{%
Mean and STD (as $\text{mean}_{\text{STD}}$) of OPDER, OPER, and MSoR 
of the R-logit model by LSLR
evaluated for training and test sets at an epoch with the minimum TSR
(for Section \ref{sec:Experiment}).}
\label{tab:OP}
\begin{tabular}{cc|cccc}
\toprule
& & \multicolumn{4}{c}{LSLR}\\
& & $\alpha=0.2$ & $\alpha=0.4$ & $\alpha=0.6$ & $\alpha=0.8$ \\
\midrule
\multirow{3}{*}{\rotatebox{90}{training}}&
OPDER      & $.8961_{.0105}$  & $.9319_{.0148}$  & $.9646_{.0081}$  & $.9879_{.0033}$ \\&
OPER         & $.5198_{.0049}$  & $.5080_{.0023}$  & $.5024_{.0033}$  & $.4957_{.0037}$ \\&
MSoR        & $.0032_{.0007}$  & $.0043_{.0009}$  & $.0055_{.0010}$  & $.0089_{.0009}$ \\
\midrule
\multirow{3}{*}{\rotatebox{90}{test}}&
OPDER        & $.8123_{.0196}$  & $.8542_{.0158}$  & $.8947_{.0157}$  & $.9580_{.0084}$ \\&
OPER          & $.4313_{.0251}$  & $.4340_{.0373}$  & $.4356_{.0171}$  & $.4550_{.0189}$ \\&
MSoR         & $.0026_{.0006}$  & $.0035_{.0009}$  & $.0045_{.0008}$  & $.0075_{.0008}$ \\
\bottomrule
\end{tabular}
\end{table}

\begin{figure}[t]
\centering
\begin{tabular}{cc}
\includegraphics[width=4cm, bb=0 0 780 420]{./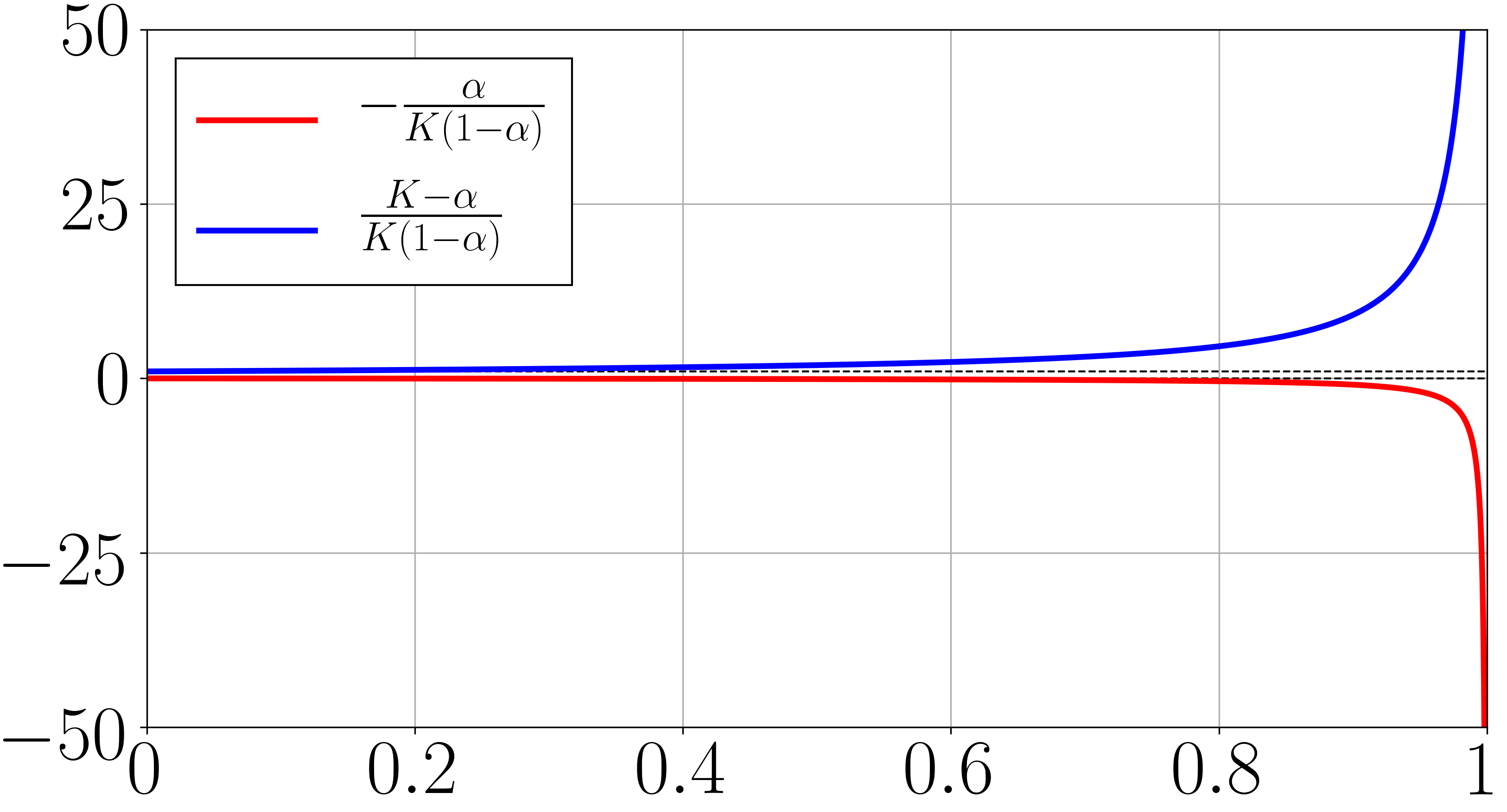}&\hskip-8pt
\includegraphics[width=4cm, bb=0 0 780 420]{./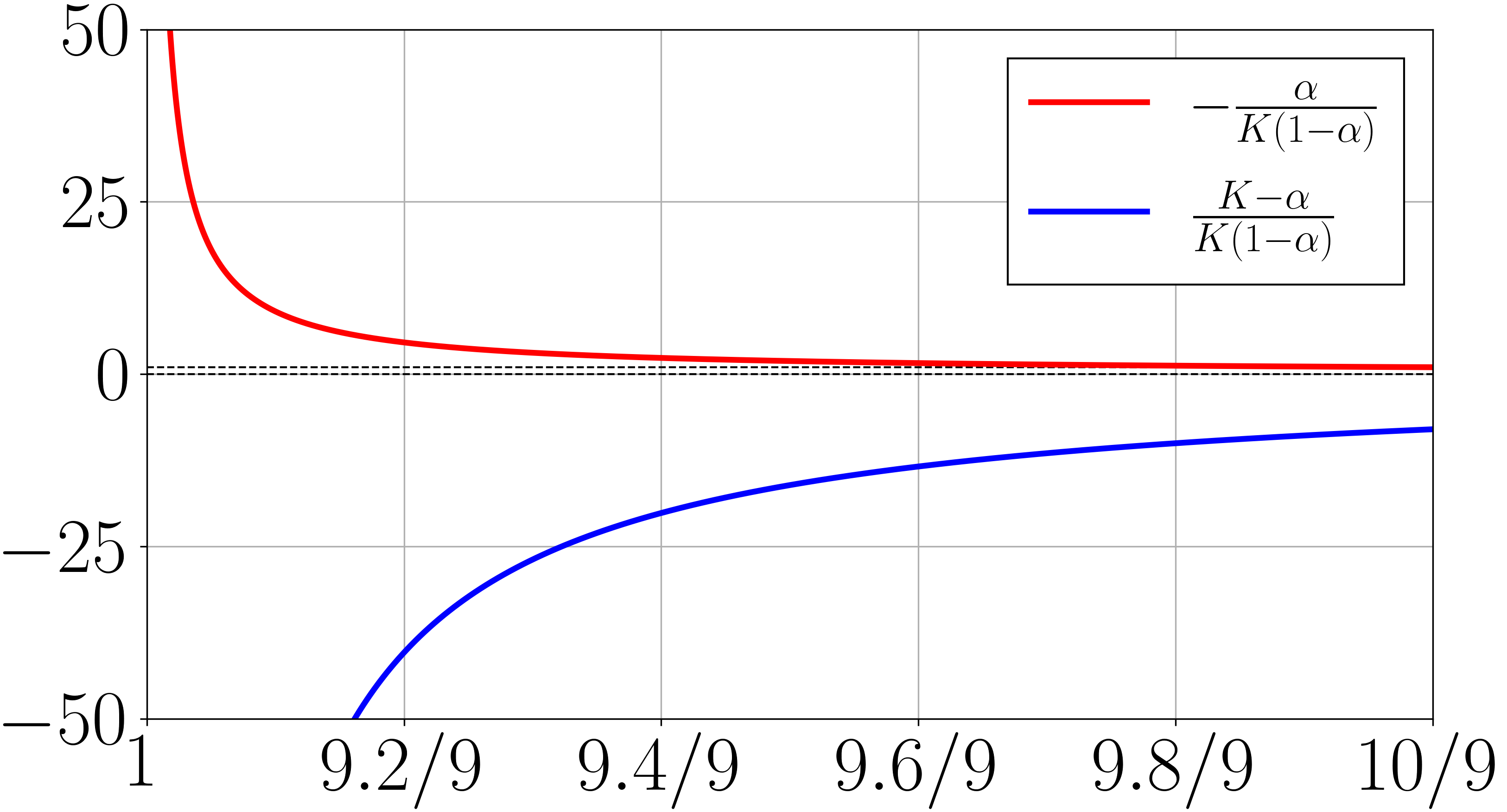}
\end{tabular}
\caption{%
Both ends $-\frac{\alpha}{K(1-\alpha)}, \frac{K-\alpha}{K(1-\alpha)}$
of the range of the element of the R-logit model $\bq_{\RLM,\alpha}$ 
versus $\alpha\in[0,1)\cup(1,\frac{K}{K-1}]$ for $K=10$.}
\label{fig:ends}
\end{figure}

The R-logit model $\bq_{\RLM,\alpha}(\bg(\bx))$, 
a consistent probability estimator of LSLR, 
has an unnecessarily larger range than that of the CPD function $\bp(\bx)$, $\Delta_{K-1}$
(see Theorem \ref{thm:RLM-SKL}, \hyperlink{B1}{B1} and Figure \ref{fig:ends}).
This fact can be interpreted as learning a probability estimator 
from an unnecessarily larger hypothesis space,
and we consider that it may prevent proper learning and 
degrade prediction performance of LSLR,
in contrast to the positive statement by existing studies for the logit-squeezing.
To evaluate the degree to which the R-logit model deviates from the probability simplex, 
we calculated the following three criteria:
Table~\ref{tab:OP} shows outlier probability distribution estimate rate (OPDER)
\begin{align}
	\frac{1}{n}\sum_{i=1}^n\mathbbm{1}\{\bq_{\RLM,\alpha}(\bg(\bx_i))\not\in\Delta_{K-1}\},
\end{align}
outlier probability estimate rate (OPER)
\begin{align}
	\frac{1}{n\times K}\sum_{i=1}^n\sum_{k=1}^K\mathbbm{1}\{q_{\RLM,\alpha,k}(\bg(\bx_i))\not\in[0,1]\},
\end{align}
and mean size of residual (MSoR)
\begin{align}
	\frac{\mathrm{OPER}}{n\times K}\sum_{i=1}^n\sum_{k=1}^K\Bigl(\Bigl|q_{\RLM,\alpha,k}(\bg(\bx_i))-\frac{1}{2}\Bigr|-\frac{1}{2}\Bigr)_+,
\end{align} 
evaluated for training and test (with $m$ instead of $n$) sets at an epoch with the minimum TSR,
where $(\cdot)_+\coloneq\max\{0,\cdot\}$.

Comparing TSRs or TTRs of LSLR and MLSLR with the same $\alpha$ (see Table~\ref{tab:CIFAR10}), 
it can be seen that the modification of the consistent probability estimator or squeezing of the logits 
does not help to improve the probability estimation and classification performance.
Rather, MLSLR had stable and better performance in many cases.
Table~\ref{tab:OP} indicates that the R-logit model $\bq_{\RLM,\alpha}(\bg(\bx))$ 
of the LSLR often, and greatly for larger $\alpha$, deviates from the probability simplex,
as Theorem \ref{thm:RLM-SKL}, \hyperlink{B1}{B1} and Figure \ref{fig:ends} also suggest.
This result is coherent with the fact that the TTR of LSLR 
is much worse than that of MLSLR for large $\alpha$, 
and supports our hypothesis about the trouble of LSLR.
These observations and considerations are novel findings and recommend MLSLR over LSLR
when one uses a large-size learner model.

Besides, the best method with respect to the TTR was MLSLR with an intermediate smoothing level $\alpha=0.4$.
Although this result is apart from the preset purpose of the experiment in this section,
it is also notable and can be understood from the trade-off 
between efficiency and robustness discussed in Sections \ref{sec:Efficiency} and \ref{sec:Robustness}:
Even a large-size neural network model ($\argmin_{\bg\in\calG}\calR_\sur(\bg;\phi)$) cannot completely represent 
the optimal solution ($\argmin_{\bg:\bbR^d\to\bbR^K}\calR_\sur(\bg;\phi)$), and due to such deviations (model misspecification), 
robustification by LS would have contributed to improve the probability estimation and classification performance.

\section{Conclusion and Future Prospect}
\label{sec:Conclusion}
This paper has proposed the loss view, that LS adopts a loss function different from that of LR,
in contrast to the regularization view, that LS is a sort of regularization techniques, adopted in most existing studies.
This loss view will also provide theoretical generalization analysis of LSLR; 
See Appendix~\ref{sec:General}.
Also, we introduced MLSLR, for fair comparison with LR, 
that adopts the same logit model as a consistent probability estimator.
Previous studies have stated
\begin{enumerate}
\item[{\hypertarget{A4}{A4}.}] 
If a teacher network is trained with LS, 
knowledge distillation into a student network is much less effective \cite{muller2019does, yuan2020revisiting}.
\item[{\hypertarget{A5}{A5}.}] 
LS is competitive with loss-correction techniques
under label noise \cite{lukasik2020does}.
\item[{\hypertarget{A6}{A6}.}] 
LS can help to speed up the convergence of SGD 
by reducing the variance \cite{xu2020towards}.
\end{enumerate}
but they regarded the inconsistent logit model as a probability estimator
of LSLR, which does not result in a fair comparison with LR.
Thus, it may be still meaningful to re-consider these statements in the introduced alternative view, 
coupled with the fact that MLSLR provided better probability estimation and classification performance than LSLR
when they depend on a large-size neural network model.

In Sections \ref{sec:Efficiency} and \ref{sec:Robustness},
we showed that MLSLR and LSQLR are less efficient but more robust than LR:
This tendency becomes more pronounced as the smoothing level is increased.
As demonstrated in Section \ref{sec:Experiment},
the selection of the smoothing level
controls the trade-off between efficiency and robustness
and is practically important for better classification performance. 
For example, \cite{li2020regularization} studied and proposed covariate-dependent adaptation of 
the smoothing level: 
it decides the smoothing level locally according to an estimated maximum conditional probability
and estimated marginal distribution of the covariate.
Also, \cite{galdran2020cost} adopted target-dependent adaptation of the smoothing level,
so called non-uniform LS.
%
The trade-off that we have discovered may lead to a more sensible selection of the smoothing level.

Since the SKL divergence, on which LS is based, is a divergence that provides a robust statistical procedure, 
it may also be associated with another class of robustifying divergence, 
such as density power divergence \cite{basu1998robust, fujisawa2008robust, cichocki2010families}.
Also, although entropy regularization techniques \cite{miyato2015distributional, pereyra2017regularizing} 
actively used in reinforcement learning would be more difficult to express 
its corresponding loss function or divergence in a closed form and theoretically analyze them, 
it might be able to interpret them and other similar logit-squeezing \cite{kannan2018adversarial, engstrom2018evaluating}
as robustification in the same way as LSLR.
These related topics are the subject of future work,
and these methods have potential to be improved like MLSLR against LSLR.

\appendices
\section{Proof of Theorems~\protect\ref{thm:RLM-SKL} and \protect\ref{thm:limit}}
\label{sec:SKL}
First, we present a proof of Theorem~\ref{thm:RLM-SKL}.
\begin{proof}[Proof of Theorem~\ref{thm:RLM-SKL}]
The statements \hyperlink{B1}{B1}, \hyperlink{B2}{B2}, \hyperlink{B7}{B7}, 
and \hyperlink{B8}{B8} can be proved by trivial calculations, so we omit the proof.

The method of Lagrange multiplier,
\begin{align}
	\begin{split}
	&\tfrac{\partial}{\partial q_k}\bigl\{D_{\SKL,\alpha}(\bp||\bq)
	-\lambda\bigl({\textstyle\sum_{k=1}^K}q_k-1\bigr)\bigr\}\\
	&=-(1-\alpha)\tfrac{(1-\alpha)p_k+\tfrac{\alpha}{K}}{(1-\alpha)q_k+\tfrac{\alpha}{K}}-\lambda=0,
	\quad\text{for }k=1,\ldots,K,
	\end{split}
\end{align}
shows that the optimal solution is determined to $\bp$ as far as $\alpha\neq1$.
This result and $D_{\SKL,\alpha}(\bp||\bp)=0$ prove \hyperlink{B3}{B3}.

On the basis of the calculus of the second derivatives 
of $D_{\SKL,\alpha}(\bp||\bq)$ in $(\bp,\bq)$,
that is, for $k,l=1,\ldots,K$ s.t.\ $k\neq l$,
\begin{align}
	\begin{split}
	&\tfrac{\partial^2D_{\SKL,\alpha}(\bp||\bq)}{\partial p_k^2}
	=\tfrac{(1-\alpha)^2}{s_\alpha(p_k)},\;
	\tfrac{\partial^2D_{\SKL,\alpha}(\bp||\bq)}{\partial q_k^2}
	=\tfrac{(1-\alpha)^2s_\alpha(p_k)}{s_\alpha(q_k)^2},\\
	&\tfrac{\partial^2D_{\SKL,\alpha}(\bp||\bq)}{\partial p_kq_k}
	=-\tfrac{(1-\alpha)^2}{s_\alpha(q_k)},\\
	&\tfrac{\partial^2D_{\SKL,\alpha}(\bp||\bq)}{\partial p_kp_l}
	=\tfrac{\partial^2D_{\SKL,\alpha}(\bp||\bq)}{\partial q_kq_l}
	=\tfrac{\partial^2D_{\SKL,\alpha}(\bp||\bq)}{\partial p_kq_l}
	=0,\\
	\end{split}
\end{align}
one has that, for $\bv=(v_1,\ldots,v_K), \bw=(w_1,\ldots,w_K)\in\bbR^K$, 
\begin{align}
	&	\begin{pmatrix}
	\bv~\bw
	\end{pmatrix}
	\nabla^2D_{\SKL,\alpha}(\bp||\bq)
	\begin{pmatrix}
	\bv~\bw
	\end{pmatrix}^\top\nonumber\\
	&=(1-\alpha)^2
	\begin{pmatrix}
	\bv~\bw
	\end{pmatrix}
	\begin{pmatrix}
	\diag\bigl\{\frac{1}{s_\alpha(p_k)}\bigr\}&\diag\bigl\{-\frac{1}{s_\alpha(q_k)}\bigr\}\nonumber\\
	\diag\bigl\{-\frac{1}{s_\alpha(q_k)}\bigr\}&\diag\bigl\{\frac{s_\alpha(p_k)}{s_\alpha(q_k)^2}\bigr\}
	\end{pmatrix}
	\begin{pmatrix}
	\bv\\\bw
	\end{pmatrix}\nonumber\\
	&=(1-\alpha)^2\sum_{k=1}^K \Bigl(\tfrac{v_k}{s_\alpha(p_k)^{1/2}}-\tfrac{s_\alpha(p_k)^{1/2}w_k}{s_\alpha(q_k)}\Bigr)^2\ge0,
\end{align}
which clarifies the statement \hyperlink{B4}{B4}.

The constraint $g_K=0$, together with the symmetry, 
implies $g_2=\cdots=g_K=0$.
Then, as $q_{\LM,K}(\bg)=\frac{1}{(K-1)+e^{g_1}}=0$,
one has $g_1=\infty$ (statement \hyperlink{B5}{B5}).
Also, since $q_{\RLM,\alpha,K}(\bg)\propto\frac{1}{(K-1)+e^{g_1}}-\frac{\alpha}{K}=0$,
one has $g_1=\ln(\frac{K}{\alpha}+1-K)$ if $\frac{K}{\alpha}+1-K>0$,
i.e., for $\alpha\in(0,1)\cup(1,\frac{K}{K-1})$ (statement \hyperlink{B6}{B6}).
\end{proof}

Next, we prove Theorem~\ref{thm:limit}.
\begin{proof}[Proof of Theorem~\ref{thm:limit}]
Taylor expansion $\ln(1+v)\approx v-\frac{1}{2}v^2$ for $v\in\bbR$ with a small absolute value shows
\begin{align}
\label{eq:SKL2}
	\begin{split}
	&D_{\SKL,\alpha}(\bp||\bq)
	=\sum_{k=1}^K\{(1-\alpha) p_k+\tfrac{\alpha}{K}\}
	\ln\tfrac{(1-\alpha) p_k+\tfrac{\alpha}{K}}{(1-\alpha) q_k+\tfrac{\alpha}{K}}\\
	&\to\sum_{k=1}^K\Bigl[\bigl\{(1-\alpha) p_k+\tfrac{\alpha}{K}\bigr\}
	\bigl\{\tfrac{(1-\alpha) K}{\alpha}p_k-\tfrac{1}{2}\bigl(\tfrac{(1-\alpha) K}{\alpha}p_k\bigr)^2\bigr\}\\
	&\hphantom{\to~}
	-\bigl\{(1-\alpha) p_k+\tfrac{\alpha}{K}\bigr\}
	\bigl\{\tfrac{(1-\alpha) K}{\alpha}q_k-\tfrac{1}{2}\bigl(\tfrac{(1-\alpha) K}{\alpha}q_k\bigr)^2\bigr\}\Bigr]\\
	&=\tfrac{(1-\alpha)^2K}{2\alpha}\|\bp-\bq\|^2+O\bigl(\tfrac{(1-\alpha)^3}{\alpha^2}\bigr),
	\end{split}
\end{align}
when $\alpha\to1$.
\end{proof}

\section{Generalized Version of Corollaries~\protect\ref{cor:ProbPred} and \protect\ref{cor:ClasCali}}
\label{sec:ProCla}
This section describes the generalized version of Corollaries~\ref{cor:ProbPred} and \ref{cor:ClasCali}
for the multi-class settings ($K>2$), cost-sensitive tasks ($\ell\neq\ell_\zo$), and $\alpha\in[0,1)\cup(1,\frac{K}{K-1}]$.

For an $\bbR^K$-valued learner,
the surrogate loss function $\phi$ for LR, LSLR, MLSLR, and LSQLR can be represented as
\begin{align}
	\begin{split}
	\phi_\LR(\bv,y)&\,{\textstyle=-\sum_{k=1}^Kt_k(y)\ln\bigl(e^{v_k}/\sum_{l=1}^Ke^{v_l}\bigr)},\\
	\phi_{\LS,\alpha}(\bv,y)&\,{\textstyle=-\sum_{k=1}^Ks_\alpha(t_k(y))\ln\bigl(e^{v_k}/\sum_{l=1}^Ke^{v_l}\bigr)},\\
	\phi_{\MLS,\alpha}(\bv,y)&\,{\textstyle=-\sum_{k=1}^Ks_\alpha(t_k(y))\ln s_\alpha\bigl(e^{v_k}/\sum_{l=1}^Ke^{v_l}\bigr)},\\
	\phi_\LSQ(\bv,y)&\,{\textstyle=\sum_{k=1}^K\bigl(t_k(y)-e^{v_k}/\sum_{l=1}^Ke^{v_l}\bigr)^2}.
	\end{split}
\end{align}
Then, one has the following results:
\begin{proposition}
\label{prop:ProbPred2}
Assume $\alpha\in[0,1)\cup(1,\frac{K}{K-1}]$, and let 
$\bar{\bg}\in\argmin_{\bg:\bbR^d\to\bbR^K}\calR_\sur(\bg;\phi)$.
Then, regardless of the distribution of $(\bX,Y)$,
$\bq_\LM(\bar{\bg}(\bx))=\bp(\bx)$ a.s.~for LR, MLSLR, and LSQLR,
and $\bq_{\RLM,\alpha}(\bar{\bg}(\bx))=\bp(\bx)$ a.s.~for LSLR.
\end{proposition}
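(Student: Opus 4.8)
The plan is to reduce the statement to a pointwise minimization over the learner output at each fixed $\bx$, exploiting that the learner class $\calG=\{\bg:\bbR^d\to\bbR^K\}$ is unrestricted. Since $\calR_\sur(\bg;\phi)=\bbE_\bX[\bbE_{Y|\bX}[\phi(\bg(\bX),Y)]]$ and the minimization is over all measurable functions, for $p_0$-a.e.\ $\bx$ the minimizer $\bar\bg(\bx)$ must minimize the conditional surrogate risk $\bv\mapsto\sum_{k=1}^K p_k(\bx)\,\phi(\bv,k)$. So I would first write out this conditional risk explicitly for each of the four losses, using the tabulated $\phi_\LR,\phi_{\LS,\alpha},\phi_{\MLS,\alpha},\phi_\LSQ$ and the fact that $\sum_k t_k(y)p_k(\bx)=p_k(\bx)$ (and $\sum_y s_\alpha(t_k(y))p_y(\bx)=s_\alpha(p_k(\bx))$ for the smoothed cases).

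Next, for LR this conditional risk becomes $-\sum_k p_k(\bx)\ln q_{\LM,k}(\bv)=D_\KL(\bp(\bx)\|\bq_\LM(\bv))+(\text{const in }\bv)$, so by the consistency property \eqref{eq:KL-consis} the optimal $\bq_\LM(\bar\bg(\bx))$ equals $\bp(\bx)$. For MLSLR the conditional risk is $-\sum_k s_\alpha(p_k(\bx))\ln s_\alpha(q_{\LM,k}(\bv))=D_{\SKL,\alpha}(\bp(\bx)\|\bq_\LM(\bv))+(\text{const in }\bv)$, and the consistency property \eqref{eq:SKL-consis} from Theorem~\ref{thm:RLM-SKL}, \hyperlink{B3}{B3} — valid for $\alpha\in[0,1)\cup(1,\frac{K}{K-1}]$ and minimization over $\calS=\Delta_{K-1}$ — gives $\bq_\LM(\bar\bg(\bx))=\bp(\bx)$. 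For LSQLR the conditional risk is $\sum_k\{p_k(\bx)-2p_k(\bx)q_{\LM,k}(\bv)+q_{\LM,k}(\bv)^2\}=\|\bp(\bx)-\bq_\LM(\bv)\|^2+(1-\|\bp(\bx)\|^2)$, so $D_\SQ$ has the obvious consistency $\argmin_{\bq\in\Delta_{K-1}}\|\bp-\bq\|^2=\bp$. For LSLR the conditional risk equals $-\sum_k s_\alpha(p_k(\bx))\ln q_{\LM,k}(\bv)=D_{\SKL,\alpha}(\bp(\bx)\|\bs_\alpha^{-1}(\bq_\LM(\bv)))+(\text{const})=D_{\SKL,\alpha}(\bp(\bx)\|\bq_{\RLM,\alpha}(\bv))+(\text{const})$ after inserting the cancelling $\bs_\alpha$ and $\bs_\alpha^{-1}$; again \hyperlink{B3}{B3} yields $\bq_{\RLM,\alpha}(\bar\bg(\bx))=\bp(\bx)$.

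The one genuine gap is that $\bq_\LM(\bv)$ ranges only over the relative interior of $\Delta_{K-1}$ (it never attains a zero component) and, for $\alpha\in(1,\frac{K}{K-1}]$, $\bq_{\RLM,\alpha}(\bv)$ ranges over a set strictly containing $\Delta_{K-1}$. I would handle this by arguing that $D_{\SKL,\alpha}(\bp(\bx)\|\cdot)$ (and $D_\KL$, $D_\SQ$) is continuous on the closure and uniquely minimized at $\bp(\bx)$, so the infimum over the open range is still $\bp(\bx)$ and is attained in the limit; the existence of an actual minimizer $\bar\bg$ when $\bp(\bx)$ has zero components requires allowing $\bar g_k(\bx)\to-\infty$ — this is the same convention as in \hyperlink{B5}{B5}, \hyperlink{B6}{B6}, where extended-real minimizers are used, and I would state explicitly that we interpret $\argmin$ in that extended sense (or equivalently restrict attention to $\bp(\bx)$ in the relative interior, which is the generic case).

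The main obstacle, then, is not the divergence identities — those are routine given Theorem~\ref{thm:RLM-SKL} — but the measurable-selection / attainment bookkeeping: justifying that a measurable $\bar\bg$ minimizing the global surrogate risk exists and that its pointwise values coincide a.s.\ with pointwise conditional minimizers, and cleanly stating the boundary convention for CPDs with zero entries. I expect to dispatch this with a one-line appeal to the interchange of minimization and expectation for unrestricted function classes plus the uniqueness of the divergence minimizer, exactly as in the proof of Corollary~\ref{cor:ProbPred}, of which this proposition is the multi-class, cost-insensitive-aside generalization.
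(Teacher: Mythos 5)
Your proposal is correct and follows essentially the paper's own route: the paper dispatches Proposition~\ref{prop:ProbPred2} by the pointwise decomposition of $\calR_\sur$ and a direct appeal to the consistency property of Theorem~\ref{thm:RLM-SKL}, \hyperlink{B3}{B3} (resp.\ \eqref{eq:KL-consis} and $D_\SQ$), exactly as you do. Your extra care about attainment when $\bp(\bx)$ has zero components is a boundary convention the paper leaves implicit (cf.\ the extended-real minimizers in \hyperlink{B5}{B5}--\hyperlink{B6}{B6}), not a different argument.
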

\begin{proposition}
\label{prop:ClasCali2}
Assume $\alpha\in[0,1)\cup(1,\frac{K}{K-1}]$, and let 
$\bar{\bg}\in\argmin_{\bg:\bbR^d\to\bbR^K}\calR_\sur(\bg;\phi)$.
Then, regardless of the distribution of $(\bX,Y)$,
$h_{\ell}(\bv)=\argmin_{l\in[K]}\sum_{k=1}^Kq_{\LM,k}(\bv)\ell(l,k)$ satisfies
$\calR_\tsk(h_{\ell}\circ\bar{\bg};\ell)=\inf_{f:\bbR^d\to[K]}\calR_\tsk(f;\ell)$
for LR, MLSLR, and LSQLR, and 
$h_{\ell}(\bv)=\argmin_{l\in[K]}\sum_{k=1}^Kq_{\RLM,\alpha,k}(\bv)\ell(l,k)$ satisfies
$\calR_\tsk(h_{\ell}\circ\bar{\bg};\ell)=\inf_{f:\bbR^d\to[K]}\calR_\tsk(f;\ell)$
for LSLR.
\end{proposition}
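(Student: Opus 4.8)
The plan is to prove Proposition~\ref{prop:ClasCali2} by the standard pointwise reduction used in classification-calibration arguments, combined with Proposition~\ref{prop:ProbPred2}. First I would recall that $\calR_\tsk(h\circ\bg;\ell)=\bbE_\bX\bigl[\sum_{k=1}^K p_k(\bX)\,\ell(h(\bg(\bX)),k)\bigr]$, so the infimum over all measurable $f:\bbR^d\to[K]$ is attained pointwise by the Bayes rule $f^*(\bx)\in\argmin_{l\in[K]}\sum_{k=1}^K p_k(\bx)\ell(l,k)$, and the Bayes risk is $\bbE_\bX\bigl[\min_{l\in[K]}\sum_{k=1}^K p_k(\bX)\ell(l,k)\bigr]$. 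Hence it suffices to show that, for a.e.\ $\bx$, the label produced by the prescribed $h_\ell$ applied to $\bar\bg(\bx)$ lies in $\argmin_{l\in[K]}\sum_{k=1}^K p_k(\bx)\ell(l,k)$.

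The key step is to invoke Proposition~\ref{prop:ProbPred2}: for LR, MLSLR, and LSQLR we have $\bq_\LM(\bar\bg(\bx))=\bp(\bx)$ a.s., and for LSLR we have $\bq_{\RLM,\alpha}(\bar\bg(\bx))=\bp(\bx)$ a.s. Therefore, for LR/MLSLR/LSQLR,
\begin{align}
h_\ell(\bar\bg(\bx))
\in\underset{l\in[K]}{\argmin}\sum_{k=1}^K q_{\LM,k}(\bar\bg(\bx))\,\ell(l,k)
=\underset{l\in[K]}{\argmin}\sum_{k=1}^K p_k(\bx)\,\ell(l,k)
\qquad\text{a.s.},
\end{align}
and the identical chain of equalities holds for LSLR with $q_{\LM,k}$ replaced by $q_{\RLM,\alpha,k}$. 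Substituting this into the pointwise risk expression gives $\sum_{k=1}^K p_k(\bx)\ell(h_\ell(\bar\bg(\bx)),k)=\min_{l\in[K]}\sum_{k=1}^K p_k(\bx)\ell(l,k)$ a.s., and taking $\bbE_\bX$ of both sides yields $\calR_\tsk(h_\ell\circ\bar\bg;\ell)=\inf_{f}\calR_\tsk(f;\ell)$, which is the claim.

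The only genuinely non-routine point is justifying the pointwise (a.s.) characterization of $\bar\bg=\argmin_{\bg:\bbR^d\to\bbR^K}\calR_\sur(\bg;\phi)$ — i.e.\ that one may optimize the surrogate risk separately at each $\bx$ — and, relatedly, making sure the argmin-valued labeling is measurable so that the composition $h_\ell\circ\bar\bg$ is a legitimate classifier; but the first of these is exactly what Proposition~\ref{prop:ProbPred2} already delivers, and the second is handled by choosing a measurable selection from the argmin correspondence (e.g.\ the smallest minimizing index), so no real obstacle remains. A minor subtlety worth a remark is that when $\alpha\in(1,\frac{K}{K-1}]$ the R-logit outputs may leave $\Delta_{K-1}$, but this does not affect the argument: $h_\ell$ is defined directly through the (possibly signed) components $q_{\RLM,\alpha,k}(\bv)$, and on the a.s.\ event where $\bq_{\RLM,\alpha}(\bar\bg(\bx))=\bp(\bx)\in\Delta_{K-1}$ the weighting coincides with the true CPD, which is all that is used. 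I would close by noting that the boundary case $\alpha=\frac{K}{K-1}$ is likewise covered since Proposition~\ref{prop:ProbPred2} is stated for $\alpha\in[0,1)\cup(1,\frac{K}{K-1}]$.
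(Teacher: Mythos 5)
Your argument is correct and matches the paper's own reasoning: the paper likewise derives this proposition directly from Proposition~\ref{prop:ProbPred2} together with the expression $\calR_\tsk(f;\ell)=\bbE_\bX\bigl[\sum_{k=1}^K\Pr(Y=k|\bX)\ell(f(\bX),k)\bigr]$, so that the prescribed $h_\ell$ applied to $\bar{\bg}(\bx)$ realizes the pointwise Bayes rule a.s. Your additional remarks on measurable selection and the $\alpha\in(1,\frac{K}{K-1}]$ case are harmless elaborations of the same route.
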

Theorem \ref{thm:RLM-SKL}, \hyperlink{B3}{B3} shows Proposition \ref{prop:ProbPred2},
and Proposition \ref{prop:ClasCali2} is trivial from Proposition \ref{prop:ProbPred2},
considering the form of the task risk, $\calR_\tsk(f,\ell)=\bbE_\bX[\sum_{k=1}^K\Pr(Y=k|\bX)\ell(f(\bX),k)]$.
Also, one has to pay attention to the 
labeling function of LSLR with $\alpha\in(1,\frac{K}{K-1}]$
even for the task with the zero-one task loss $\ell_\zo$.
\begin{corollary}
\label{cor:ClasCali3}
Under the assumption of Proposition~\ref{prop:ClasCali2},
$h_{\ell_\zo}(\bv)=\argmax\{v_k\}_k$ for LR, LSLR with with $\alpha\in[0,1)$, 
MLSLR with $\alpha\in[0,1)\cup(1,\frac{K}{K-1}]$, and LSQLR,
and $h_{\ell_\zo}(\bv)=\argmin\{v_k\}_k$ for LSLR with with $\alpha\in(1,\frac{K}{K-1}]$.
\end{corollary}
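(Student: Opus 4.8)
The plan is to specialize the labeling formulas of Proposition~\ref{prop:ClasCali2} to the zero-one task loss $\ell_\zo$ and then exploit monotonicity of the relevant link functions. First I would observe that for any vector $\bq=(q_1,\ldots,q_K)^\top$ with $\sum_{k=1}^K q_k=1$ one has $\sum_{k=1}^K q_k\ell_\zo(l,k)=\sum_{k\neq l}q_k=1-q_l$, hence $\argmin_{l\in[K]}\sum_{k=1}^K q_k\ell_\zo(l,k)=\argmax_{l\in[K]}q_l$. Applying this with $\bq=\bq_\LM(\bv)$ turns the labeling of Proposition~\ref{prop:ClasCali2} into $h_{\ell_\zo}(\bv)=\argmax_l q_{\LM,l}(\bv)$ for LR, MLSLR, and LSQLR; applying it with $\bq=\bq_{\RLM,\alpha}(\bv)$ gives $h_{\ell_\zo}(\bv)=\argmax_l q_{\RLM,\alpha,l}(\bv)$ for LSLR, where I would first invoke Theorem~\ref{thm:RLM-SKL}, \hyperlink{B2}{B2} to secure $\sum_k q_{\RLM,\alpha,k}(\bv)=1$ for $\alpha\neq1$, so that the identity $\sum_k q_k\ell_\zo(l,k)=1-q_l$ is legitimate even though the R-logit components need not lie in $[0,1]$.

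Next I would reduce $\argmax_l q_{\LM,l}(\bv)$ to $\argmax_l v_l$: because $q_{\LM,l}(\bv)=e^{v_l}/\sum_m e^{v_m}$ shares a common denominator, $q_{\LM,l}(\bv)\ge q_{\LM,l'}(\bv)\iff v_l\ge v_{l'}$, so the two $\argmax$ sets coincide. This settles LR, MLSLR (for every admissible $\alpha$, since MLSLR always uses the logit model), and LSQLR. For LSLR I would then substitute $q_{\RLM,\alpha,l}(\bv)=s_\alpha^{-1}(q_{\LM,l}(\bv))$ with the affine inverse $s_\alpha^{-1}(u)=(u-\tfrac{\alpha}{K})/(1-\alpha)$: this map is strictly increasing when $\alpha\in[0,1)$, giving $\argmax_l q_{\RLM,\alpha,l}(\bv)=\argmax_l q_{\LM,l}(\bv)=\argmax_l v_l$, and strictly decreasing when $\alpha\in(1,\tfrac{K}{K-1}]$, giving $\argmax_l q_{\RLM,\alpha,l}(\bv)=\argmin_l q_{\LM,l}(\bv)=\argmin_l v_l$; this is exactly the dichotomy in the statement.

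I do not expect a real obstacle. The only point needing care is that the identity $\sum_k q_k\ell_\zo(l,k)=1-q_l$ rests on $\sum_k q_k=1$ rather than on $q_k\in[0,1]$, so the LSLR case with $\alpha\in(1,\tfrac{K}{K-1}]$ genuinely depends on Theorem~\ref{thm:RLM-SKL}, \hyperlink{B2}{B2} (and on $\alpha=1$ being outside the admissible range). Tie-handling is cosmetic: the assertions are equalities of $\argmax$/$\argmin$ sets, and these are preserved by the strictly monotone maps above, so any fixed measurable selection may be transported across them.
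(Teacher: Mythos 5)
Your proposal is correct and follows exactly the route the paper intends: the corollary is stated as an immediate specialization of Proposition~\ref{prop:ClasCali2} to $\ell_\zo$, using $\sum_k q_k\ell_\zo(l,k)=1-q_l$ (which, as you note, needs only the sum-to-one property from Theorem~\ref{thm:RLM-SKL}, \hyperlink{B2}{B2}), the order-preservation of the softmax, and the fact that $s_\alpha^{-1}$ is increasing for $\alpha\in[0,1)$ and decreasing for $\alpha\in(1,\frac{K}{K-1}]$. No gap; your care about the R-logit components possibly lying outside $[0,1]$ is exactly the point the paper flags.
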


\section{Proof of Theorems~\protect\ref{thm:consistency} and \protect\ref{thm:normality}}
\label{sec:ProofAN}
\cite{fahrmeir1985consistency} shows 
Theorems \ref{thm:consistency} and \ref{thm:normality}
for the loss $\varphi_\LR$.
Also, the surrogate losses $\varphi_{\MLS,\alpha}$, $\alpha\in(0,1)$ and $\varphi_\SQ$ are bounded, 
and Theorems \ref{thm:consistency} and \protect\ref{thm:normality} for these losses
can be proved in a way similar to that for \cite[Theorems 2.3 and 2.4]{bianco1996robust},
which is based on \cite[Chapter 6]{huber2004robust};
Refer to these studies for the proof of Theorems \ref{thm:consistency} and \ref{thm:normality}.
Additionally, we note that these theorems can be extended to 
the case $\alpha\in(1,2]$ (which is $(1,\frac{K}{K-1}]$ for $K=2$):
As Theorem \ref{thm:RLM-SKL}, \hyperlink{B8}{B8}, 
$\rmA_{\MLS,\alpha}=\rmA_{\MLS,2-\alpha}$,
and $\rmB_{\MLS,\alpha}=\rmB_{\MLS,2-\alpha}$ suggest,
MLSLRs with $\alpha\in(1,2]$ and $2-\alpha\in[0,1)$ provide the same result.

\section{Discussion on LSLR and MLSLR with $\alpha\in(1,\frac{K}{K-1}]$}
\label{sec:OutAlpha}
\begin{table}[t]
\centering
\renewcommand{\tabcolsep}{3pt}
\caption{%
$\alpha\in(1,\frac{K}{K-1}]$-version of Table~\ref{tab:CIFAR10} (for Appendix \ref{sec:OutAlpha}).}
\label{tab:CIFAR10Out}
\begin{tabular}{c|ccccc}
\toprule
& \multicolumn{5}{c}{LSLR (upper) and MLSLR (lower)}\\
& $\alpha=\frac{9.2}{9}$ & $\alpha=\frac{9.4}{9}$ & $\alpha=\frac{9.6}{9}$ & $\alpha=\frac{9.8}{9}$ & $\alpha=\frac{10}{9}$\\
\midrule
\multirow{2}{*}{TSR} &
$2.3015_{.0000}$ & $2.2953_{.0001}$ & $2.2829_{.0001}$ & $2.2616_{.0002}$ & $2.2233_{.0006}$ \\&
$\mathbf{2.3006_{.0000}}$ & $\mathbf{2.2939_{.0000}}$ & $\mathbf{2.2811_{.0004}}$ & $\mathbf{2.2597_{.0003}}$ & $2.2238_{.0008}$ \\
\multirow{2}{*}{TTR} &
$.3300_{.0094}$ & $.1627_{.0059}$ & $.1152_{.0026}$ & $.0946_{.0020}$ & $.0797_{.0019}$\\&
$\mathbf{.0989_{.0029}}$ & $\mathbf{.0752_{.0018}}$ & $\mathbf{.0721_{.0084}}$ & $\mathbf{.0713_{.0026}}$ & $\mathbf{.0682_{.0014}}$\\
\bottomrule
\end{tabular}
\end{table}
\begin{table}[t]
\centering
\renewcommand{\tabcolsep}{3pt}
\caption{%
$\alpha\in(1,\frac{K}{K-1}]$-version of Table~\ref{tab:OP} (for Appendix \ref{sec:OutAlpha}).}
\label{tab:OPOut}
\begin{tabular}{cc|ccccc}
\toprule
& & \multicolumn{5}{c}{LSLR}\\
& & $\alpha=\frac{9.2}{9}$ & $\alpha=\frac{9.4}{9}$ & $\alpha=\frac{9.6}{9}$ & $\alpha=\frac{9.8}{9}$ & $\alpha=\frac{10}{9}$\\
\midrule
\multirow{3}{*}{\rotatebox{90}{training}}&
OPDER  & $.9885_{.0016}$ & $.9978_{.0003}$ & $.9993_{.0001}$ & $.9997_{.0001}$ & $.9957_{.0009}$ \\&
OPER     & $.3278_{.0044}$ & $.4303_{.0036}$ & $.4768_{.0036}$ & $.4922_{.0015}$ & $.4392_{.0026}$ \\&
MSoR     & $.0509_{.0025}$ & $.0322_{.0020}$ & $.0244_{.0011}$ & $.0193_{.0009}$ & $.0207_{.0009}$ \\
\midrule
\multirow{3}{*}{\rotatebox{90}{test}}&
OPDER  & $.9934_{.0017}$ & $.9983_{.0005}$ & $.9996_{.0002}$ & $.9997_{.0002}$ & $.9919_{.0016}$ \\&
OPER     & $.3652_{.0043}$ & $.4348_{.0057}$ & $.4755_{.0117}$ & $.4922_{.0080}$ & $.4322_{.0092}$ \\&
MSoR     & $.0579_{.0037}$ & $.0328_{.0028}$ & $.0235_{.0015}$ & $.0193_{.0011}$ & $.0213_{.0022}$ \\
\bottomrule
\end{tabular}
\end{table}

In the binary setting $K=2$, Theorem \ref{thm:RLM-SKL}, \hyperlink{B8}{B8} 
suggests that LSLRs or MLSLRs with $\alpha\in(1,2]$ and $2-\alpha\in[0,1)$ 
give the same probability estimation and classification performance.
On the other hand, in a multi-class setting $K>2$, an analogy for LSLRs or MLSLRs 
with $\alpha\in(1,\frac{K}{K-1}]$ and $K-(K-1)\alpha\in[0,1)$ may not exactly hold;
See Theorem \ref{thm:RLM-SKL}, \hyperlink{B7}{B7}.
It would be more difficult to understand the results for $\alpha\in(1,\frac{K}{K-1}]$ 
than to understand the results for $\alpha\in[0,1)$ in a multi-class setting $K>2$
in our way of interpreting the LS technique with respect to LR, 
because the difference from LR ($\alpha=0$) gets bigger
(just like that the mathematical approximation becomes less accurate for a more distant point).
Therefore, we here report only considerations based on the experimental observations.

We took CIFAR-10 experiments similar to those in 
Section~\ref{sec:Experiment} for $\alpha\in(1,\frac{K}{K-1}]$.
We tried LSLR and MLSLR for $\alpha=\frac{9.2}{9},\frac{9.4}{9},\frac{9.6}{9},\frac{9.8}{9},\frac{10}{9}$
under the task with the zero-one task loss $\ell_\zo$,
where we used the labeling function $h_{\ell_\zo}$ described in Corollary \ref{cor:ClasCali3}.
The results are shown in Tables~\ref{tab:CIFAR10Out} and \ref{tab:OPOut}.

As Figure~\ref{fig:ends} and Table~\ref{tab:OPOut} show,
the R-logit model $\bq_{\RLM,\alpha}(\bg(\bx))$ of the LSLR 
often and largely deviates from the probability simplex, 
and LSLR gave worse TSR and TTR than MLSLR with the same $\alpha$.
When $\alpha\in(1,\frac{K}{K-1}]$, 
the smaller $\alpha$ is, the larger the deviation tended, which negatively affected 
the probability estimation and classification performance of LSLR.
Also, all TTRs of MLSLR with $\alpha\in(1,\frac{K}{K-1}]$ 
were worse than TTR of MLSLR with $\alpha=0.4$,
the best result of MLSLR with $\alpha\in[0,1)$.
In the CIFAR-10 experiments ($K=10$), 
we could not find advantage of choosing $\alpha\in(1,\frac{K}{K-1}]$.

\section{Discussion on Generalization Analysis}
\label{sec:General}
Several previous studies have provided experimental investigations 
of the generalization performance of LSLR, 
but there has not been theoretical analysis.
This may be attributed to the fact that many studies view 
the LS technique as regularization added to LR.
The loss view gives generalization analysis of LSLR collaterally.
Although it does not present a meaningful comparison between LR and LSLR, 
we here give discussions on generalization analysis in probability estimation and classification 
tasks for LSLR from the loss view to compensate for the absence of theory,
and mention challenges in this direction.

According to \cite[Theorem 4]{bartlett2006convexity},
one can obtain a generalization bound for LSLR (and LR) that uses 
a classification calibrated, Lipschitz continuous and convex loss function $\varphi$
in a simple setting of Section \ref{sec:Setting}.
This bound is governed by a covexified variational transformation of the loss $\varphi$
(called $\psi$-transform in \cite{bartlett2006convexity}),
and the tightest possible upper bound uniform over all probability distributions.
By viewing the LS technique as modification of the surrogate loss, 
a lot of theories by existing research can be directly applied 
to analyze the properties of LSLR in other various settings;
See \cite[Theorem 5]{bartlett2006convexity} under low-noise assumption 
(which assumes that $p_1(\bx)$ is unlikely to be close to $1/2$),
\cite[Chapter 10]{cucker2007learning}
for a norm-regularized version using a kernel-based learner,
and \cite{pires2013cost} for multi-class cost-sensitive tasks.
However, it also should be noted that such results grounded on 
learning theories are often loose bounding-based evaluation 
and contain quantities such like $\inf_{\bg\in\calG}\calR_\sur(\bg;\phi)$ or $\psi$-transform
that cannot be known in advance or may vary for methods using different surrogate losses, 
making it difficult to make clear comparisons between methods using different losses.

\section*{Acknowledgment}
This work was supported by Grant-in-Aid for JSPS Fellows, Number 20J23367.

\bibliographystyle{IEEEtran}
\bibliography{bibtex}

\vspace*{-2\baselineskip}
\begin{IEEEbiography}[{\includegraphics[width=1in,height=1.25in,clip,bb=0 0 640 800]{./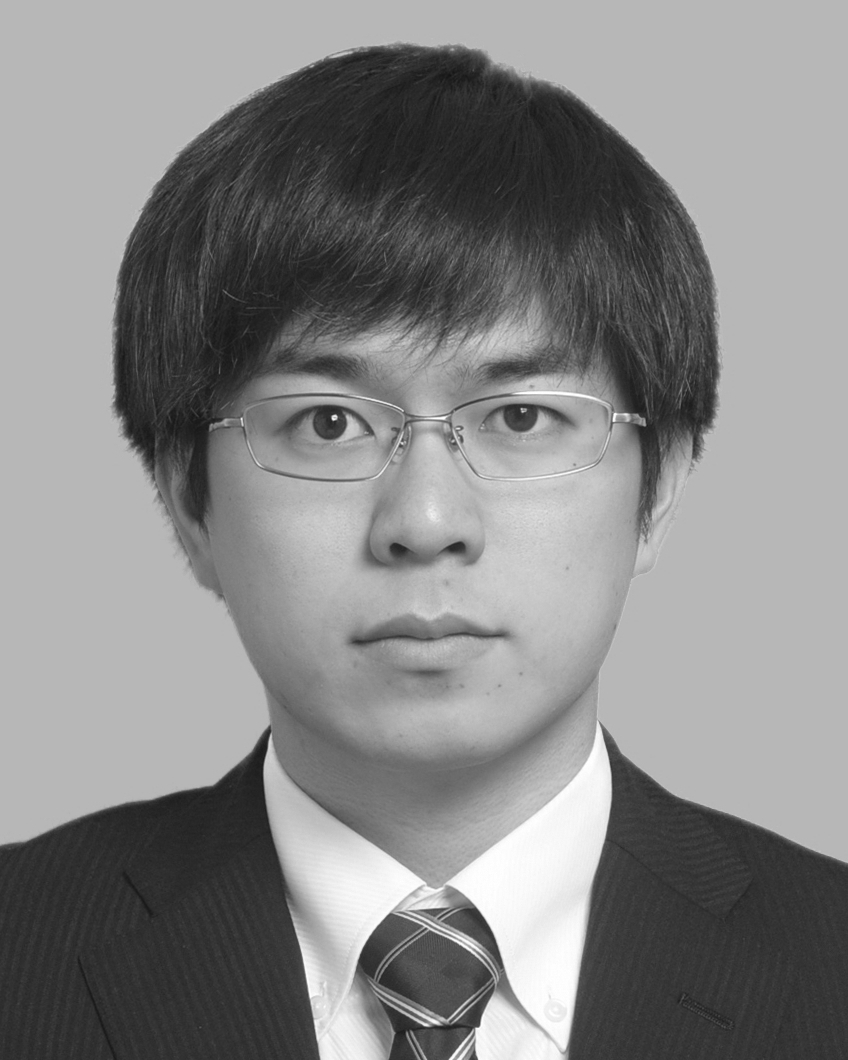}}]{Ryoya Yamasaki}
received the B.E.\ and M.Inf.\ degrees from Kyoto University, Kyoto, Japan, in 2018 and 2020, respectively.
He is currently working toward the D.Inf.\ degree of Graduate School of Informatics, Kyoto University, Kyoto, Japan.
His research interests are in areas of statistics and machine learning.
\end{IEEEbiography}
\vspace*{-2\baselineskip}
\begin{IEEEbiography}[{\includegraphics[width=1in,height=1.25in,clip,bb=0 0 800 1000]{./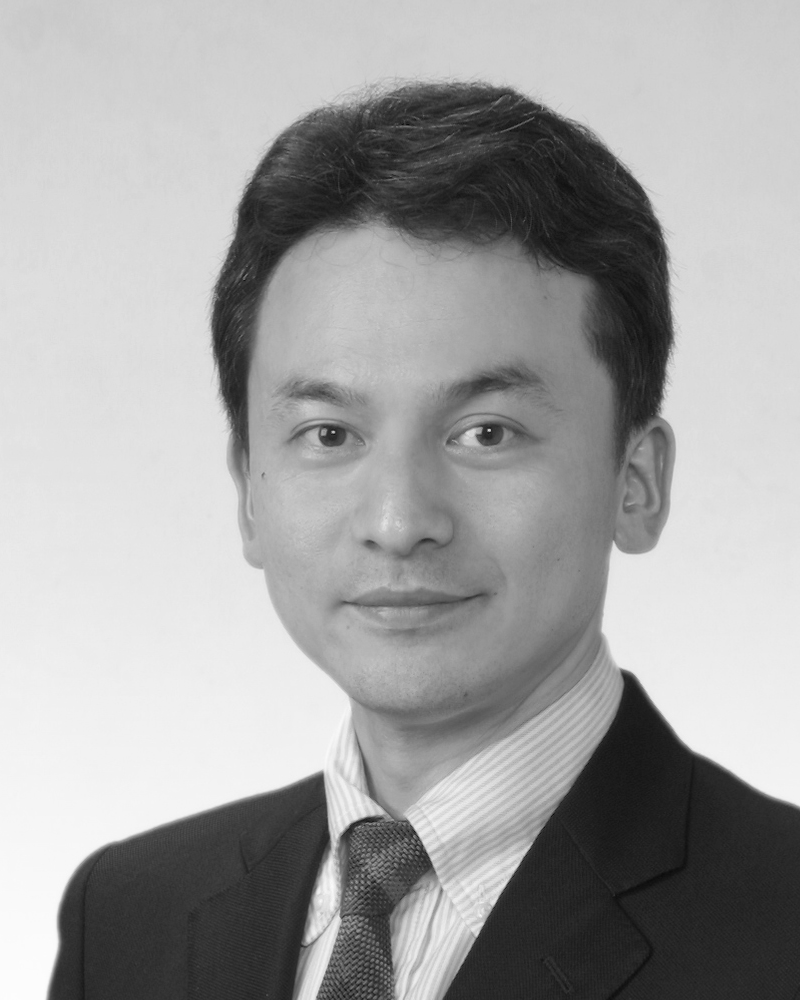}}]{Toshiyuki Tanaka} 
received the B.E., M.E., and D.E.\ degrees from the University of Tokyo, Tokyo, Japan, in 1988, 1990, and 1993, respectively.
He is currently a professor of Graduate School of Informatics, Kyoto University, Kyoto, Japan.
His research interests are in areas of information, coding, and communications theory, and statistical learning.
\end{IEEEbiography}
\end{document}